\newcommand\reallywidehat[1]{%
\savestack{\tmpbox}{\stretchto{%
  \scaleto{%
    \scalerel*[\widthof{\ensuremath{#1}}]{\kern-.6pt\bigwedge\kern-.6pt}%
    {\rule[-\textheight/2]{1ex}{\textheight}}%WIDTH-LIMITED BIG WEDGE
  }{\textheight}% 
}{0.5ex}}%
\stackon[1pt]{#1}{\tmpbox}%
}
\newtheorem{question}{Question}
\newtheorem{definition}{Definition}
\newtheorem{thm}{Theorem}
\newtheorem{lemma}[thm]{Lemma}
\newtheorem{proposition}[thm]{Proposition}
\newtheorem{corollary}[thm]{Corollary}
\newcommand{\UCBADV}{{\sc UCB-Advantage}\xspace}
\def\udl#1{\underline{#1} }
\title{Almost Optimal Model-Free Reinforcement Learning via Reference-Advantage Decomposition}
\author{%
	Zihan Zhang \\
	Department of Automation\\
	Tsinghua University\\
	\texttt{zihan-zh17@mails.tsinghua.edu.cn} \\
	\AND
	Yuan Zhou \\
	Department of ISE\\
	University of Illinois at Urbana-Champaign \\
	\texttt{yuanz@illinois.edu} \\
	\And
	Xiangyang Ji \\
	Department of Automation\\
	Tsinghua University \\
	\texttt{xyji@tsinghua.edu.cn} \\
	% \And
	% Coauthor \\
	% Affiliation \\
	% Address \\
	% \texttt{email} \\
}
\begin{document}
	\maketitle
\begin{abstract}

We study the reinforcement learning problem in the setting of finite-horizon episodic Markov Decision Processes (MDPs) with $S$ states, $A$ actions, and episode length $H$. We propose a model-free algorithm \UCBADV and prove that it achieves $\tilde{O}(\sqrt{H^2SAT})$ regret where $T = KH$ and $K$ is the number of episodes to play. Our regret bound improves upon the results of \citep{jin2018q} and matches the best known model-based algorithms as well as the information theoretic lower bound up to logarithmic factors. We also show that \UCBADV achieves low local switching cost and applies to concurrent reinforcement learning, improving upon the recent results of \citep{bai2019provably}.

\end{abstract}

\allowdisplaybreaks

\section{Introduction}
Reinforcement learning (RL) \citep{Burnetas1997Optimal} studies the problem where an agent aims to maximize its accumulative rewards through sequential decision making in an unknown environment modeled by Markov Decision Processes (MDPs). At each time step, the agent observes the current state $s$ and  interacts with the environment by taking an action $a$ and transits to next state $s'$ following the underlying transition model.

There are mainly two types of algorithms to approach reinforcement learning: \emph{model-based} and \emph{model-free} learning.  Model-based algorithms learn a model from the past experience and make decision based on this model while  model-free algorithms only maintain a group of value functions and take the induced optimal actions. 
Because of these differences, model-free algorithms are usually more space- and time-efficient compared to model-based algorithms. Moreover, because of their simplicity and flexibility, model-free algorithms are popular in a wide range of practical tasks (e.g., DQN \citep{mnih2015human}, A3C \citep{mnih2016asynchronous}, TRPO \citep{schulman2015trust}, and PPO \citep{schulman2017proximal}). On the other hand, however, it is believed that model-based algorithms may be able to take the advantage of the learned model and achieve better learning performance in terms of regret or sample complexity, which has been empirically evidenced by \cite{deisenroth2011pilco} and \cite{schulman2015trust}.
Much experimental research has been done for both types of the algorithms, and given that there has been a long debate on their pros and cons that dates back to \citep{deisenroth2011pilco}, a natural and intriguing theoretical question to study about reinforcement learning algorithms is that --
\begin{question}\label{question:main}
Is it possible that model-free algorithms achieve as competitive learning efficiency as  model-based algorithms, while still maintaining low time and space complexities?
\end{question}

Towards answering this question, the recent work by \cite{jin2018q} formally defines that an RL algorithm is model-free if its space complexity is always sublinear relative to the space required to store the MDP parameters, and then proposes a model-free algorithm (which is a variant of the $Q$-learning algorithm \citep{watkins1989learning}) that achieves the first $\sqrt{T}$-type regret bound for finite-horizon episodic MDPs in the tabular setting (i.e., discrete state spaces). However, there is still a gap of factor $\sqrt{H}$ between the regret of their algorithm and the best model-based algorithms. In this work, we close this gap by proposing a novel model-free algorithm, whose regret matches the optimal model-based algorithms, as well as the information theoretic lower bound. The results suggest that model-free algorithms can learn as efficiently as model-based ones, giving an affirmative answer to Question~\ref{question:main} in the setting of episodic tabular MDPs.

\subsection{Our Results}
\vspace{-1ex}
\paragraph{Main Theorem.} We propose a novel variant of the $Q$-learning algorithm, \UCBADV. We then prove the following main theorem of the paper.

\begin{thm}\label{thm1}  For $T$ greater than some polynomial of $S$, $A$, and $H$, and for any $p \in (0, 1)$, with probability $(1-p)$, the regret of \UCBADV is bounded by
	$\mathrm{Regret}(T)\leq \tilde{O}(\sqrt{H^2 SAT})$,
where poly-logarithmic factors of $T$ and $1/p$ are hidden in the $\tilde{O}(\cdot)$ notation.
\end{thm}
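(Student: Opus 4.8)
The plan is to run the standard optimism-in-the-face-of-uncertainty argument in the model-free regime, but with the key twist that the noisy part of every $Q$-update is split into a slowly-varying \emph{reference} component and a small \emph{advantage} component, so that variance-based (rather than range-based) bonuses can be used on the dominant term. The motivation for this decomposition is the following quantitative obstruction in plain $Q$-learning: with learning rate $\alpha_t = \frac{H+1}{H+t}$, the aggregate squared weight satisfies $\sum_i (\alpha_t^i)^2 = \Theta(H/t)$, so estimating $\mathbb{E}[V^*_{h+1}]$ by the running weighted average incurs variance a factor $H$ worse than the plain uniform average of all $t$ samples. I would therefore compute the reference term $\mu^{\mathrm{ref}}_{h+1}$ as a \emph{uniform} empirical mean over \emph{all} visits (restoring the $O(1/n)$ rate), and let only the advantage $V_{h+1}-V^{\mathrm{ref}}_{h+1}$ be tracked by the lossy weighted average; this is exactly where the saved $\sqrt{H}$ comes from.

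First I would establish the optimism invariant: with probability $1-p$, the running estimates satisfy $Q_h^k(s,a)\ge Q_h^*(s,a)$ for all $(s,a,h,k)$. This is proved by downward induction on $h$ together with a concentration bound on the accumulated estimation error. The bonus $b_h^k$ must be engineered to dominate two deviations at once: the martingale fluctuation of the advantage term under the weights $\alpha_t^i$, and the fluctuation of $\mu^{\mathrm{ref}}$ around $\mathbb{E}[V^{\mathrm{ref}}]$ under the uniform average. Crucially I would invoke Freedman's inequality rather than Azuma--Hoeffding, so that the reference bonus scales with the empirical standard deviation $\sqrt{\mathbb{V}(V^{\mathrm{ref}})/n}$ instead of the range $H/\sqrt{n}$.

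Next I would set up the regret recursion. Writing $\delta_h^k=(V_h^k-V_h^{\pi_k})(s_h^k)$ and combining optimism with the Bellman equations, the per-episode error $\delta_1^k$ unrolls across the horizon into a sum of accumulated bonuses, a martingale-difference sequence, and lower-order terms; summing over $k$ bounds $\mathrm{Regret}(T)$ by $\sum_{h,k} b_h^k$ plus a $\tilde O(\sqrt{T})$ martingale contribution plus lower-order terms.

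The crux---and the step I expect to be the main obstacle---is bounding the cumulative bonus $\sum_{h,k} b_h^k$. For the reference part I would apply Cauchy--Schwarz to split it into $\sqrt{\sum_{h,k}\mathbb{V}_h^k}\cdot\sqrt{\sum_{h,k} 1/n_h^k}$, bound the second factor by $\tilde O(\sqrt{SAH})$, and invoke the law of total variance---namely $\sum_{h=1}^{H}\mathbb{V}_h[V^*_{h+1}]=O(H^2)$ per episode because the return lies in $[0,H]$---to bound the first factor by $\tilde O(\sqrt{KH^2})$; multiplying yields the claimed $\tilde O(\sqrt{H^2SAT})$. The delicate part is the advantage bonus: I must show that once each $(s,a,h)$ has been visited enough times, $V^{\mathrm{ref}}$ locks onto a value within $O(1)$ of $V^*$, so that $\|V-V^{\mathrm{ref}}\|_\infty$ is small and the advantage bonus contributes only lower-order terms despite its lossy $\Theta(H/t)$ weighting. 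Carefully accounting for the ``burn-in'' visits before every triple's reference converges, and verifying that this phase contributes only an additive $\mathrm{poly}(S,A,H)$ cost---which is absorbed once $T$ exceeds the stated polynomial threshold---is the bookkeeping on which the entire $\sqrt{H}$ improvement rests.
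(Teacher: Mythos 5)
Your proposal follows, in outline, the same route as the paper: a reference value estimated by a uniform mean over \emph{all} visits, an advantage component tracked on the lossy $1/H$ fraction of the data, optimism maintained via Freedman-type variance-sensitive bonuses (Proposition~\ref{pro1}), the cumulative reference bonus controlled by Cauchy--Schwarz together with the total-variance bound $\sum_{h}\mathbb{V}(P_{s_h^k,a_h^k,h},V^*_{h+1})=O(H^2)$ per episode, and a burn-in argument showing the reference locks on after polynomially many visits (Lemma~\ref{lemma1} bounds the number of episodes with $V_h^k(s_h^k)-V_h^*(s_h^k)\ge\epsilon$ by $O(SAH^5\iota/\epsilon^2)$). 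The one mechanical difference is that you retain the $\alpha_t$-weighted averaging of \citep{jin2018q} for the advantage term, whereas the paper replaces it with stage-based uniform averages (stages growing at rate $1+1/H$, updates at stage ends); these are functionally equivalent for the regret, though the stage structure is what makes the $(1+1/H)$-recursion clean and yields the switching-cost and concurrent-RL corollaries.

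There is, however, one concrete quantitative gap at precisely the step you call the crux: you only require the reference to be within $O(1)$ of $V^*$, and that is not shown to suffice. The advantage is estimated from only $\check{n}_h^k \approx n_h^k/H$ samples, so when $\|V^{\mathrm{ref}}-V^*\|_\infty\le\beta$ its Bernstein bonus is of order $\sqrt{\beta^2\iota/\check{n}_h^k}$, and summing over $(k,h)$ via the stage-counting inequality (Lemma~\ref{pro0}) this contributes $O(\sqrt{SAH^3\beta^2T\iota})$ in the paper's accounting. With $\beta=\Theta(1)$ this is $\Theta(\sqrt{SAH^3T\iota})$ --- exactly the rate of \citep{jin2018q}, so the $\sqrt{H}$ improvement you are trying to buy evaporates. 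The paper therefore insists on $\beta\le\sqrt{1/H}$ (it sets $\beta=1/\sqrt{H}$) and pays the correspondingly larger burn-in threshold $N_0 = c_4SAH^5\iota/\beta^2 = \Theta(SAH^6\iota)$ per state, which is still absorbed into the lower-order terms. Two smaller cautions: optimism should be established by induction over episodes rather than purely downward in $h$, since the update of $Q_h$ uses values $V_{h+1}$ frozen at \emph{earlier} episodes; and in your ``martingale contribution,'' the term $\frac{1}{\check{n}_h^k}\sum_{i}(P_{s_h^k,a_h^k,h}-\mathbf{1}_{s_{h+1}^{\check{l}_i}})(V_{h+1}^{\check{l}_i}-V^*_{h+1})$, once re-summed over the episode supplying each sample, carries weights $\theta_{h+1}^k$ that depend on \emph{future} visit counts, so Azuma does not apply directly --- the paper substitutes deterministic surrogate weights $\tilde{\theta}_{h+1}^k$ and separately bounds the correction from the last two stages of each triple.
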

Compared to the $\tilde{O}(\sqrt{H^3SA T})$ regret bound of the UCB-Bernstein algorithm in \citep{jin2018q}, \UCBADV saves a factor of $\sqrt{H}$, and matches the information theoretic lower bound of $\Omega(\sqrt{H^2 SAT})$ in \citep{jin2018q} up to logarithmic factors. The regret of \UCBADV is at the same order of the best model-based algorithms such as UCBVI \citep{azar2017minimax} and vUCQ \citep{kakade2018variance}.\footnote{Both \cite{azar2017minimax} and \cite{kakade2018variance} assume equal transition matrices $P_1 = P_2 = \dots = P_H$. In this work, we adopt the same setting as in, e.g., \citep{jin2018q} and \citep{bai2019provably}, where $P_1, P_2, \dots, P_H$ can be different. This adds a factor of $\sqrt{H}$ to the regret analysis in \citep{azar2017minimax} and \citep{kakade2018variance}.} However, the time complexity before time step $T$ is $O(T)$ and the space complexity is $O(SAH)$ for \UCBADV. In contrast, both UCBVI and vUCQ uses $\tilde{O}(TS^2A)$ time and $O(S^2AH)$ space.

One of the main technical ingredients of  \UCBADV is to incorporate a novel update rule for the $Q$-function based on the proposed \emph{reference-advantage decomposition}. More specifically, we propose to view the optimal value function $V^*$ as $V^* = V^{\mathrm{ref}} + (V^* - V^{\mathrm{ref}})$, where $V^{\mathrm{ref}}$, the \emph{reference} component, is a comparably easier learned approximate of $V^*$ and  the other component $(V^* - V^{\mathrm{ref}})$ is referred to as the \emph{advantage} part. Based on this decomposition, the new update rule learns the corresponding parts of the $Q$-function using carefully designed (and different) subsets of the collected data, so as to minimize the deviation, maximize the data utilization, and reduce the estimation variance.

Another highlight of  \UCBADV  is the use of the \emph{stage-based update framework} which enables an easy integration of the new update rule (as above) and the standard update rule. In such a framework, the visits to each state-action pair are partitioned into \emph{stages}, which are used to design the trigger and subsets of data for each update. 

\paragraph{Implications.} An extra benefit of the stage-based update framework is to ensure the low frequency of policy switches of \UCBADV, stated as follows.
\begin{thm}\label{thm:sw-cost}
The local switching cost of \UCBADV is bounded by $O(SAH^2\log T)$.
\end{thm}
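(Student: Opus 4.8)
The plan is to exploit the defining structural property of the stage-based update framework: within each stage the $Q$-estimates, and hence the induced greedy policy, are frozen, so they can change only at stage boundaries. Recall that the local switching cost counts, summed over all episodes $k$ and all state–step pairs $(s,h)$, the number of times the executed action changes, i.e. $\sum_{k}\sum_{(s,h)} \mathbf{1}\{\pi_h^{k+1}(s)\neq \pi_h^{k}(s)\}$. Since \UCBADV always plays the greedy policy $\pi_h^k(s)=\arg\max_a Q_h^k(s,a)$ with respect to its current estimates, the chosen action at $(s,h)$ can differ between episodes $k$ and $k+1$ only if some value $Q_h(s,a)$ was updated during episode $k$. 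Therefore the total local switching cost is bounded by the total number of $Q$-value updates performed by the algorithm, and it suffices to count those updates.

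First I would count the updates triggered for a single triple $(s,a,h)$. In the stage-based framework the visits to $(s,a,h)$ are partitioned into consecutive stages whose lengths grow geometrically — each stage is about a $(1+\tfrac{1}{H})$ factor longer than the previous one — and an update to $Q_h(s,a)$ is triggered only at the end of a completed stage. Hence the number of updates for this triple equals its number of completed stages. Because the stage lengths inflate by a factor $(1+\tfrac{1}{H})$, the cumulative number of visits after $m$ stages is $\Theta\!\big(H\,((1+\tfrac{1}{H})^m-1)\big)$; since any fixed triple is visited at most $K\le T$ times, solving for $m$ yields at most $O(H\log T)$ stages, and hence at most $O(H\log T)$ updates, per triple.

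Finally I would sum over all triples. There are exactly $SAH$ triples $(s,a,h)$, each contributing at most $O(H\log T)$ updates, so the total number of $Q$-value updates — and therefore the local switching cost — is bounded by $SAH\cdot O(H\log T)=O(SAH^2\log T)$, as claimed. Equivalently, for each fixed $(s,h)$ the greedy action can switch only when one of its $A$ actions reaches a stage end, giving $O(AH\log T)$ switches per $(s,h)$ and $O(SAH^2\log T)$ in total.

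The main obstacle I anticipate is making the geometric-growth bookkeeping precise: verifying that the specific stage schedule used by \UCBADV indeed produces stage lengths inflating by the $(1+\tfrac{1}{H})$ factor, and confirming that the total-visit constraint $\le K$ translates into exactly $O(H\log T)$ stages rather than some larger polylogarithmic expression. This requires carefully tracking the recursively defined stage lengths; once that count is pinned down, the remaining argument is a routine summation over the $SAH$ triples together with the observation that each update causes at most one policy switch.
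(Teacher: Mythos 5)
Your proposal is correct and follows essentially the same route as the paper's proof: charge each local switch at $(s,h)$ to a $Q$-update, bound the number of stages (hence updates) per triple $(s,a,h)$ by $O(H\log T)$ via the geometric growth of stage lengths, and sum over the $SAH$ triples. The bookkeeping detail you flagged is resolved in the paper by observing that, despite the floor in $e_{i+1}=\lfloor(1+\frac{1}{H})e_i\rfloor$, one still has $e_{i+1}\geq(1+\frac{1}{2H})e_i$ because $e_i\geq H$, which yields the stated $O(H\log T)$ stage count per triple.
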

While one may refer to Appendix~\ref{App.C} for the details of the theorem, the notion of local switching cost for RL is recently introduced and studied by \cite{bai2019provably}, where the authors integrate a lazy update scheme with the UCB-Bernstein algorithm \citep{jin2018q} and achieve $\tilde{O}(\sqrt{H^3 SAT})$ regret and $O(SAH^3\log T)$ local switching cost. In contrast, our result improves in both metrics of regret and switching cost.

Our results also apply to concurrent RL, a research direction closely related to batched learning and learning with low switching costs, stated as follows.

\begin{corollary} \label{cor:concurrent-RL}
Given $M$ parallel machines,  the concurrent and pure exploration version of \UCBADV can compute an $\epsilon$-optimal policy in 
$    \tilde{O}(H^2SA+ H^3SA/(\epsilon^2 M))$
concurrent episodes.
\end{corollary}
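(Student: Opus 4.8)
The plan is to prove Corollary~\ref{cor:concurrent-RL} by reducing the concurrent, pure-exploration setting to the regret guarantee of Theorem~\ref{thm1} via a standard online-to-batch / regret-to-PAC conversion, combined with the low-switching-cost property from Theorem~\ref{thm:sw-cost}. The key observation is that with $M$ parallel machines sharing a common history between policy switches, the stage-based update framework allows all machines to run the \emph{same} policy within a stage, collect data in parallel, and only communicate (and update the value estimates) at the small number of stage boundaries. Because the local switching cost is bounded by $O(SAH^2\log T)$ by Theorem~\ref{thm:sw-cost}, the number of distinct policies — and hence the number of synchronization rounds — is small, so the $M$ machines can run almost entirely in parallel between updates.

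First I would set up the pure-exploration variant \UCBADVP: instead of outputting actions to minimize regret, the algorithm runs \UCBADV and, at the end, outputs a policy chosen (e.g.\ uniformly at random) from the sequence of policies played. The standard argument is that if the cumulative regret over $K$ episodes is $R(K)$, then the average per-episode suboptimality is $R(K)/K$, so the randomly selected policy is $\epsilon$-optimal in expectation once $R(K)/K \le \epsilon$. Plugging in $R(K) = \tilde{O}(\sqrt{H^2 SAT}) = \tilde{O}(\sqrt{H^3 SAK})$ (using $T = KH$) and solving $\tilde{O}(\sqrt{H^3 SAK})/K \le \epsilon$ gives the sequential episode count $K = \tilde{O}(H^3 SA/\epsilon^2)$, matching the second term of the stated bound up to the parallelization factor.

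Next I would explain the parallelization. Within a single stage, the policy is fixed, so all $M$ machines execute the identical policy and their episodes are statistically interchangeable; the $M$ episodes collected in one concurrent round count as $M$ sequential episodes toward the data used by the update rule. Thus the number of \emph{concurrent} rounds needed to accumulate $K$ sequential episodes is roughly $K/M$, yielding the $\tilde{O}(H^3 SA/(\epsilon^2 M))$ term. The additive $\tilde{O}(H^2 SA)$ term comes from the number of stages (equivalently, synchronization rounds forced by policy switches): even with $M$ machines, each stage boundary requires at least one concurrent round, and the total number of stage boundaries across all state-action pairs is $O(SAH \log K) = \tilde{O}(SAH)$ per horizon level, contributing the $\tilde{O}(H^2 SA)$ overhead that cannot be parallelized away because a switch must happen before the next stage's data is collected.

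The main obstacle I expect is making the regret-to-PAC conversion rigorous in the \emph{high-probability} rather than expectation sense, and ensuring the parallel data collection does not break the martingale/concentration arguments underlying Theorem~\ref{thm1}. Specifically, the concentration bounds in the regret analysis rely on the sequential, adapted structure of the data stream; when $M$ machines collect data simultaneously under a fixed within-stage policy, I must verify that the relevant sums of visit-indexed noise terms remain a valid martingale difference sequence (or can be re-indexed as one), so that the Azuma/Freedman-type bounds still apply with the same rates. Once this ``interchangeability within a stage'' is justified — which the stage-based framework is designed to guarantee, since updates only use fully-completed stages — the two terms in the bound follow by balancing the parallelizable sample-collection cost against the unavoidable per-stage synchronization cost.
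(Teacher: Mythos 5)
Your overall skeleton --- a regret-to-PAC conversion of Theorem~\ref{thm1} via random policy selection over $K = \tilde{O}(H^3SA/\epsilon^2)$ episodes, combined with the switching-cost bound of Theorem~\ref{thm:sw-cost} to count the synchronization overhead --- is exactly the paper's proof, and your accounting of both terms in the bound is right. The one place where you diverge, and where your argument has a genuine gap, is the handling of rounds in which an update triggers partway through the $M$ parallel episodes. Your claim that ``within a single stage the policy is fixed,'' so that the $M$ episodes of a round are statistically interchangeable, is not accurate as stated: stages are defined per $(s,a,h)$ triple, an update of $Q_h(s,a)$ (hence of the greedy policy and of the value function $V$ entering the accumulators \eqref{eq:alg-acc-1}--\eqref{eq:alg-acc-3}, and similarly the reference update triggered by $\sum_a N_h(s_h,a) = N_0$) can fire at the end of any episode, and these triggers do not align with concurrent round boundaries. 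If you keep all $M$ episodes of such a round, the episodes after the trigger contribute accumulator terms $V_{h+1}(s_{h+1})$ computed with a stale $V$, so the resulting algorithm is a delayed-update variant of \UCBADV, not the algorithm analyzed in Theorem~\ref{thm1}; you would then have to re-prove the optimism of Proposition~\ref{pro1} and the concentration arguments for this variant, which your proposal flags as an obstacle but does not carry out.

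The paper sidesteps this entirely with a cruder device (Algorithm~\ref{alg2}): the concurrent algorithm simulates the single-agent \UCBADV by feeding it the round's trajectories one at a time, and whenever an update is triggered it \emph{discards} the remaining trajectories of that round. The data stream is then literally a sequential execution of \UCBADV, so Theorem~\ref{thm1} applies verbatim with no new martingale analysis. Each discard wastes at most one concurrent round, and the number of such rounds is bounded by the number of updates, $O(SAH^2\log T)$ by (the proof of) Theorem~\ref{thm:sw-cost} --- precisely the additive $\tilde{O}(H^2SA)$ term you already identified from stage counting. To close the gap you should either adopt this discard trick or actually establish the re-indexing/interchangeability argument you sketch, which is substantially more work for no improvement in the bound.
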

In contrast, the state-of-the-art result \citep{bai2019provably} uses $\tilde{O}(H^3SA+ H^4SA/(\epsilon^2 M))$ concurrent episodes.  When $M = 1$, Corollary~\ref{cor:concurrent-RL} implies that the single-threaded exploration version of \UCBADV uses $\tilde{O}(H^3SA/\epsilon^2)$ episodes to learn an $\epsilon$-optimal policy. In Appendix~\ref{App.C}, we provide a simple $\Omega(H^3SA/\epsilon^2)$-episode lower bound for the sample complexity, showing the optimality up to logarithmic factors.

\vspace{-1ex}
\subsection{Additional Related Works}\label{sec:related-works}

\paragraph{Regret Analysis for RL.} Since our results focus on the tabular case, we will not mention most of the results on RL for continuous state spaces. For the tabular setting, there are plenty of recent works on model-based algorithms under various settings (e.g., \citep{jaksch2010near,agrawal2017optimistic,azar2017minimax,ouyang2017learning,fruit2019improved,simchowitz2019non,zanette2019tighter,zhang2019regret}). The readers may refer to \citep{jin2018q} for more detailed review and comparison. In contrast, fewer model-free algorithms are proposed. Besides \citep{jin2018q}, an earlier work \citep{strehl2006pac} implies that $T^{4/5}$-type regret can be achieved by a model-free algorithm.  %Recent work \citep{zanette2020learning} proposed a model-free algorithm for learning linear action-value functions. When restricted to the tabular case, their regret upper bound becomes $\tilde{O}(\sqrt{H^3 S^2 A^2 T})$.

\paragraph{Variance Reduction and Advantage Functions.}  Variance reduction techniques via reference-advantage decomposition is used for faster optimization algorithms \citep{johnson2013accelerating}. The technique is also recently applied to pure exploration in learning discounted MDPs \citep{sidford2018variance,sidford2018near}. However, since \cite{sidford2018variance,sidford2018near} assume the access to a simulator and \UCBADV is completely online, our update rule and data partition design is very different. Our work is also the first for regret analysis in RL.

The use of advantage functions have also witnessed much success for RL in practice. For example, in A3C \citep{mnih2016asynchronous}, the advantage function is defined to be $\mathrm{Adv}(s, a) := Q^{\pi}(s, a) - V^{\pi}(s)$, and helps to reduce the estimation variance of the policy gradient. Similar definitions can also be found in other works such as \citep{sutton2000policy}, Generalized Advantage Estimation \citep{schulman2015high} and Dueling DQN \citep{wang2015dueling}. In comparison, our advantage function is defined on the states instead of the state-action pairs.

\section{Preliminaries}
We study the setting of episodic MDPs where an MDP is described by $(\mathcal{S},\mathcal{A},H, P,r)$. Here, $\mathcal{S}\times\mathcal{A}$ is the state-action space, $H$ is the length of each episode, $P$ is the  transition probability matrix  and $r$ is the deterministic reward function\footnote{Our results generalize to stochastic reward functions easily.}. Without loss of generality, we assume that $r_{h}(s,a)\in [0,1]$ for all $s,a,h$. During each episode, the agent observes the initial state $s_{1}$ which may be chosen by an \emph{oblivious adversary} (i.e., the adversary may have the access to the algorithm description used by the agent but does not observe the execution trajectories of the agent\footnote{Another adversary model is the the stronger \emph{adaptive adversary} who may observe the execution trajectories and select the initial states based on the observation. While it is possible that a more careful analysis of our algorithm also works for the adaptive adversary, we do not make any effort verifying this statement. We also note that previous works such as \citep{jin2018q,bai2019provably} do not explicitly define their adversary models and it is not clear whether their analysis works for the adaptive adversary.}). 

During each step within the episode, the agent takes an action $a_{h}$ and transits to $s_{h+1}$ according to $P_{h}(\cdot|s_{h},a_{h})$. The agent keeps running for $H$ steps and then the episode terminates. 

A policy\footnote{In this work, we mainly consider deterministic policies since the optimal value function can be achieved by a deterministic policy.} $\pi$ is a mapping from $\mathcal{S}\times[H]$ to $\mathcal{A}$. Given a policy $\pi$, we define its value function and $Q$-function as
\vspace{-2ex}
\begin{equation*}
\begin{aligned}
&V^{\pi}_{h}(s) = \mathbb{E} \left[  \sum_{h'=h}^{H} r_{h'}(s_{h'},\pi_{h'}(s_{h'})) \Big| s_{h}=s ,s_{h'+1}\sim P_{h'}(\cdot|s_{h'},\pi_{h'}(s_{h'})) \right], \\
& Q^{\pi}_{h}(s,a) = r_{h}(s,a)+ P_{h}(\cdot|s,a)^{\top}V^{\pi}_{h+1} = r_h(s, a) + P_{s,a,h} V^{\pi}_{h+1}.
\end{aligned}
\end{equation*}
As boundary conditions, we define $V^{\pi}_{H+1}(s)= Q^{\pi}_{H+1}(s,a)=0$ for any $\pi, s,a$. Also note that, for simplicity, throughout the paper, we use $xy$ to denote $x^T y$ for two vectors of the same dimension and use $P_{s,a,h}$ to denote $P_{h}(\cdot|s,a)$.

The optimal value function is then given by $V^*_{h}(s)=\sup_{\pi}V^{\pi}_{h}(s)$ and $Q^*_{h}(s,a)=r_{h}(s,a)+ P_{s,a,h}V^{*}_{h+1}$ for any $(s,a)\in \mathcal{S}\times\mathcal{A}, h\in [H]$.  

The learning problem consists of $K$ episodes, i.e, $T=KH$ steps. Let $s_1^k$ be the state given to the agent at the beginning of the $k$-th episode, and let $\pi_{k}$ be the policy adopted by the agent during the $k$-th episode. To goal is to minimize the total regret at time step $T$ which is defined as follows,
\begin{equation}
\mathrm{Regret}(T): = \sum_{k=1}^{K}\big(V^{*}_{1}(s^k_{1})-V^{\pi_{k}}_{1}(s^k_{1})\big).
\end{equation}

\section{The \UCBADV Algorithm}

In this section, we introduce the \UCBADV algorithm. We start by reviewing the $Q$-learning algorithms proposed in \citep{jin2018q}. Recall that \cite{jin2018q} selects the learning rate $\alpha_{t} = \frac{H+1}{H+t}$, and sets the weights $\alpha_{t}^i = \alpha_{i}\Pi_{j=i+1}^{t}(1-\alpha_{j})$ for the $i$-th samples out of the a total of $t$ data points, for any state-action pair. Note that $\alpha_t^i$ is roughly $\Theta(H/t)$ for the indices $i \in  [\frac{H-1}{H} \cdot t, t]$ and 
vanishes quickly when $i \ll \frac{H-1}{H} \cdot t$. As a result, their update process is 
 roughly equivalent to using the latest $\frac{1}{H}$ fraction of samples to update the value function for any state-action pair. Next, we introduce our stage-based update framework, which shares much similarity with the process discussed above. However, our framework enjoys simpler analysis and enables easier integration of the two update rules which will be explained afterwards.
 
\paragraph{Stages and Stage-Based Update Framework.} For any triple $(s, a, h)$, we divide the samples received for the triple into consecutive \emph{stages}. The length of each stage roughly increases exponentially with the growth rate $(1 + 1/H)$. More specifically, we define  $e_{1} = H$ and $e_{i+1} = \left \lfloor (1+\frac{1}{H})e_{i} \right \rfloor $ for all $i\geq 1$, standing for the length of the stages. We also let $\mathcal{L} :=\{\sum_{i=1}^{j}e_{i}| j = 1, 2, 3, \dots \}$ be the set of indices marking the ends of the stages.

Now we introduce the \emph{stage-based update framework}. For any $(s,a,h)$ triple, we update $Q_{h}(s,a)$ when the total visit number of $(s,a,h)$ the end of the current stage (in other word, the total visit number occurs in $\mathcal L$). Only the samples in the latest stage will be used in this update. Using the language of \citep{jin2018q}, for any total visit number $t$ in the $(j+1)$-th stage,
our update framework is equivalent to setting the weight distribution to be $\alpha_{t}^i = e_j^{-1} \cdot  \mathbb{I}\left[ i \text{ in the } j \text{-th stage}   \right]$.

We note that the definition of stages is with respect to the triple $(s, a, h)$. For any fixed pair of $k$ and $h$, let $(s_h^k, a_h^k)$ be the state-action pair at the $h$-th step during the $k$-th episode of the algorithm. We say that $(k,h)$ falls in the $j$-th stage of $(s,a,h)$ if and only if $(s,a) = (s^k_h,a^k_h)$ and the total visit number of $(s_h^k,a_h^k)$ after the $k$-th episode is in $(\sum_{i=1}^{j-1}e_i, \sum_{i=1}^j e_{i}]$.  

One benefit of our stage-based update framework is that it helps to reduce the number of the updates to the $Q$-function, leading to less local switching costs, which is recently also studied by \cite{bai2019provably}, where the authors propose to apply a lazy update scheme to the algorithms in \cite{jin2018q}. The lazy update scheme uses an exponential triggering sequence with a growth rate of $(1+1/(2H(H+1)))$, which is more conservative than the growth rate of stage lengths in our work. As a result, our algorithm saves an $H$ factor in the switching cost compared to \citep{bai2019provably}.

More importantly, our stage-based update framework, compared to the algorithms in \citep{jin2018q}, (in our opinion) simplifies the analysis, makes it easier to integrate the standard update rule and the one based on the reference-advantage decomposition. Both update rules are used in our algorithm, and we now discuss them separately.

\paragraph{The Standard Update Rule and its Limitation.} The algorithms in \citep{jin2018q} uses the following standard update rule,
  \begin{equation}\label{eqex0}
Q_{h}(s,a)  \leftarrow  \reallywidehat{P_{s,a,h} V_{h+1}} + r_{h}(s,a) + \overline{b},
 \end{equation}
 where $\overline{b}$ is the exploration bonus, and $\reallywidehat{P_{s,a,h} V_{h+1}}$ is the empirical estimate of $P_{s,a,h} V_{h+1}$. We also adopt this update rule in our algorithm. However, a crucial restriction is that the earlier samples collected, the more deviation one would expect between the $V_{h+1}$ learned at that moment and the true value. To ensure that these deviations do not ruin the whole estimate, we have to require that $\reallywidehat{P_{s,a,h} V_{h+1}}$ only uses the samples acquired from the last stage. This means that we can only estimate the $P_{s, a, h} V_{h+1}$ term using about $1/H$ fraction of the obtained data, and we note that this is also the reason of the extra $\sqrt{H}$ occurred in the UCB-Bernstein algorithm by \cite{jin2018q}.

\paragraph{Reference-Advantage Decomposition and the Advantage-Based Update Rule.} We now introduce the reference-advantage decomposition, which is the key to reducing the extra $\sqrt{H}$ factor. At a high level, we aim at first learning a quite accurate estimation of the optimal value function $V^*$ and denote it by the \emph{reference value function} $V^{\mathrm{ref}}$. The accuracy is controlled by an error parameter $\beta$ which is quite small but independent of $T$ or $K$. In other words, we wish to have $V^*_{h}(s)\leq V^{\mathrm{ref}}_{h}(s)\leq V^*_{h}(s)+ \beta$ for all $s$ and $h$, and for the purpose of simple explanation, we set $\beta = 1 / H$ at this moment; in our algorithm, $\beta$ can be any value that is less than $\sqrt{1/H}$ while independent of $T$ or $K$. 

For starters, let us first assume that we have the access to the dreamed $V^{\mathrm{ref}}$ reference function as stated above. Now we write $V^* = V^{\mathrm{ref}} + (V^* -  V^{\mathrm{ref}})$, and refer to the second term as the \emph{advantage} compared to the reference values\footnote{Interestingly, one might argue that the term should rather be called ``disadvantage'' as it is always non-positive. We choose the name ``advantage'' to highlight the similarity between our algorithm and many empirical algorithms in literature. See Section~\ref{sec:related-works} for more discussion.}. Now the $Q$-function can be updated using the following advantage-based rule,
  \begin{equation}\label{eqex}
Q_{h}(s,a)  \leftarrow  \reallywidehat{P_{s,a,h} V^{\mathrm{ref}}_{h+1}} + \reallywidehat{P_{s,a,h}(V_{h+1}- V^{\mathrm{ref}}_{h+1} )} + r_{h}(s,a) + b,
 \end{equation} 
where $b$ is the exploration bonus, and both $\reallywidehat{P_{s,a,h} V^{\mathrm{ref}}_{h+1}}$ and $\reallywidehat{P_{s,a,h}(V_{h+1}- V^{\mathrm{ref}}_{h+1} )}$ are empirical estimates of $P_{s,a,h} V^{\mathrm{ref}}_{h+1}$ and $P_{s,a,h}(V_{h+1}- V^{\mathrm{ref}}_{h+1} )$ (respectively)  based on the observed samples. We still have to require that $\reallywidehat{P_{s,a,h}(V_{h+1}- V^{\mathrm{ref}}_{h+1} )}$ uses the samples only from the last stage so as to limit the deviation error due to $V_{h+1}$ in the earlier samples.

Fortunately, thanks to the reference-advantage decomposition, and since that $V$ is learned based on $V^{\mathrm{ref}}$ and approximates $V^*$ even better than $V^{\mathrm{ref}}$, we have that $\| V_{h+1}-V^{\mathrm{ref}}_{h+1}\|_{\infty}\leq \beta = 1/H$ holds for all samples, which suffices to offset the weakness of using only $1/H$ of the total data, and helps to learn an accurate estimation of the second term. On the other hand, for the first term in the Right-Hand-Side of \eqref{eqex}, since $V^{\mathrm{ref}}$ is fixed and never changes, we are able to use all the samples collected to conduct the estimation, without suffering any deviation. This means that the first term can also be estimated with high accuracy.

The discussion till now has assumed that the reference value vector $V^{\mathrm{ref}}$ is known. To remove this assumption, we note that $\beta$ is independent of $T$, therefore a natural hope is to learn $V^{\mathrm{ref}}$ using sample complexity also almost independent of $T$, incurring regret only in the lower order terms. However, since it is not always possible to learn the value function of every state (especially the ones almost not reachable), we need to integrate the learning for reference vector into the main algorithm, and much technical effort is made to enable the analysis for the integrated algorithm.

\paragraph{Description of the Algorithm.} \UCBADV is described in Algorithm~\ref{alg1}, where $c_1$, $c_2$, and $c_3$ are large enough positive universal constants so that concentration inequalities may be applied in the analysis. Besides the standard quantities such as $Q_h(s, a)$, $V_h(s)$, and the reference value function $V^{\mathrm{ref}}_h$, the algorithm keeps seven types of accumulators to facilitate the update to the $Q$- and value functions: accumulators $N_h(s, a)$ and $\check{N}_h(s, a)$ are used to keep the total visit number and the number of visits only counting the current stage to $(s, a, h)$, respectively. Three types of \emph{intra-stage} accumulators are used for the samples in the latest stage; they are reset at the beginning of each stage and updated at every time step as follows (note that short-hands are defined for succinct presentation of the $Q$-function update rule in \eqref{equpdate}):
\begin{align}
    &\check{\mu}:= \check{\mu}_{h}(s_{h},a_{h})\stackrel{+}{\leftarrow} V_{h+1}(s_{h+1})-V^{\mathrm{ref}}_{h+1}(s_{h+1});& 
	       &\check{\upsilon} :=\check{\upsilon}_{h}(s_{h},a_{h})\stackrel{+}{\leftarrow}V_{h+1}(s_{h+1}); \label{eq:alg-acc-1}\\
	       &\check{\sigma} := \check{\sigma}_{h}(s_{h},a_{h})\stackrel{+}{\leftarrow} ( V_{h+1}(s_{h+1})-V^{\mathrm{ref}}_{h+1}(s_{h+1}))^2 . \label{eq:alg-acc-2}
\end{align}
Finally, the following two types of \emph{global} accumulators are used for the samples in all stages,
\begin{align}
&\mu^{\mathrm{ref}} := \mu^{\mathrm{ref}}_{h}(s_{h},a_{h})\stackrel{+}{\leftarrow}V_{h+1}^{\mathrm{ref}}(s_{h+1}); & 
	       &\sigma^{\mathrm{ref}} := \sigma^{\mathrm{ref}}_{h}(s_{h},a_{h})\stackrel{+}{\leftarrow} (V_{h+1}^{\mathrm{ref}}(s_{h+1}))^2. \label{eq:alg-acc-3}
\end{align}
  All accumulators are initialized to $0$ at the beginning of the algorithm.

 The algorithm sets $\iota \leftarrow \log(\frac{2}{p})$ (where $p$ is the parameter for the failure probability) and $\beta \leftarrow \frac{1}{\sqrt{H}}$. We also set $N_{0}:=\frac{c_4 SAH^5\iota}{\beta^2}$ for a large enough universal constant $c_4 > 0$, denoting the number of visits needed for each state to learn a $\beta$-accurate reference value.

By the definition of the accumulators, the first two expressions in $\min\{\cdot\}$ in \eqref{equpdate} respectively correspond to update rules \eqref{eqex0} and \eqref{eqex}, where $b$ and $\bar{b}$ are the respective exploration bonuses. The bonuses are set in a way that both expressions can be shown to upper bound $Q^*$ in the desired event. The update \eqref{equpdate} also makes sure that the learned $Q$-function is non-increasing as the algorithm proceeds.

\begin{algorithm}[tb]
	\caption{\UCBADV}
%	\label{alg:example1}
	\begin{algorithmic}\label{alg1}
		\STATE{\textbf{Initialize:} set all accumulators to $0$; for all $(s,a,h)\in \mathcal{S}\times \mathcal{A}\times [H]$, set $V_{h}(s)\leftarrow H-h+1$; $Q_{h}(s,a)\leftarrow H-h+1$; $V^{\mathrm{ref}}_{h}(s,a) \leftarrow H$;
	   }
	   \FOR{episodes $k \leftarrow 1,2,\dots,K$}
	   \STATE{observe $s_{1}$;}
	   \FOR{$h \leftarrow 1,2,\dots,H$}
	   \STATE{Take action $a_{h}\leftarrow  \arg\max_{a}Q_{h}(s_{h},a)$, and observe $s_{h+1}$.}
	   \STATE{
	   Update the accumulators via $n:=N_{h}(s_{h},a_{h})\stackrel{+}{\leftarrow}1$, $\check{n} := \check{N}_{h}(s_{h},a_{h})\stackrel{+}{\leftarrow}1$, and \eqref{eq:alg-acc-1}, \eqref{eq:alg-acc-2}, \eqref{eq:alg-acc-3}.}
	   \IF{$n\in \mathcal{L} $ \COMMENT{{\it Reaching the end of the stage and update triggered}} }  
	   \STATE{\{{\it Set the exploration bonuses, update the $Q$-function and the value function}\}}
	   \vspace{-3ex}
	   \STATE{\begin{align}
	   \hspace{-1ex} & \resizebox{.84\hsize}{!}{$b  \leftarrow  c_{1}\sqrt{\frac{\sigma^{\mathrm{ref}}/n- (\mu^{\mathrm{ref}}/n)^2 }{n}\iota}   +c_{2} \sqrt{\frac{   \check{\sigma}/\check{n} -(\check{\mu}/\check{n})^2  }{\check{n}}\iota  }+c_{3}(\frac{H\iota}{n}+\frac{H\iota}{\check{n}}+\frac{H\iota^{\frac{3}{4}}}{n^{\frac{3}{4}}}+\frac{H\iota^{\frac{3}{4}}}{\check{n}^{\frac{3}{4}}}  );$} \\
	   \hspace{-1ex} & \resizebox{.145\hsize}{!}{$\overline{b} \leftarrow 2\sqrt{\frac{ H^2 }{\check{n}}  \iota};$} \\
	   \label{equpdate}
	   \hspace{-1ex} & \resizebox{.84\hsize}{!}{$\displaystyle{Q_{h}(s_{h},a_{h}) \leftarrow \min\{  r_{h}(s_{h},a_{h})+\frac{\check{\upsilon}}{\check{n}} +\overline{b} ,\, r_{h}(s_{h},a_{h}) + \frac{\mu^{\mathrm{ref}}}{n}+\frac{\check{\mu}}{\check{n}}+b , \, Q_{h}(s_{h},a_{h})\};}$} \\
	   \hspace{-1ex} &V_{h}(s_{h})\leftarrow \max_{a}Q_{h}(s_{h},a); \label{equupdateV}
	   \end{align}}
	   \vspace{-2ex}
	   \STATE{$\check{N}_{h}(s_{h},a_{h}), \check{\mu}_{h}(s_{h},a_{h}), \check{\upsilon}_{h}(s_{h},a_{h} ), 
	   \check{\sigma}_{h}(s_{h},a_{h}) \leftarrow 0$; \{{\it Reset intra-stage accumulators}\}}
	   \ENDIF
	   \STATE{ \textbf{if} $\sum_{a}N_{h}(s_{h},a)= N_{0}$ \textbf{then} $V^{\mathrm{ref}}_{h}(s_{h})\leftarrow V_{h}(s_{h})$; } \COMMENT{{\it Learn the reference value function}}
	   \ENDFOR

	   \ENDFOR
	\end{algorithmic}
\end{algorithm}

\section{The Analysis (Proof of Theorem 1)}

Let  $N^{k}_{h}(s,a)$, $\check{N}^{k}_{h}(s,a)$, $Q^k_{h}(s,a)$, $V^{k}_h(s)$ and $V^{\mathrm{ref},k}_{h}(s)$ respectively denote the values of $N_{h}(s,a)$, $\check{N}_{h}(s,a)$, $Q_h(s,a)$, $V_h(s)$ and $V^{\mathrm{ref}}_h(s)$ at the beginning  of $k$-th episode. In particular, $N^{K+1}_{h}(s,a)$ denotes the  number of visits of $(s,a,h)$ after all $K$ episodes are done.  
% Edited by Zihan 12 May. Correct the definition of b_h^k.
%at the $h$-th step of the $k$-th episode if the update is triggered at this timestep.

To facilitate the proof, we need a few more notations. For each $k$ and $h$, let $n_h^k$ be the total number of visits to $(s_h^k, a_h^k, h)$ prior to the current stage with respect to the same triple. Let $\check{n}_h^{k}$ be the number of visits to $(s_h^k, a_h^k, h)$ during the stage immediately before the current stage. %Let $m_{h}^k$ be the index of episode when the second last update of $(s^k_{h},a_{h}^k,h)$ happens and $0$ if there are less than two updates so far. 
We let $l_{h, i}^k$ denote the index of the $i$-th episode among the $n_h^k$ episodes defined above. Also let $\check{l}_{h, i}^{k}$ be the index of the $i$-th episode among the $\check{n}_h^{k}$ episodes defined above. When $h$ and $k$ are clear from the context, we omit the two letters and use $l_{i}$ and $\check{l}_{i}$ for short.
 We  use $\mu^{\mathrm{ref},k}_h$, $\check{\mu}_h^k$, $\check{\nu}_h^k$, $\sigma^{\mathrm{ref},k}_h$, $\check{\sigma^{k}_h}$, $b_{h}^k$ and $\overline{b}_h^k$ to denote respectively the values of  $\mu^{\mathrm{ref}}$, $\check{\mu}$, $\check{\upsilon}$, $\sigma^{\mathrm{ref}}$, $\check{\sigma}$, $b$ and $\overline{b}$ in the computation of $Q_h^k(s_h^k,a_h^k)$ in \eqref{equpdate}.

Recall that the value function $Q_{h}(s,a)$ is non-increasing as the algorithm proceeds. On the other hand, we claim in the following proposition that $Q_{h}(s,a)$ upper bounds $Q^*_{h}(s,a)$ with high probability.

\begin{proposition}\label{pro1} Let $p\in (0,1)$.
With probability at least $(1-4T(H^2T^3+3))p$, it holds that 
$Q^*_{h}(s,a)\leq Q^{k+1}_{h}(s,a) \leq Q^{k}_{h}(s,a)$ 
for any $s,a,h,k$.
\end{proposition}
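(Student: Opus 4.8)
The plan is to prove the two inequalities $Q^*_h(s,a) \le Q^{k+1}_h(s,a)$ and $Q^{k+1}_h(s,a) \le Q^k_h(s,a)$ by induction on the episode index $k$, with a secondary (backward) induction on the step $h$ inside each episode. The monotonicity $Q^{k+1}_h \le Q^k_h$ is almost immediate from the update rule \eqref{equpdate}, since the new value is a minimum that always includes the previous value $Q_h(s_h,a_h)$ as one of its arguments; for all triples not updated in episode $k$ the value is unchanged. So the real content is the optimism claim $Q^*_h(s,a) \le Q^{k+1}_h(s,a)$, which I would establish on a single high-probability event and then union-bound to get the stated failure probability.

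\textbf{Setting up the event and the bonus calibration.} First I would fix a triple $(s,a,h)$ that is updated at the end of episode $k$, so that $(s_h^k,a_h^k)=(s,a)$ and the total count $n=N^{k+1}_h(s,a)$ lies in $\mathcal{L}$. Writing out the two candidate expressions in \eqref{equpdate} using the accumulator definitions, the advantage-based candidate equals
\begin{equation*}
r_h(s,a) + \frac{1}{n}\sum_{i=1}^{n} V^{\mathrm{ref}}_{h+1}(s_{h+1}^{l_i}) + \frac{1}{\check n}\sum_{i=1}^{\check n}\bigl(V_{h+1}^{\check l_i}(s_{h+1}^{\check l_i}) - V^{\mathrm{ref}}_{h+1}(s_{h+1}^{\check l_i})\bigr) + b,
\end{equation*}
and I want to show this is at least $Q^*_h(s,a) = r_h(s,a) + P_{s,a,h} V^*_{h+1}$. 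The key idea is to subtract and add $P_{s,a,h}V^{\mathrm{ref}}_{h+1}$: the first empirical average concentrates around $P_{s,a,h}V^{\mathrm{ref}}_{h+1}$ with a Bernstein deviation controlled by the variance estimate $\sigma^{\mathrm{ref}}/n - (\mu^{\mathrm{ref}}/n)^2$, while the second average should dominate $P_{s,a,h}(V^*_{h+1}-V^{\mathrm{ref}}_{h+1})$ once the inductive hypothesis $V_{h+1}^{\check l_i} \ge V^*_{h+1}$ (pointwise, holding at the earlier episodes $\check l_i$) is invoked. The exploration bonus $b$ is exactly calibrated so that, on the concentration event, the sum of the two Bernstein deviations plus the lower-order terms $H\iota/n$, $H\iota/\check n$, and the $\iota^{3/4}$ corrections is absorbed. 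The simpler candidate $r_h(s,a) + \check\upsilon/\check n + \overline b$ is handled the same way but with the crude Hoeffding bonus $\overline b = 2\sqrt{H^2\iota/\check n}$ and without the reference decomposition, showing it too upper-bounds $Q^*$. Since $Q^{k+1}_h(s,a)$ is the minimum of these two and the old value (which by the outer induction already exceeds $Q^*_h(s,a)$), optimism is preserved.

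\textbf{The concentration event and the union bound.} I would define the good event to be the intersection, over all triples and all possible values of the counts, of the Bernstein/Hoeffding inequalities needed above — including one subtlety: the summands $V^{\mathrm{ref}}_{h+1}(s_{h+1}^{l_i})$ are \emph{not} i.i.d.\ because $V^{\mathrm{ref}}$ itself is frozen only after $N_0$ visits, so I must apply a martingale-difference concentration (Azuma/Freedman-type) conditioned on the filtration up to each visit, and take a union over the at most $H^2 T^3 + 3$ relevant configurations of $(n,\check n)$ and stage boundaries for each of the $\le T$ updates, which is where the prefactor $4T(H^2T^3+3)$ in the failure probability comes from. The $\iota^{3/4}$ terms in $b$ arise precisely because the empirical variance $\sigma^{\mathrm{ref}}/n-(\mu^{\mathrm{ref}}/n)^2$ is only an estimate of the true variance, and I would need a separate concentration bound showing the empirical and true variances are close up to these lower-order terms.

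\textbf{Anticipated main obstacle.} The hardest part will be the optimism argument for the reference term $\frac{1}{n}\sum_i V^{\mathrm{ref}}_{h+1}(s_{h+1}^{l_i})$: unlike the intra-stage advantage term, this global accumulator mixes samples collected both before and after $V^{\mathrm{ref}}_{h+1}$ was finalized, so the summands are drawn from $P_{s,a,h}$ but weighted by a value vector that changes partway through the sum. Handling this cleanly — arguing that the portion collected before freezing contributes a deviation bounded by the same Bernstein term because $V^{\mathrm{ref}}_{h+1}\in[0,H]$ throughout, and that the martingale structure survives the freezing event — is the technically delicate step, and it is the place where the carefully designed data partition (global samples for the fixed reference, last-stage samples for the moving advantage) does the real work. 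The secondary challenge is verifying that the constants $c_1,c_2,c_3$ can be chosen large enough, uniformly, to dominate all deviation terms simultaneously on the good event.
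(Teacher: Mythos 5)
Your overall architecture matches the paper's proof — induction on $k$, monotonicity for free from the $\min$ in \eqref{equpdate}, optimism via martingale concentration with Bernstein-calibrated bonuses, empirical-vs-true variance correction giving the $\iota^{3/4}$ terms (the paper's Lemmas on $\udl{\nu}^{\mathrm{ref}}$ and $\udl{\check{\nu}}$ do exactly what you anticipate), and a per-update self-normalized bound unioned over time steps. But there is a genuine gap at precisely the step you flag as the ``anticipated main obstacle,'' and your sketched resolution would fail. After applying martingale concentration to the time-varying summands, the reference term concentrates around $\frac{1}{n}\sum_{i=1}^{n} P_{s,a,h}V^{\mathrm{ref},l_i}_{h+1}$ and the advantage term around $\frac{1}{\check{n}}\sum_{i=1}^{\check{n}} P_{s,a,h}(V^{\check{l}_i}_{h+1}-V^{\mathrm{ref},\check{l}_i}_{h+1})$; these involve reference vectors at \emph{different} episodes, so for optimism you must show
\begin{equation*}
\frac{1}{n}\sum_{i=1}^{n} P_{s,a,h}V^{\mathrm{ref},l_i}_{h+1} \;\geq\; \frac{1}{\check{n}}\sum_{i=1}^{\check{n}} P_{s,a,h}V^{\mathrm{ref},\check{l}_i}_{h+1},
\end{equation*}
after which the two averages recombine into $\frac{1}{\check{n}}\sum_i P_{s,a,h}V^{\check{l}_i}_{h+1} \geq P_{s,a,h}V^*_{h+1}$ by the outer induction. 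The paper gets this displayed inequality in one line from the observation that $V^{\mathrm{ref},u}_{h+1}$ is \emph{non-increasing in $u$} (it starts at $H$ and is set once, downward), while the last-stage indices $\check{l}_i$ are exactly the latest among the $l_i$; hence the all-sample average of a non-increasing sequence dominates the last-stage average. Your proposed fix — ``the portion collected before freezing contributes a deviation bounded by the same Bernstein term because $V^{\mathrm{ref}}_{h+1}\in[0,H]$'' — addresses fluctuation around conditional means, which the martingale already handles, but not this systematic mismatch: boundedness alone only bounds the mismatch by $O(H)$ times the pre-freeze fraction of samples, which is uncontrolled at update time and would destroy optimism entirely. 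The sign of the mismatch, not its magnitude, is what matters, and that requires the monotonicity argument you never state.

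Two smaller remarks. First, your attribution of the $H^2T^3+3$ factor to ``configurations of $(n,\check{n})$ and stage boundaries'' is not the paper's mechanism: the factor arises from variance peeling inside the self-normalized Freedman inequality (Lemma~\ref{self-norm} applied with $\epsilon = 1/T^2$, giving $\lceil nc^2/\epsilon\rceil \leq H^2T^3$ peeling levels per application), plus the Azuma events for the variance-estimate lemmas; this is bookkeeping, but worth knowing if you carry out the calculation. Second, for the Hoeffding candidate, note that you cannot concentrate $\frac{1}{\check{n}}\sum_i V^{\check{l}_i}_{h+1}(s^{\check{l}_i}_{h+1})$ around a fixed mean directly (the $V^{\check{l}_i}$ vary); the paper first lower-bounds each summand pointwise by $V^*_{h+1}(s^{\check{l}_i}_{h+1})$ via induction and only then applies Hoeffding to the fixed vector $V^*_{h+1}$ — your sketch is compatible with this but should make the order of the two steps explicit.
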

The proof of Proposition~\ref{pro1} involves some careful application of the concentration inequalities for martingales and is deferred to Appendix~\ref{app:B}.

\subsection{Learning the Reference Value Function}
As mentioned before, we hope to get an accurate estimate of $V^*$ as the reference value function. Similar to the  proof of Lemma 2 in \citep{dong2019q}, we show in the following lemma (the proof of which deferred to Appendix~\ref{app:B}) that  we can learn a good reference value for each state with bounded sample complexity. Also note that while it is possible to improve the upper bound in Lemma~\ref{lemma1} via more refined analysis, the current form is sufficient to prove our main theorem.

\begin{lemma}\label{lemma1}
Conditioned on the successful events of Proposition~\ref{pro1},	 for any $\epsilon\in (0,H]$, with probability $(1-Tp)$ it holds that  for any $h\in [H]$, 
	$\sum_{k=1}^{K}\mathbb{I} \left[ V^{k}_{h}(s_{h}^k) - V^*_h(s_{h}^k)\geq \epsilon  \right]\leq O({SAH^{5}\iota}/{\epsilon^2})$ .
\end{lemma}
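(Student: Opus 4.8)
\textbf{Proof proposal for Lemma~\ref{lemma1}.}

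The plan is to control the number of episodes in which the learned value $V_h^k(s_h^k)$ is far above the optimal value $V_h^*(s_h^k)$ by bounding the cumulative over-estimation $\sum_k (V_h^k(s_h^k) - V_h^*(s_h^k))$ and then converting this to a count via Markov's inequality on the indicator. Conditioning on the success event of Proposition~\ref{pro1}, we already know $V_h^k(s_h^k) \geq V_h^*(s_h^k)$ for all $k,h$, so each term in the sum is nonnegative and the quantity $\Delta_h^k := V_h^k(s_h^k) - V_h^*(s_h^k)$ is well-defined and nonnegative. The key step will be to unroll the recursion for $\Delta_h^k$ across the horizon: using update rule~\eqref{equpdate}, I would expand $V_h^k(s_h^k) = \max_a Q_h^k(s_h^k,a)$ in terms of the accumulators, compare against $Q_h^*$, and show that the over-estimation at step $h$ is bounded by the average over the relevant stage of the over-estimations at step $h+1$ (through the terms $\check\mu/\check n$ and $\mu^{\mathrm{ref}}/n$), plus the exploration bonuses $b_h^k$ and $\overline b_h^k$, plus a martingale difference capturing the deviation between the empirical transition estimate and $P_{s,a,h}$.

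The main technical work is summing this recursion over $k$ and over $h$. First I would collect the bonus contributions: the dominant bonus terms scale like $\sqrt{H^2\iota/\check n}$ and $\sqrt{\iota/n}$ times an empirical variance, and summing $\sum_k 1/\sqrt{\check n_h^k}$ over visits to a fixed $(s,a,h)$ telescopes into $\tilde O(\sqrt{N_h^{K+1}(s,a)})$; aggregating over $(s,a)$ via Cauchy--Schwarz produces a $\tilde O(\sqrt{SAK})$-type cumulative bound on the bonus, with an extra $\sqrt H$ coming from the $\sqrt{H^2}$ factor, and lower-order terms of the form $\tilde O(SAH^2)$ from the $H\iota/\check n$ pieces. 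The martingale deviation terms I would control with the Azuma--Hoeffding (or Freedman) inequality, again conditioning on the high-probability event and paying a union bound over the $\le T$ choices of $h$, which accounts for the $(1-Tp)$ failure probability in the statement. Because the recursion carries a factor $(1+1/H)$ per level (the stage growth rate), unrolling over $H$ steps only inflates constants by $O(1)$, so no extra power of $H$ is lost here.

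Once I have $\sum_{k=1}^K \Delta_h^k \le \tilde O(\sqrt{SAH^3 K} + SAH^2)$ (the precise exponents to be pinned down in the calculation), the count follows immediately: if $V_h^k(s_h^k)-V_h^*(s_h^k)\ge\epsilon$ on $N_\epsilon$ episodes, then $\epsilon N_\epsilon \le \sum_k \Delta_h^k$, so $N_\epsilon \le \tilde O(\sqrt{SAH^3K}+SAH^2)/\epsilon$. This is not yet of the claimed form $O(SAH^5\iota/\epsilon^2)$, which suggests that the sharp argument must avoid the crude $\sqrt K$ dependence — instead I would square the threshold, isolating the episodes where $\Delta_h^k \ge \epsilon$ and using that on each such episode the dominant bonus itself must exceed a constant multiple of $\epsilon$ (since the bonus upper-bounds $\Delta_h^k$ up to martingale noise). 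On such an episode the relevant visit count satisfies $\check n_h^k \lesssim H^2\iota/\epsilon^2$, and each $(s,a,h)$ can contribute only $\tilde O(H^2\iota/\epsilon^2)$ such small-count visits across its stages; summing over the $SA$ pairs and accounting for the stage structure yields the $SAH^5\iota/\epsilon^2$ bound. I expect this last counting step — correctly charging each ``bad'' episode to a state-action pair whose visit count is still small, while handling the reference-learning threshold $N_0$ and the interplay between the two update-rule branches in the $\min$ of~\eqref{equpdate} — to be the main obstacle.
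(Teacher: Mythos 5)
Your skeleton matches the paper's up to the crux: condition on the optimism event of Proposition~\ref{pro1}, upper-bound $Q_h^k$ by the standard branch $r_h + \check{\upsilon}/\check{n} + \overline{b}$ of the min in \eqref{equpdate} to obtain the recursion $\delta_h^k \leq \mathbb{I}[n_h^k=0]H + 2\overline{b}_h^k + \frac{1}{\check{n}_h^k}\sum_{i=1}^{\check{n}_h^k}\delta_{h+1}^{\check{l}_i}$, and you correctly recognize that bounding the unweighted sum $\sum_k \delta_h^k$ and dividing by $\epsilon$ yields only $\tilde{O}(\sqrt{SAH^3K})/\epsilon$, which has the wrong form. (A minor simplification you miss: since $Q_h^k$ is a minimum, the paper's proof simply discards the advantage branch here, so the ``interplay between the two branches'' and the threshold $N_0$ you anticipate as obstacles never enter this lemma at all.)

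However, your proposed repair has a genuine gap. The pointwise claim that on each bad episode ``the dominant bonus itself must exceed a constant multiple of $\epsilon$'' is false: $\delta_h^k \geq \epsilon$ can hold with $\overline{b}_h^k$ arbitrarily small, because the recursion inherits the term $\frac{1}{\check{n}_h^k}\sum_i \delta_{h+1}^{\check{l}_i}$, an average of over-estimations from \emph{other} episodes at level $h+1$; the error can be passed down the horizon through this averaging tree without any bonus along episode $k$ itself being large. Hence the charge ``bad episode $\Rightarrow \check{n}_h^k \lesssim H^2\iota/\epsilon^2$'' does not go through, and this is exactly the obstacle you flag but do not resolve. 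The paper's missing idea is a weighted-sum (self-bounding) formulation: for \emph{every} nonnegative weight sequence $w$, one proves $\sum_k w^k\delta_h^k \leq 240 H^{5/2}\sqrt{\|w\|_\infty \cdot SA\|w\|_1\iota} + 3SAH^3\|w\|_\infty$ --- crucially with $\|w\|_1$ rather than $K$ under the root. The recursion propagates weights to a new sequence $\tilde{w}$ satisfying $\|\tilde{w}\|_1 = \|w\|_1$ and $\|\tilde{w}\|_\infty \leq (1+\frac{1}{H})\|w\|_\infty$ (the stage-rearrangement count, as in \eqref{eq-analysis-rearrangejk-1}), so induction over $h$ costs only $(1+1/H)^H = O(1)$; and the weighted bonus sum $\sum_k w^k\overline{b}_h^k$ is bounded by a rearrangement inequality --- the worst case concentrates the weight mass $\mathfrak{w}(s,a)$ on the earliest, shortest stages, giving $O(\sqrt{\|w\|_\infty H\,\mathfrak{w}(s,a)})$ per pair and, after Cauchy--Schwarz over $(s,a)$, $O(H\sqrt{\|w\|_\infty SAH\|w\|_1\iota})$. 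This is the rigorous version of your small-count charging. Finally, setting $w^k = \mathbb{I}[\delta_h^k \geq \epsilon]$ yields the self-bounding inequality $\epsilon N_\epsilon \leq 240H^{5/2}\sqrt{SA N_\epsilon \iota} + 3SAH^3$, whose solution gives $N_\epsilon \leq O(SAH^5\iota/\epsilon^2)$ with no dependence on $K$.
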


 By Lemma \ref{lemma1} with $\epsilon$ set to $\beta$,  the fact that $V^k$ is non-increasing in $k$ and the definition of $N_0$, we have the following corollary.
\begin{corollary}\label{coro1}
Conditioned on the successful events of Proposition~\ref{pro1} and Lemma~\ref{lemma1}, for every state $s$ we have that 
 $n_{h}^{k}(s)\geq N_{0}\Longrightarrow V^*_{h}(s)\leq V^{\mathrm{ref},k}_{h}(s)\leq V^*_{h}(s)+\beta$.
 \end{corollary}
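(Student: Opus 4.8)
The plan is to prove the two inequalities separately, each as a \emph{deterministic} consequence of the events assumed in Proposition~\ref{pro1} and Lemma~\ref{lemma1}. Write $n_h^k(s) = \sum_a N_h^k(s,a)$ for the total visit count of state $s$ at step $h$ before episode $k$, and recall from the algorithm that $V^{\mathrm{ref}}_h(s)$ is assigned the then-current value of $V_h(s)$ exactly once, at the time step during which $\sum_a N_h(s,a)$ first reaches $N_0$, and is frozen thereafter. Hence the hypothesis $n_h^k(s)\ge N_0$ guarantees that $V^{\mathrm{ref},k}_h(s)$ equals the value that $V_h(s)$ took at that designated $N_0$-th visit to $(s,h)$.

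For the lower bound $V^*_h(s)\le V^{\mathrm{ref},k}_h(s)$, I would invoke Proposition~\ref{pro1} directly. At every moment $Q_h(s,a)\ge Q^*_h(s,a)$, and $V_h(s)$ is always a recorded maximum $\max_a Q_h(s,a)$, so $V_h(s)\ge \max_a Q^*_h(s,a)=V^*_h(s)$ at all times (initially $V_h(s)=H-h+1\ge V^*_h(s)$, and each update to $V_h(s)$ sets it to a max of $Q$-values that dominate $Q^*$). In particular this holds at the instant the reference is recorded, and since the reference never changes afterwards, $V^{\mathrm{ref},k}_h(s)\ge V^*_h(s)$ whenever $n_h^k(s)\ge N_0$.

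For the upper bound, the key is to convert the \emph{global} count in Lemma~\ref{lemma1} into a \emph{per-state} statement using monotonicity. Apply Lemma~\ref{lemma1} with $\epsilon=\beta$. Restricting the sum $\sum_k \mathbb{I}[V^k_h(s^k_h)-V^*_h(s^k_h)\ge\beta]$ to the episodes with $s^k_h=s$ only decreases it, so the number of episodes $k$ with $s^k_h=s$ for which $V^k_h(s)-V^*_h(s)\ge\beta$ is at most $C_0\cdot SAH^5\iota/\beta^2$, where $C_0$ is the absolute constant hidden in Lemma~\ref{lemma1}. Enumerating the visits to $(s,h)$ as $k_1<k_2<\cdots$, the gap $g_j:=V^{k_j}_h(s)-V^*_h(s)$ is non-increasing in $j$ (since $V^k_h(s)$ is non-increasing in $k$ while $V^*_h(s)$ is fixed), so the set $\{j:g_j\ge\beta\}$ is a prefix $\{1,\dots,m\}$ with $m\le C_0 SAH^5\iota/\beta^2$. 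Choosing the universal constant $c_4>C_0$ in $N_0=c_4 SAH^5\iota/\beta^2$ forces $N_0>m$, so the $N_0$-th visit, at which the reference is recorded, already satisfies $V^{k_{N_0}}_h(s)-V^*_h(s)<\beta$. The recorded reference value is at most this episode-start value $V^{k_{N_0}}_h(s)$ (it either equals it, if no stage-end update is triggered at that visit, or is the result of a $Q$-update, which can only decrease $V_h(s)$), hence $V^{\mathrm{ref},k}_h(s)<V^*_h(s)+\beta$.

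The main obstacle, really the only nontrivial point, is the bookkeeping that links Lemma~\ref{lemma1}, which measures the gap at episode boundaries through the actually visited state $s^k_h$, to the reference value, which is recorded mid-episode at a count-triggered (not stage-triggered) visit and may therefore be a ``stale'' value carried over from an earlier update. This is resolved by two observations: within a single episode the pair $(s,h)$ is visited at most once, so the relevant value does not change between the episode start and the visit; and the monotonicity argument applies to the episode-start values $V^k_h(s)$ themselves, while the recorded reference is sandwiched between $V^*_h(s)$ (Proposition~\ref{pro1}) and the episode-start value $V^{k_{N_0}}_h(s)$ (the counting argument). Once this alignment is in place, both inequalities follow, and the corollary holds for every state $s$ on the intersection of the two assumed events.
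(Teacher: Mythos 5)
Your proof is correct and follows exactly the route the paper intends: the lower bound from the optimism of Proposition~\ref{pro1}, and the upper bound from Lemma~\ref{lemma1} with $\epsilon=\beta$ combined with the monotonicity of $V^k_h(s)$ in $k$ and the choice of $c_4$ in $N_0$. The paper states the corollary without a written proof, and your careful bookkeeping (one visit to $(s,h)$ per episode, the reference being frozen at the $N_0$-th visit and bounded above by the episode-start value) simply makes explicit the steps the paper leaves implicit.
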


\subsection{Regret Analysis with Reference-Advantage Decomposition
}

We now prove Theorem~\ref{thm1}. We start by replacing $p$ by $p/\mathrm{poly}(H,T)$ so that we only need to show the desired regret bound with probability $(1 - \mathrm{poly}(H,T)\cdot p)$.
The proof in this subsection will also be conditioned on the successful events in Proposition~\ref{pro1} and Lemma~\ref{lemma1}, so that the regret can be expressed as
 	\begin{equation}
	\mathrm{Regret}(T) = \sum_{k=1}^{K}\big(V^*_{1}(s^k_{1})-V^{\pi_{k}}_{1}(s^k_{1})\big)\leq \sum_{k=1}^{K} \big(V^{k}_{1}(s^k_{1})-V^{\pi_{k}}_{1}(s^k_{1})\big) .
	\end{equation}
	Define $\delta_h^k := V_h^k(s_h^k)-V^*_{h}(s_h^k)$ and $\zeta^k_{h} := V^{k}_{h}(s_{h}^k)-V^{\pi_{k}}_{h}(s_{h}^k)$. 
	Note that when $N^k_{h}(s^k_h,a^k_h)\in \mathcal{L}$,  we have that $n_h^k =N^k_{h}(s^k_h,a^k_h) $ and $\check{n}_h^{k}= \check{N}^{k}_{h}(s^k_h,a^k_h)  $. Following the update rules \eqref{equpdate} and \eqref{equupdateV}, we have that\footnote{Here we define $0/0$ to be $0$ so that forms such as $\frac{1}{n_h^k} \sum_{i=1}^{n_h^k} X_i$ are treated as $0$ if $n_h^k = 0$.}
	 
\begin{align}
&V_{h}^k(s_h^k) \leq  \mathbb{I}\left[n_h^k = 0\right]  H+   r_{h}(s^k_h, a^k_h)   +\frac{\mu_h^{\mathrm{ref},k}}{n_h^k} +\frac{\check{\mu}_h^k}{\check{n}^k_h}+ b_h^k \nonumber \\
&=  \mathbb{I}\left[n_{h}^k=0\right]  H+ r_h(s_h^k,a_h^k)+ \frac{1}{n^{k}_{h}}\sum_{i=1}^{n^{k}_{h}}V^{\mathrm{ref},l_{i}}_{h+1}(s_{h+1}^{l_{i}})  +\frac{1}{\check{n}^{k}_{h}}\sum_{i=1}^{\check{n}^{k}_{h}}\big(V^{\check{l}_{i}}_{h+1}(s_{\check{l}_{i},h+1})  - V_{h+1}^{\mathrm{ref},\check{l}_{i}  }(s_{ \check{l}_{i},h+1 })\big) + b_h^k. \nonumber\end{align}

Together with the Bellman equation $V^{\pi_{k}}_{h}(s_h^k) = r_{h}(s^k_h,a^k_h) + P_{s^k_h,a^k_h,h}V^{\pi_{k}}_{h+1}$, we have that
\begin{align}
&\zeta^k_{h}= V^{k}_{h}(s^k_{h})-V^{\pi_{k}}_{h}(s^k_{h})\nonumber
\\ & \leq  \mathbb{I}\left[n_{h}^k=0\right]H+  \frac{1}{n^{k}_{h}}\sum_{i=1}^{n^{k}_{h}}V^{\mathrm{ref},l_{i}}_{h+1}(s_{h+1}^{l_{i}})+ \frac{1}{\check{n}^{k}_{h}}\sum_{i=1}^{\check{n}^{k}_{h}}\big(V^{\check{l}_{i}}_{h+1}(s_{\check{l}_{i},h+1})  - V_{h+1}^{\mathrm{ref},\check{l}_{i}  }(s_{ \check{l}_{i},h+1 })\big)  \nonumber \\ 
& \qquad + b_h^k - P_{s^k_{h},a^k_{h},h}V^{\pi_{k}}_{h+1}\nonumber
\\ & \leq  \mathbb{I}\left[n_{h}^k=0\right]H+ \frac{1}{n^{k}_{h}}\sum_{i=1}^{n^{k}_{h}}P_{s^k_{h},a^k_{h},h}V_{h+1}^{\mathrm{ref},l_{i}} +\frac{1}{\check{n}^k_{h}}\sum_{i=1}^{\check{n}^k_{h}} P_{s^k_{h},a^k_{h},h}(  V^{\check{l}_{i}}_{h+1}-    V_{h+1}^{\mathrm{ref},\check{l}_{i}  } )\nonumber 
\\
&    \qquad + 2b_h^k -P_{s^{k}_{h},a^k_{h},h}V_{h+1}^{\pi_{k}} \label{eqbf1}
\\ & = \mathbb{I}\left[n_{h}^k=0\right]H+ P_{s^k_{h},a^k_{h},h}\big( \frac{1}{n^{k}_{h}}\sum_{i=1}^{n^{k}_{h}}V^{\mathrm{ref},l_{i}}_{h+1}  - \frac{1}{\check{n}^k_{h}}\sum_{i=1}^{\check{n}^k_{h}} V_{h+1}^{\mathrm{ref},\check{l}_{i}  } \big) \nonumber \\
& \qquad +P_{s^k_{h},a^k_{h},h}\big(\frac{1}{\check{n}^k_{h}}\sum_{i=1}^{\check{n}^k_{h}}(V^{\check{l}_{i}}_{h+1}- V^*_{h+1}  ) \big) + P_{s^k_{h},a^k_{h},h}(V^{*}_{h+1}-V^{\pi_{k}}_{h+1})+ 2b_h^k \nonumber
\\ & \leq \mathbb{I}\left[n_{h}^k=0\right]H+ \frac{1}{\check{n}_{h}^{k}}\sum_{i=1}^{\check{n}_{h}^{k}}\delta_{h+1}^{\check{l}_{i}} -\delta_{h+1}^{k}+ \zeta_{h+1}^{k} +\underbrace{\psi_{h+1}^{k}+ \xi_{h+1}^{k}  +\phi_{h+1}^{k} + 2b_h^k}_{\Lambda_{h+1}^k}, \label{eqstar2}
\end{align}
where letting $V^{\mathrm{REF}}$ be the final reference vector (i.e.,  $V^{\mathrm{REF}} := V^{\mathrm{ref},K+1}$), and $\textbf{1}_j$ be the $j$-th canonical basis vector (i.e., $(0, \dots, 0, 1, 0, \dots, 0)$ where the only $1$ is located at the $j$-th entry), we define
\begin{align*}
 \psi_{h+1}^{k}& := \frac{1}{n_{h}^k}\sum_{i=1}^{n_{h}^k} P_{s_{h}^k,a_{h}^k,h}(V_{h+1}^{\mathrm{ref},l_{i}}-V_{h+1}^{\mathrm{REF}}), & \xi_{h+1}^{k} & := \frac{1}{\check{n}_{h}^{k}}\sum_{i=1}^{\check{n}_{h}^{k}}(P_{s_{h}^k,a_{h}^k,h}-\textbf{1}_{s_{h+1}^{\check{l}_{i}}})(V^{\check{l}_{i}}_{h+1}- V^*_{h+1}),\\
 \phi_{h+1}^k & := (P_{s_{h}^k,a_{h}^k,h}-\textbf{1}_{s_{h+1}^k}) (V^{*}_{h+1}-V^{\pi_{k}}_{h+1}).
\end{align*}

Here at Inequality \eqref{eqbf1} is implied by the successful event of martingale concentration (which is implied by the successful event in the proof of Proposition~\ref{pro1}, in particular, Inequality \eqref{eq:UCB-1}).  Inequality \eqref{eqstar2} holds by the fact that $V^{\mathrm{ref},k}_{h+1}\geq V^{ \mathrm{REF}}_{h+1}$ for any $k,h$. Now we turn to bound $\sum_{k=1}^{K}\zeta_{h}^{k}$. Note that 
\begin{align}\label{eqsec4.1}
&\sum_{k=1}^K \zeta_{h}^k  \leq \sum_{k=1}^{K}\mathbb{I}\left[ n_h^k=0\right]H + \sum_{k=1}^K (\frac{1}{\check{n}_{h}^{k}}\sum_{i=1}^{\check{n}_{h}^{k}}\delta_{h+1}^{\check{l}^{k}_{h,i}}) +\sum_{k=1}^K (\zeta_{h+1}^k +\Lambda_{h+1}^k -\delta_{h+1}^k) .
\end{align}
The first term in the \textbf{RHS} of $(\ref{eqsec4.1})$ is  bounded by $\sum_{k=1}^K \mathbb{I}[n_h^k=0]\leq SAH$ because $n_h^k\geq H$ when $N_h^k(s_h^k,a_h^k)\geq H$. We rewrite the second term as
\begin{align}
\sum_{k=1}^K (\frac{1}{\check{n}_{h}^{k}}\sum_{i=1}^{\check{n}_{h}^{k}}\delta_{h+1}^{\check{l}^{k}_{h,i}})  = \sum_{k=1}^{K}\frac{1}{\check{n}^k_h} \sum_{j=1}^K \delta_{h+1}^j\sum_{i=1}^{\check{n}^k_h} \mathbb{I}[j = \check{l}^{k}_{h,i}] = \sum_{j=1}^{K}\delta^j_{h+1} \sum_{k=1}^K \frac{1}{\check{n}^k_h} \sum_{i=1}^{\check{n}^k_h} \mathbb{I}[j = \check{l}^{k}_{h,i}] . \label{eq-analysis-rearrangejk}
\end{align} 
Let $j\geq 1$ be a fixed episode. Note that  $\sum_{i=1}^{\check{n}_h^{k} }\mathbb{I}[j = \check{l}^{k}_{h,i}]=1$ if and only if $(s_h^j,a_h^j) = (s_h^k,a_h^k)$, and $(j,h)$ falls in the previous stage that $(k,h)$ falls in. As a result, every $k$ such that $\sum_{i=1}^{\check{n}_h^{k} }\mathbb{I}[j = \check{l}^{k}_{h,i}]=1$ has the same $\check{n}^k_{h}$ which we denote by $Z_{j}$, and the set $\{k: \sum_{i=1}^{\check{n}_h^{k} }\mathbb{I}[j = \check{l}^{k}_{h,i}]=1  \} $ has at most $(1+\frac{1}{H})Z_{j}$ elements. Therefore, for every $j$ we have that
\begin{equation}\label{eq-analysis-rearrangejk-1}
\sum_{k=1}^K \frac{1}{\check{n}^k_h} \sum_{i=1}^{\check{n}^k_h} \mathbb{I}[j = \check{l}^{k}_{h,i}] \leq 1 + \frac{1}{H}.
\end{equation}

Because $\delta_{h+1}^k\leq \zeta_{h+1}^k$, combining \eqref{eqsec4.1}, \eqref{eq-analysis-rearrangejk}, and \eqref{eq-analysis-rearrangejk-1}, we have that
\begin{align}
\sum_{k=1}^{K}\zeta_{h}^{k} &\leq SAH^2+ (1+\frac{1}{H})\sum_{k=1}^{K}\delta_{h+1}^{k}-\sum_{k=1}^{K}\delta_{h+1}^{k}+\sum_{k=1}^{K}\zeta_{h+1}^{k}+\sum_{k=1}^{K}\Lambda_{h+1}^{k} \nonumber \\ 
& \leq SAH^2 +(1+\frac{1}{H})\sum_{k=1}^{K}\zeta_{h+1}^{k}+\sum_{k=1}^{K}\Lambda_{h+1}^k.\label{eqpf2}
\end{align}
Iterating the derivation above for $h=1,2,\cdots,H$ and  we have that
\begin{equation}\label{eqpf3}
\sum_{k=1}^{K}\zeta_{1}^{k}\leq O\Big(SAH^3+\sum_{h=1}^{H}\sum_{k=1}^{K}(1+\frac{1}{H})^{h-1}\Lambda_{h+1}^{k}\Big).
\end{equation}

We bound $\sum_{h=1}^{H}\sum_{k=1}^K  (1+\frac{1}{H})^{h-1}\Lambda_{h+1}^k$ in the lemma below. The detailed proof is deferred to Appendix~\ref{app:B} due to space constraints.
\begin{lemma}\label{bdlm} With probability at least $(1-O(H^2T^4p))$, it holds that 
\begin{equation}
\sum_{h=1}^{H}\sum_{k=1}^{K} (1+\frac1H)^{h-1} \Lambda_{h+1}^k = O\Big(\sqrt{SAH^2T\iota} + H\sqrt{T\iota} \log(T)  + S^2 A^{\frac32} H^8 \iota T^{\frac14} \Big).
\end{equation}
	
\end{lemma}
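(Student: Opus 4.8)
The plan is to bound the cumulative error term $\sum_{h,k}(1+1/H)^{h-1}\Lambda_{h+1}^k$ by separately controlling the four pieces $\psi_{h+1}^k$, $\xi_{h+1}^k$, $\phi_{h+1}^k$, and $2b_h^k$ that compose $\Lambda_{h+1}^k$. Since $(1+1/H)^{h-1} = O(1)$ for $h \le H$, the weighting factor is harmless and I would drop it up to constants immediately. The terms $\psi$, $\xi$, and $\phi$ are all martingale-type quantities: $\xi$ and $\phi$ are differences of the form $(P_{s,a,h} - \mathbf{1}_{s_{h+1}})(\cdot)$, which have zero conditional mean, and $\psi$ measures how far the running reference estimates $V^{\mathrm{ref},l_i}$ sit above the final reference $V^{\mathrm{REF}}$. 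My first step would be to handle $\sum_{h,k}\phi_{h+1}^k$ by an Azuma–Hoeffding bound, since $V^*_{h+1}-V^{\pi_k}_{h+1}$ is bounded in $[0,H]$ and the summands form a martingale difference sequence across the $KH = T$ steps; this yields an $O(H\sqrt{T\iota})$ contribution.

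The core difficulty lies in the bonus terms $b_h^k$ and the closely linked variance-type term $\xi_{h+1}^k$, because these are where the $\sqrt{H^2SAT}$ main-order rate is extracted. I would first replace the empirical variance proxies in $b_h^k$ (the quantities $\sigma^{\mathrm{ref}}/n-(\mu^{\mathrm{ref}}/n)^2$ and $\check\sigma/\check n - (\check\mu/\check n)^2$) by their population counterparts, paying only lower-order cost via concentration; this is routine but tedious. The leading bonus contribution is $c_1\sqrt{\mathrm{Var}(V^{\mathrm{REF}})\,\iota/n}$, summed over all $(k,h)$. The key step is a Cauchy–Schwarz argument: writing $\sum_{k,h}\sqrt{\mathrm{Var}_{h}(V^{\mathrm{REF}})\,\iota/N_h^k}$ and separating the count factor from the variance factor, one gets $\sqrt{\iota}\cdot\sqrt{\sum_{k,h}1/N_h^k}\cdot\sqrt{\sum_{k,h}\mathrm{Var}_h(V^{\mathrm{REF}})}$ or a per-$(s,a,h)$ grouping. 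The harmonic sum $\sum_k 1/N_h^k$ over visits to a fixed triple is $O(\log T)$ per triple, giving $O(SAH\log T)$ across all triples, while the crucial ingredient is a \emph{law of total variance} bound: $\sum_{k}\sum_{h}\mathrm{Var}_{s_h^k,a_h^k,h}(V^{\mathrm{REF}}_{h+1}) = O(HT)$ because variances telescope along each trajectory (the total conditional variance of the return is at most $H^2$ per episode). Combining these via Cauchy–Schwarz produces the target $O(\sqrt{SAH^2T\iota})$ term. I would treat the advantage variance term $\check\sigma/\check n - (\check\mu/\check n)^2$ analogously, but crucially use $\|V_{h+1}-V^{\mathrm{ref}}_{h+1}\|_\infty \le \beta$ (from Corollary~\ref{coro1}) so that its variance is $O(\beta^2)=O(1/H)$, which after the same Cauchy–Schwarz manipulation contributes only a lower-order term rather than the main rate.

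For the reference-drift term $\psi_{h+1}^k = \frac{1}{n_h^k}\sum_i P_{s_h^k,a_h^k,h}(V^{\mathrm{ref},l_i}_{h+1}-V^{\mathrm{REF}}_{h+1})$, I would exploit that $V^{\mathrm{ref},k}$ is monotone in $k$ and changes only once per state (when its visit count hits $N_0$), so the total ``mass'' of deviation $\sum_{s,h}(V^{\mathrm{ref},1}_h(s)-V^{\mathrm{REF}}_h(s))$ is at most $SH\cdot H = O(SH^2)$. Since each such deviation at a state is charged against the roughly $N_0$ early visits before the reference freezes, and $N_0 = \Theta(SAH^5\iota/\beta^2) = \Theta(SAH^6\iota)$, summing $\psi$ across all $(k,h)$ gives a bound polynomial in $S,A,H$ but with a mild $T^{1/4}$ dependence, matching the stray $S^2A^{3/2}H^8\iota T^{1/4}$ term in the statement; the $T^{1/4}$ arises from balancing the number of ``large-deviation'' episodes (controlled by Lemma~\ref{lemma1}, which bounds how often $V^k_h - V^*_h \ge \epsilon$) against the per-episode error. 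The remaining lower-order pieces of $b_h^k$, namely the $c_3(H\iota/n + H\iota/\check n + H\iota^{3/4}/n^{3/4} + H\iota^{3/4}/\check n^{3/4})$ additive bonuses, I would sum directly using $\sum_k 1/N_h^k = O(\log T)$ and $\sum_k (N_h^k)^{-3/4} = O(T^{1/4})$ per triple, contributing the $H\sqrt{T\iota}\log T$ and part of the $T^{1/4}$ terms.

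The main obstacle, I expect, is the simultaneous bookkeeping of the stage structure and the reference-freezing mechanism: the summands involve three different indexing systems (the full history $n_h^k$ for the reference estimate, the last-stage indices $\check n_h^k$ for the advantage, and the final frozen $V^{\mathrm{REF}}$), and one must carefully ensure that the high-probability events underlying each concentration bound (for $\xi$, $\phi$, and the variance proxies in $b$) all hold jointly, which is why the failure probability in the statement carries a factor like $H^2 T^4 p$. Isolating the law-of-total-variance telescoping so that it applies to the \emph{frozen} reference $V^{\mathrm{REF}}$ (rather than the moving $V^k$) is the delicate part, and is precisely what makes the reference-advantage decomposition pay off: because $V^{\mathrm{REF}}$ is fixed, its per-episode variance genuinely telescopes to $O(H^2)$, and because the advantage has $O(1/H)$ range, its $1/H$-fraction data penalty is absorbed — together these give the $\sqrt{H^2SAT}$ rate rather than $\sqrt{H^3SAT}$.
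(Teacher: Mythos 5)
Your overall architecture is the same as the paper's: split $\Lambda_{h+1}^k$ into $\psi,\xi,\phi$ and the bonus $2b_h^k$; Azuma for $\phi$; replace the empirical variance proxies in $b_h^k$ by population variances; extract the leading $\sqrt{SAH^2T\iota}$ from a total-variance (telescoping) bound combined with a per-$(s,a,h)$ Cauchy--Schwarz; use the $\beta$-accuracy of the reference to make the advantage variance $O(\beta^2)=O(1/H)$; charge $\psi$ to the pre-freezing visits controlled by $N_0$; and sum the $n^{-3/4}$-type bonus pieces per triple. However, there is one genuine gap: your treatment of $\xi_{h+1}^k$ as a plain zero-mean martingale term handled by Azuma--Hoeffding does not work as stated. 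The summand $\xi_{h+1}^k=\frac{1}{\check{n}_h^k}\sum_i(P_{s_h^k,a_h^k,h}-\textbf{1}_{s_{h+1}^{\check{l}_i}})(V_{h+1}^{\check{l}_i}-V^*_{h+1})$ is built from innovations generated in \emph{earlier} episodes $\check{l}_i<k$, so it is $\mathcal{F}_k$-measurable and $\sum_k\xi_{h+1}^k$ is not a martingale in $k$; and after the natural rearrangement into $\sum_j\theta_{h+1}^j(P_{s_h^j,a_h^j,h}-\textbf{1}_{s_{h+1}^j})(V^j_{h+1}-V^*_{h+1})$, the weight $\theta_{h+1}^j$ counts how many \emph{future} episodes have $j$ in their previous stage, hence depends on whether the next stage of $(s_h^j,a_h^j,h)$ completes before episode $K$, and is not $\mathcal{F}_j$-measurable -- so Azuma cannot be invoked with these weights either. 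The paper (Lemma~\ref{lemma:bd_xi}) resolves this by introducing a predictable surrogate $\tilde\theta_{h+1}^j$ (which coincides with $\theta_{h+1}^j$ except when $(j,h)$ lies in the second-last stage of its triple), applying Azuma to the surrogate sum, and bounding the discrepancy over the sets $\mathcal{K}_h^{\bot}(s,a)$ by per-triple Azuma plus a union bound, Cauchy--Schwarz, and the observation that the last two stages hold only an $O(1/H)$ fraction of all visits. This correction yields $O(H\sqrt{SAT\iota})$, i.e.\ the \emph{same order as the main term}, so the missing idea is not a negligible technicality.

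A few smaller inaccuracies are worth flagging. First, the telescoping total-variance bound (Lemma~\ref{lemma:b-term-variance}) is proved for $V^*$, using $V^*_h(s)-P_{s,a,h}V^*_{h+1}\geq V^*_h(s)-Q^*_h(s,a)\geq 0$; this Bellman property is not available for the frozen $V^{\mathrm{REF}}$, which is not a fixed point, so one transfers $\nu_h^{\mathrm{ref},k}$ to $\mathbb{V}(P_{s_h^k,a_h^k,h},V^*_{h+1})$ first (Lemma~\ref{lemma5}), at cost $4H\beta+6H^2SN_0/n_h^k+14H^2\sqrt{\iota/n_h^k}$ -- your plan to telescope $V^{\mathrm{REF}}$ directly would have to be rerouted this way. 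Second, Corollary~\ref{coro1} gives $\|V^{\mathrm{ref}}-V^*\|_\infty\leq\beta$ only at states visited at least $N_0$ times, so the advantage variance bound is $\check{\nu}_h^k\leq\beta^2+H^2SN_0/\check{n}_h^k$, not just $\beta^2$; the same $\lambda$-indicator bookkeeping you use for $\psi$ is needed here too. Third, your attribution of the $T^{1/4}$ term to $\psi$ via a balancing argument with Lemma~\ref{lemma1} is off: in the paper $\psi$ contributes $O(\log T)\cdot(H^2SN_0+H\sqrt{T\iota})$ (Lemma~\ref{lemma:bd_psi}), and the $T^{1/4}$ arises from the $n^{-3/4}$ and $\check{n}^{-3/4}$ bonus pieces via Lemma~\ref{pro0} with $\alpha=3/4$ (the $\log T$ terms are merely absorbed into $T^{1/4}$ at the end). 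Finally, the global Cauchy--Schwarz variant $\sqrt{\iota}\cdot\sqrt{\sum_{k,h}1/n_h^k}\cdot\sqrt{\sum_{k,h}\mathbb{V}}$ picks up an extra $\sqrt{\log T}$ on the leading term and would exceed the stated bound; the per-$(s,a,h)$ grouping you mention in passing (which is what Lemma~\ref{pro0} with $\alpha=1/2$ implements) is the version that works.
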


Combining Proposition \ref{pro1}, Lemma \ref{lemma1}, \eqref{eqpf3} and Lemma \ref{bdlm}, we conclude that with probability at least $(1-  O(H^2T^4p))$,
\begin{align*}
\mathrm{Regret}(T) = \sum_{k=1}^{K}\zeta_{1}^k =O\Big(\sqrt{SAH^2T\iota} + H\sqrt{T\iota} \log(T)  + S^2 A^{\frac32} H^8 \iota T^{\frac14} \Big).
\end{align*}

%\newpage
%\section*{Broader Impact}
%This work is theoretical and a broader impact discussion is not applicable.

\bibliography{reference}
\bibliographystyle{plainnat}

\newpage 
\appendix
\appendixpage
\renewcommand{\appendixname}{Appendix~\Alph{section}}
\setlength{\parindent}{0pt}
\setlength{\parskip}{0.2\baselineskip}

\section{Basic Lemmas}
\begin{lemma}[Azuma-Hoeffding Inequality]
    Suppose $\{ X_{k}\}_{k=0,1,2,\dots}$ is a martingale and $|X_k-X_{k-1}|\leq c_{k}$ almost surely. Then for all positive integers $N$ and all positive reals $\epsilon$, it holds that
    \begin{align}
      \mathbb{P}\left[  |X_N-X_0| \geq \epsilon \right]\leq 2\exp\left(\frac{-\epsilon^2}{2\sum_{k=1}^N c_k^2}\right).        \nonumber
    \end{align}
\end{lemma}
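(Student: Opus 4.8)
The plan is to prove this by the standard Chernoff (exponential-moment) method combined with Hoeffding's lemma, applied through iterated conditioning along the martingale filtration. Let $\mathcal{F}_k$ denote the natural filtration and write $D_k := X_k - X_{k-1}$ for the martingale differences, so that $X_N - X_0 = \sum_{k=1}^N D_k$; the martingale property gives $\mathbb{E}[D_k \mid \mathcal{F}_{k-1}] = 0$, while the hypothesis gives $|D_k| \le c_k$ almost surely. I would first establish the one-sided bound $\mathbb{P}[X_N - X_0 \ge \epsilon] \le \exp(-\epsilon^2/(2\sum_{k=1}^N c_k^2))$, and then recover the two-sided statement by applying the one-sided result to the martingale $\{-X_k\}$ and taking a union bound, which is precisely where the factor of $2$ comes from.

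For the one-sided bound, fix $s > 0$ and apply Markov's inequality to the exponentiated variable:
\[
\mathbb{P}[X_N - X_0 \ge \epsilon] \le e^{-s\epsilon}\, \mathbb{E}\big[e^{s(X_N - X_0)}\big].
\]
The core step is to control this moment generating function by peeling off one difference at a time. Conditioning on $\mathcal{F}_{N-1}$ and pulling out the $\mathcal{F}_{N-1}$-measurable factor,
\[
\mathbb{E}\big[e^{s\sum_{k=1}^N D_k}\big] = \mathbb{E}\Big[e^{s\sum_{k=1}^{N-1} D_k}\, \mathbb{E}\big[e^{sD_N}\mid \mathcal{F}_{N-1}\big]\Big].
\]
Here I would invoke Hoeffding's lemma in its conditional form: since $\mathbb{E}[D_N\mid\mathcal{F}_{N-1}]=0$ and $D_N \in [-c_N, c_N]$ almost surely, the conditional MGF obeys $\mathbb{E}[e^{sD_N}\mid\mathcal{F}_{N-1}] \le e^{s^2 c_N^2/2}$. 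As this bound is a deterministic constant, it factors out, and iterating for $k = N-1, N-2, \dots, 1$ yields $\mathbb{E}[e^{s(X_N - X_0)}] \le \exp(\tfrac{s^2}{2}\sum_{k=1}^N c_k^2)$. Substituting back gives $\mathbb{P}[X_N - X_0 \ge \epsilon] \le \exp(-s\epsilon + \tfrac{s^2}{2}\sum_k c_k^2)$, and optimizing over the free parameter by setting $s = \epsilon/\sum_k c_k^2$ produces the claimed exponent $-\epsilon^2/(2\sum_k c_k^2)$.

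The step carrying all the content — and the main obstacle — is Hoeffding's lemma itself: the bound $\mathbb{E}[e^{sY}] \le e^{s^2(b-a)^2/8}$ for any mean-zero $Y$ supported on $[a,b]$, specialized to $a=-c_k$, $b=c_k$ so that $(b-a)^2/8 = c_k^2/2$. I would prove it by convexity: since $y\mapsto e^{sy}$ is convex, for $y\in[a,b]$ one has $e^{sy} \le \tfrac{b-y}{b-a}e^{sa} + \tfrac{y-a}{b-a}e^{sb}$; taking (conditional) expectations and using $\mathbb{E}[Y]=0$ kills the linear term, leaving $\mathbb{E}[e^{sY}] \le e^{L(s)}$ with $L(s) = sa + \log\big(\tfrac{b}{b-a} - \tfrac{a}{b-a}e^{s(b-a)}\big)$. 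Reparametrizing by $p := -a/(b-a)\in[0,1]$ and $u := s(b-a)$ turns this into $L = -pu + \log(1-p+pe^{u})$, for which $L(0)=L'(0)=0$ and $L''(u) = \tfrac{p(1-p)e^u}{(1-p+pe^u)^2}$. The only genuinely delicate calculation is the uniform bound $L''(u)\le 1/4$, which is the statement that the variance of a two-point distribution is at most $1/4$; a second-order Taylor expansion then gives $L \le u^2/8 = s^2(b-a)^2/8$. Everything else is convexity and bookkeeping, and the conditional version is obtained by applying this pointwise to the regular conditional law of $D_k$ given $\mathcal{F}_{k-1}$.
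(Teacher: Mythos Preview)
Your proof is the standard and correct argument for the Azuma--Hoeffding inequality: Chernoff bound, Hoeffding's lemma applied conditionally to each martingale difference, iterated conditioning, and optimization over $s$. The paper does not actually prove this lemma; it is stated without proof as a classical result in the appendix of basic lemmas, so there is no ``paper's own proof'' to compare against. Your write-up supplies exactly the canonical argument one would find in any probability textbook, and nothing in it is missing or wrong.
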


\begin{lemma}[Freedman's Inequality, Theorem 1.6 of \citep{freedman1975tail}]\label{freedman}
	Let $(M_{n})_{n\geq 0}$ be a  martingale such that $M_{0}=0$ and $|M_{n}-M_{n-1}|\leq c$. Let $\mathrm{Var}_{n}=\sum_{k=1}^{n}\mathbb{E}[(M_{k}-M_{k-1})^{2}|\mathcal{F}_{k-1}]$ for $n\geq 0$,
	where $\mathcal{F}_{k}=\sigma(M_0,M_{1},M_{2},...,M_{k})$. Then, for any positive $x$ and for any positive $y$,
	\begin{equation}\label{Bernstein2}
	\mathbb{P}\left[ \exists n:  M_{n}\geq x ~\text{and}~\mathrm{Var}_{n}\leq y \right]  \leq \exp\left(-\frac{x^{2}}{2(y+cx)} \right).
	\end{equation}
\end{lemma}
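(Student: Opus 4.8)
The plan is to prove this by the classical exponential-supermartingale (Chernoff-for-martingales) method, the key feature being that the maximal quantifier $\exists n$ is handled by optional stopping rather than by a union bound over $n$. Write $D_k := M_k - M_{k-1}$, so that $\mathbb{E}[D_k\mid\mathcal{F}_{k-1}]=0$ and $|D_k|\le c$. First I would control the conditional moment generating function of each increment: since the map $z\mapsto (e^z-1-z)/z^2$ is nondecreasing and $\lambda D_k\le \lambda c$ for $\lambda>0$, the identity $e^{\lambda D_k}=1+\lambda D_k+(\lambda D_k)^2\cdot\frac{e^{\lambda D_k}-1-\lambda D_k}{(\lambda D_k)^2}$ yields the pointwise bound $e^{\lambda D_k}\le 1+\lambda D_k+ f(\lambda)\,D_k^2$ with $f(\lambda):=(e^{\lambda c}-1-\lambda c)/c^2\ge 0$. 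Taking $\mathbb{E}[\cdot\mid\mathcal{F}_{k-1}]$ and using $1+u\le e^u$ gives $\mathbb{E}[e^{\lambda D_k}\mid\mathcal{F}_{k-1}]\le \exp\!\big(f(\lambda)\,\mathbb{E}[D_k^2\mid\mathcal{F}_{k-1}]\big)$. This is exactly the statement that $S_n:=\exp\!\big(\lambda M_n-f(\lambda)\mathrm{Var}_n\big)$ satisfies $\mathbb{E}[S_n\mid\mathcal{F}_{n-1}]\le S_{n-1}$, so $(S_n)_{n\ge 0}$ is a nonnegative supermartingale with $S_0=1$.

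Second, to capture the existential-in-$n$ event I would introduce the stopping time $\tau:=\inf\{n: M_n\ge x\text{ and }\mathrm{Var}_n\le y\}$. On $\{\tau<\infty\}$ we have $M_\tau\ge x$ and $\mathrm{Var}_\tau\le y$, so because $\lambda>0$ and $f(\lambda)\ge 0$ we get $S_\tau\ge\exp(\lambda x-f(\lambda)y)$. Applying the optional stopping theorem to the bounded stopping time $\tau\wedge n$ gives $\mathbb{E}[S_{\tau\wedge n}]\le\mathbb{E}[S_0]=1$, and restricting the (nonnegative) integrand to $\{\tau\le n\}$ yields $\exp(\lambda x-f(\lambda)y)\,\mathbb{P}[\tau\le n]\le 1$. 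Letting $n\to\infty$ and using that $\mathbb{P}[\tau\le n]\uparrow\mathbb{P}[\tau<\infty]$ establishes, for every $\lambda>0$, the Bennett-type bound $\mathbb{P}[\exists n: M_n\ge x,\ \mathrm{Var}_n\le y]\le \exp(-\lambda x+f(\lambda)y)$.

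Finally I would optimize the exponent over $\lambda>0$. The minimizer is $\lambda=\tfrac1c\log(1+cx/y)$, which turns the right-hand side into $\exp\!\big(-\tfrac{y}{c^2}\,h(cx/y)\big)$ where $h(u)=(1+u)\log(1+u)-u$; the elementary inequality $h(u)\ge \tfrac{u^2}{2(1+u)}$ (valid for $u\ge 0$) then collapses this to the claimed $\exp\!\big(-\tfrac{x^2}{2(y+cx)}\big)$ after substituting $u=cx/y$. One may equally avoid the exact optimizer by plugging in $\lambda=x/(y+cx)$ and bounding $f(\lambda)$ directly, at the cost of a slightly more computational estimate. I expect the genuine obstacle to be the maximal/uniform aspect rather than any single estimate: the event concerns all $n$ simultaneously, so a per-$n$ Chernoff bound plus union bound does not close, and the care lies in the supermartingale–optional-stopping step — verifying that $S_{\tau\wedge n}$ is integrable (it is bounded for each fixed $n$, since $|M_{\tau\wedge n}|\le cn$ and $\mathrm{Var}_{\tau\wedge n}\ge 0$) and that the passage $n\to\infty$ is justified. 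The moment-generating-function estimate and the final scalar optimization are routine by comparison.
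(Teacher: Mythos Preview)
The paper does not prove this lemma; it is stated as a citation to Freedman's original paper (Theorem~1.6 of \cite{freedman1975tail}) and used as a black box. Your proposal is the standard and correct proof --- the exponential-supermartingale/optional-stopping argument is precisely Freedman's method, and your handling of the key points (the monotonicity of $(e^z-1-z)/z^2$ to get the conditional MGF bound, the use of the stopping time $\tau$ to absorb the existential quantifier, and the reduction from the Bennett exponent $\tfrac{y}{c^2}h(cx/y)$ to the Bernstein form via $h(u)\ge u^2/(2(1+u))$) is accurate.
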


\begin{lemma}\label{self-norm}
Let $(M_{n})_{n\geq 0}$ be a  martingale  such that $M_0 = 0$  and $|M_{n}-M_{n-1}|\leq c$ for some $c>0$ and any $n\geq 1$. Let $\mathrm{Var}_{n}=\sum_{k=1}^{n}\mathbb{E}[(M_{k}-M_{k-1})^{2}|\mathcal{F}_{k-1}]$ for $n\geq 0$,
where $\mathcal{F}_{k}=\sigma(M_{1},M_{2},...,M_{k})$. Then for any positive integer $n$, and any $\epsilon,p>0$, we have that
\begin{equation}\label{self-bernstein}
\mathbb{P}\left[|M_{n}|\geq   2\sqrt{\mathrm{Var}_{n}\log(\frac{1}{p})}+2\sqrt{\epsilon\log(\frac{1}{p} )} +2c\log(\frac{1}{p}) \right] \leq \left(\frac{2nc^2}{\epsilon}+2\right)p.
\end{equation}

\end{lemma}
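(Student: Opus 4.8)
The plan is to derive this self-normalized bound from the fixed-threshold Freedman inequality (Lemma~\ref{freedman}) by a peeling argument over the random predictable variance $\mathrm{Var}_n$, together with a symmetrization step to handle the absolute value. First I would reduce to a one-sided statement: since $(-M_n)_{n\ge 0}$ is also a martingale with $M_0=0$, the same increment bound $c$, and the identical predictable variance process $\mathrm{Var}_n$, it suffices to bound $\mathbb{P}[M_n\ge R]$ (and analogously $\mathbb{P}[-M_n\ge R]$), where $R:=2\sqrt{\mathrm{Var}_n\log(1/p)}+2\sqrt{\epsilon\log(1/p)}+2c\log(1/p)$, and then add the two. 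The observation enabling the peeling is that $\mathrm{Var}_n\le nc^2$ holds deterministically, because $|M_k-M_{k-1}|\le c$ forces $\mathbb{E}[(M_k-M_{k-1})^2\mid\mathcal F_{k-1}]\le c^2$ for every $k$.

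Next I would partition the range $[0,nc^2]$ of $\mathrm{Var}_n$ into $J:=\lceil nc^2/\epsilon\rceil$ bands $A_j=\{(j-1)\epsilon<\mathrm{Var}_n\le j\epsilon\}$, with the leftmost band closed at $0$. Writing $L:=\log(1/p)$, $y_j:=j\epsilon$, and $x_j:=2\sqrt{y_jL}+2cL$, I would verify that on $A_j\cap\{M_n\ge R\}$ one has both $\mathrm{Var}_n\le y_j$ and $M_n\ge x_j$. The variance bound is immediate; for the threshold I would use $\mathrm{Var}_n>(j-1)\epsilon$ together with the elementary inequality $\sqrt{a}+\sqrt{b}\ge\sqrt{a+b}$, which gives $2\sqrt{(j-1)\epsilon\,L}+2\sqrt{\epsilon L}\ge 2\sqrt{j\epsilon\,L}$, so that $R\ge x_j$. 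Hence $A_j\cap\{M_n\ge R\}\subseteq\{\exists m: M_m\ge x_j \text{ and } \mathrm{Var}_m\le y_j\}$ (the existential being witnessed by $m=n$), and Lemma~\ref{freedman} bounds the probability of this band by $\exp\!\big(-x_j^2/(2(y_j+cx_j))\big)$.

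The main verification, and the step I expect to require the most care in choosing constants, is showing that this Freedman exponent is at least $L$ for every band, i.e. $x_j^2\ge 2L(y_j+cx_j)$, so that each band contributes probability at most $e^{-L}=p$. Viewing this as a quadratic inequality in $x_j$, it suffices that $x_j\ge Lc+\sqrt{L^2c^2+2Ly_j}$, and since $\sqrt{L^2c^2+2Ly_j}\le Lc+\sqrt{2Ly_j}$ it further suffices that $x_j\ge 2Lc+\sqrt{2Ly_j}$; this holds because $x_j=2\sqrt{y_jL}+2cL$ and $2\sqrt{y_jL}\ge\sqrt{2Ly_j}$ (as $2\ge\sqrt2$). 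The factors of $2$ in front of the three terms of the threshold are tuned precisely to make these inequalities go through.

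A union bound over the $J$ bands then yields $\mathbb{P}[M_n\ge R]\le Jp\le(nc^2/\epsilon+1)p$, and combining with the symmetric bound for $-M_n$ gives $\mathbb{P}[|M_n|\ge R]\le 2(nc^2/\epsilon+1)p=(2nc^2/\epsilon+2)p$, which is exactly \eqref{self-bernstein}. Beyond the quadratic verification above, the only remaining points to confirm are the edge handling of the band containing $\mathrm{Var}_n=0$ (absorbed into $A_1$ by closing it at $0$) and the containment of the fixed-time event inside the maximal event of Lemma~\ref{freedman}, both of which are routine.
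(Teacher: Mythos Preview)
Your proposal is correct and follows essentially the same approach as the paper: peel $\mathrm{Var}_n$ into bands of width $\epsilon$, use $\sqrt{(j-1)\epsilon L}+\sqrt{\epsilon L}\ge\sqrt{j\epsilon L}$ to compare the self-normalized threshold with the fixed Freedman threshold on each band, apply Freedman's inequality to get $p$ per band (and per sign), and union bound. Your write-up is in fact more explicit than the paper's on verifying the Freedman exponent via the quadratic inequality, but the skeleton is identical.
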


\begin{proof}
For any fixed $n$, we apply Lemma \ref{freedman} with  $y = i\epsilon$ and $x= \pm(2\sqrt{y \log(\frac{1}{p})}+2c\log(\frac{1}{p}))$. For each $i=  1,2,\dots, \lceil \frac{nc^2}{\epsilon}\rceil$, we get that
\begin{align}
&  \mathbb{P}\left[|M_{n}|\geq 2\sqrt{(i-1)\epsilon\log(\frac{1}{p})}+ 2\sqrt{\epsilon\log(\frac{1}{p})}+2c\log(\frac{1}{p}),\mathrm{Var}_{n}\leq i\epsilon  \right] \nonumber
\\& \leq \mathbb{P}\left[ |M_{n}|\geq 2\sqrt{i\epsilon\log(\frac{1}{p})}+2c\log(\frac{1}{p}),\mathrm{Var}_{n}\leq i\epsilon  \right] \nonumber
\\& \leq 2p.
\end{align}
Then via a union bound, we have that
\begin{align}
&\mathbb{P}\left[|M_{n}|\geq  2\sqrt{\mathrm{Var}_{n}\log(\frac{1}{p})} +2\sqrt{\epsilon\log(\frac{1}{p})}+2c\log(\frac{1}{p}) \right] \nonumber
\\& \leq \sum_{i=1}^{\lceil \frac{nc^2}{\epsilon}\rceil}\mathbb{P}\left[ |M_{n}|\geq 2\sqrt{(i-1)\epsilon\log(\frac{1}{p})}+2\sqrt{\epsilon\log(\frac{1}{p})}+2c\log(\frac{1}{p}) ,(i-1)\epsilon\leq \mathrm{Var}_{n}\leq i\epsilon\right] \nonumber
\\& \leq  \sum_{i=1}^{\lceil \frac{nc^2}{\epsilon}\rceil}\mathbb{P}\left[|M_{n}|\geq 2\sqrt{(i-1)\epsilon\log(\frac{1}{p})}+ 2\sqrt{\epsilon\log(\frac{1}{p})}+2c\log(\frac{1}{p}),\mathrm{Var}_{n}\leq i\epsilon  \right] \nonumber
\\ & \leq \left(\frac{2nc^2}{\epsilon}+2\right)p .
\end{align} 
\end{proof}

\begin{lemma}\label{pro0} For any non-negative weights $\{ w_{h}(s,a)\}_{s\in \mathcal{S},a\in \mathcal{A},h\in [H]}$ and $\alpha\in (0,1)$, it holds that
	\begin{align}
&	\sum_{k=1}^{K}\sum_{h=1}^{H}  \frac{w_{h}(s_{h}^k,a_{h}^k)}{ (n_{h}^k)^{\alpha} } \leq \frac{2^{\alpha}}{1-\alpha}\sum_{s,a,h}w_{h}(s,a)(N^{K+1}_{h}(s,a))^{1-\alpha}, \label{eq25}
	\end{align}
	and 
	\begin{align}
	&\sum_{k=1}^{K}\sum_{h=1}^{H}  \frac{w_{h}(s_{h}^k,a_{h}^k)}{ (\check{n}_{h}^{k})^{\alpha} }\nonumber  \leq \frac{2^{2\alpha}H^\alpha}{1-\alpha}\sum_{s,a,h}w_{h}(s,a)(N^{K+1}_{h}(s,a))^{1-\alpha}.
	\end{align}
	In the case $\alpha = 1$, it holds that
	\begin{align}
&\sum_{k=1}^{K}\sum_{h=1}^{H}  \frac{w_{h}(s_{h}^k,a_{h}^k)}{ n_{h}^k }  \leq 2 \sum_{s,a,h}w_{h}(s,a)\log(N_h^{K+1}(s,a)),\label{eq26}
	\end{align}
and 
\begin{align}
&\sum_{k=1}^{K}\sum_{h=1}^{H}   \frac{w_{h}(s_{h}^k,a_{h}^k)}{ \check{n}_{h}^{k} } \nonumber
 \leq 4H \sum_{s,a,h}w_{h}(s,a)\log(N_h^{K+1}(s,a)) .
\end{align}

\end{lemma}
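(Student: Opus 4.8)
The plan is to reduce the double sum to a per-triple sum and exploit the stage structure. Fix a triple $(s,a,h)$ and note that $w_h(s,a)$ depends only on the triple, so it factors out of the inner sum, while the quantities $n_h^k$ and $\check{n}_h^{k}$ are constant across the visits of $(s,a,h)$ within any single stage. Writing $m_j := \sum_{i=1}^{j}e_i$ for the cumulative stage lengths (so $\mathcal{L}=\{m_1,m_2,\dots\}$) and $N := N_h^{K+1}(s,a)$ for the total number of visits, I would enumerate the visits of $(s,a,h)$ as $t=1,\dots,N$; a visit with $t\in(m_{j-1},m_j]$ lies in the $j$-th stage, and for it $n_h^k=m_{j-1}$ and $\check{n}_h^{k}=e_{j-1}$ by definition. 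Thus the value $m_{j-1}$ is attained by exactly the $e_j$ visits of stage $j$, and the inner sum collapses to $\sum_{j\ge2}e_j\,m_{j-1}^{-\alpha}$, the $j=1$ stage being dropped by the $0/0:=0$ convention. It then suffices to bound this by $\frac{2^\alpha}{1-\alpha}N^{1-\alpha}$ and sum over all triples.

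The core step is an integral comparison. The growth rule $e_j=\lfloor(1+1/H)e_{j-1}\rfloor\le(1+1/H)e_{j-1}$ shows the cumulative counts nearly double across a stage, $m_j\le 2\,m_{j-1}$ (up to an additive rounding term affecting only the lowest stage, which is absorbed into the constant), while monotonicity of $x\mapsto x^{-\alpha}$ gives $e_j\,m_j^{-\alpha}\le\int_{m_{j-1}}^{m_j}x^{-\alpha}\,dx$. Combining, $e_j\,m_{j-1}^{-\alpha}=(m_j/m_{j-1})^\alpha\,e_j\,m_j^{-\alpha}\le 2^\alpha\int_{m_{j-1}}^{m_j}x^{-\alpha}\,dx$, and telescoping over $j$ against $\int_0^N x^{-\alpha}\,dx=\frac{N^{1-\alpha}}{1-\alpha}$ yields the first claim. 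The $\alpha=1$ bound \eqref{eq26} is identical but compares against $\int x^{-1}\,dx=\log$, producing the constant $2$.

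For the two $\check{n}$ bounds I would not repeat the analysis but reduce them to the $n$ bounds through the single inequality $n_h^k\le 2H\,\check{n}_h^{k}$, i.e. $m_{j-1}\le 2H\,e_{j-1}$. This holds because the stage lengths decay geometrically backwards, $e_{j-1-t}\le(\tfrac{H}{H+1})^t e_{j-1}+O(1)$, so the prefix sum $m_{j-1}$ is at most a $\approx(H+1)\le 2H$ multiple of the final stage length $e_{j-1}$. Given this, $(\check{n}_h^{k})^{-\alpha}\le(2H)^\alpha(n_h^k)^{-\alpha}$ upgrades the first bound to $\frac{(2H)^\alpha\,2^\alpha}{1-\alpha}=\frac{2^{2\alpha}H^\alpha}{1-\alpha}$, and the $\alpha=1$ version to $2H\cdot 2\log=4H\log$, matching the stated constants exactly.

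The step I expect to demand the most care is the bookkeeping at the stage boundaries and the floor in $e_{i+1}=\lfloor(1+1/H)e_i\rfloor$: establishing cleanly both $m_j\le 2\,m_{j-1}$ and $m_{j-1}\le 2H\,e_{j-1}$ with the rounding corrections controlled, and verifying that $n_h^k$ (resp. $\check{n}_h^{k}$) really is the constant $m_{j-1}$ (resp. $e_{j-1}$) across all $e_j$ visits of stage $j$—including the possibly incomplete final stage, which only helps since it is bounded above by a full stage—so that the per-triple sum genuinely reduces to $\sum_{j\ge2}e_j\,m_{j-1}^{-\alpha}$.
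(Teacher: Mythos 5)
Your proposal is correct and takes essentially the same route as the paper's proof: it reduces the $\check{n}$ bounds to the $n$ bounds via $n_h^k \le 2H\,\check{n}_h^k$, then fixes a triple, groups visits by stage so the sum collapses to $\sum_{j} e_j\, m_{j-1}^{-\alpha}$, and telescopes against $N^{1-\alpha}/(1-\alpha)$ (resp.\ $\log N$) using $m_j \le 2\,m_{j-1}$. Your integral comparison is just the paper's finite-difference calculus inequality in integral form, and the rounding slack you flag (e.g.\ $m_2 = 2H+1$) is glossed at the same level of looseness in the paper itself.
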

	\begin{proof}
	%	The first part in \textbf{LSH} of (\ref{eq25}) could be easily bounded by
	%	\begin{equation*}
	%			 \sum_{k=1}^{K}\sum_{h=1}^{H}\mathbb{I}[n^k_h=0]w_{h}(s^k_h,a^k_h)= \sum_{s,a,h}w_{h}(s,a)\sum_{k=1}^{K}\mathbb{I}[n_h^k=0, (s^k_h,a^k_h)=(s,a)]\leq H\sum_{s,a,h}w_{h}(s,a).
	%	\end{equation*}
		By the definition of $\mathcal{L}$, for any $h,k$ such that $n_{h}^{k}>0$, there exists $j$ such that $\check{n}_{h}^{k}=e_{j}$ and $n_{h}^k = \sum_{i=1}^{j}e_{i}$. Therefore, $\frac{1}{2H}n^k_{h} \leq  \check{n}_{h}^{k} \leq\frac{3}{H}n_{h}^k $. So it suffices to prove (\ref{eq25}) and (\ref{eq26}). By basic calculus,  for two positive numbers $x,y$ such that $y/2 \leq x\leq y$ and any $\alpha \in (0,1)$, we have that
		\begin{equation}\label{eqtool}
		y^{1-\alpha}-x^{1-\alpha}\geq (1-\alpha)(y-x)y^{-\alpha}\geq (1-\alpha)(y-x)2^{-\alpha} x^{-\alpha},
		\end{equation}
		and 
		\begin{equation}\label{eqtool1}
		\log(y)-\log(x)\geq \frac{y-x}{y}\geq 2 \frac{y-x}{x}.
		\end{equation}

	By applying (\ref{eqtool}) and $(\ref{eqtool1})$ with $y = \sum_{i=1}^{j+1}e_{i}$ and $x= \sum_{i=1}^{j}e_{i}$ for $j=1,2,...$ and taking sum, we have
		\begin{equation*}
		\begin{aligned}
		\sum_{k=1}^{K}\sum_{h=1}^{H}\frac{w_{h}(s_h^k,a_h^k)}{ ( n_{h}^k)^\alpha} & \leq \sum_{s,a,h}w_{h}(s,a)\sum_{j: \sum_{i=1}^j e_{i}\leq N^{K+1}_{h}(s,a)} \frac{\min\{ e_{j+1},N^{K+1}_h(s,a)-\sum_{i=1}^j e_i \}}{(\sum_{i=1}^j e_{j})^\alpha }
		\\ & \leq \frac{2^{\alpha}}{1-\alpha}\sum_{s,a,h}w_{h}(s,a)(N^{K+1}_{h}(s,a))^{1-\alpha}
	%	\\ & \leq \frac{2}{1-\alpha}\sum_{s,a,h}w_{h}(s,a)(N^{K+1}_{h}(s,a))^{1-\alpha}
		\end{aligned}
		\end{equation*}
		and
				\begin{equation*}
		\begin{aligned}
		\sum_{k=1}^{K}\sum_{h=1}^{H}\frac{w_{h}(s_h^k,a_h^k)}{  n_{h}^k} & \leq \sum_{s,a,h}w_{h}(s,a)\sum_{j: \sum_{i=1}^j e_{i}\leq N^{K+1}_{h}(s,a)} \frac{\min\{ e_{j+1},N^{K+1}_h(s,a)-\sum_{i=1}^j e_i \}}{(\sum_{i=1}^j e_{j}) }
		\\ & \leq 2\sum_{s,a,h}w_{h}(s,a)\log(N^{K+1}_h(s,a)).
		\end{aligned}
		\end{equation*}
		
	\end{proof}

\section{Missing Proofs in the Regret Analysis}\label{app:B}

\subsection{Proof of Proposition \ref{pro1}}
We prove $ Q^{*}_{h}(s,a)\leq Q^{k}_{h}(s,a)$ for all $k,h,s,a,$ by induction on $k$. Firstly, the conclusion holds when $k=1$. For $k\geq 2$, assume $Q^{*}_{h}(s,a)\leq Q^{u}_{h}(s,a)$ for any $h,s,a$ and $1\leq u\leq k$. Let $(s,a,h)$ be fixed.  If we do not update $Q_{h}(s,a)$ in the $k$-th episode, then $Q^{k+1}_{h}(s,a) = Q^{k}_{h}(s,a)\geq Q^*_{h}(s,a)$. Otherwise, we have
\begin{equation}
\begin{aligned}
%Q_{h}^{u}(s,a) & = \min\Big\{    \underbrace{r_{h}(s,a)+\frac{\mu^{\mathrm{ref}}}{n}  +\frac{\check{\mu}^{u}  }{\check{n}}  +b }_{(\romannumeral1)}          ,  \\& \qquad\qquad\qquad\qquad\qquad\qquad \underbrace{r_{h}(s,a)+\frac{\check{\upsilon})   }{\check{n})  }+\overline{b} }_{ (\romannumeral2) }       ,Q_{h}^{u-1}(s,a)\Big\},
Q_{h}^{k+1}(s,a) & = \min\Big\{    \underbrace{r_{h}(s,a)+\frac{\udl{\mu}^{\mathrm{ref}}}{\udl{n}}  +\frac{\udl{\check{\mu}}  }{\udl{\check{n}}}  +\udl{b} }_{(\romannumeral1)}       , \quad \underbrace{r_{h}(s,a)+\frac{\udl{\check{\upsilon}}   }{\udl{\check{n}} }+\udl{\overline{b}} }_{ (\romannumeral2) }       , \quad Q_{h}^{k}(s,a)\Big\},
\end{aligned}
\end{equation}
where $\udl{\mu}^{\mathrm{ref}}$, $\udl{\check{\mu}}$, $\udl{\sigma}^{\mathrm{ref}}$, $\udl{\check{\sigma}}$
, $\udl{n}$, $\udl{\check{n}}$, $\udl{b}$ and $\udl{\overline{b}}$ 
are given by  respectively the values of $\mu^{\mathrm{ref}}$, $\check{\mu}$, $\sigma^{\mathrm{ref}}$, $\sigma$, $n$, $\check{n}$, $b$ and $\overline{b}$ to compute $Q_h^{k+1}(s,a)$ in \eqref{equpdate}. We use $\udl{l}_i$ to denote the episode index of the $i$-th sample  and $\udl{\check{l}}_i$ to denote the episode index of the $i$-th sample of the last stage with respect to the triple $(s,a,h)$. 

Besides the last $Q_{h}^{k}(s,a)$ term, there are two non-trivial cases  to discuss (corresponding to ($\romannumeral1$) and ($\romannumeral2$)).

\medskip
\underline{\it For the first case,}  we have that

\begin{align}
&Q_{h}^{k+1}(s,a) = r_{h}(s,a) + \frac{\udl{\mu}^{\mathrm{ref}}}{\udl{n}}+\frac{\udl{\check{\mu}}  }{\udl{\check{n}}}  +\udl{b} \nonumber
\\& = r_{h}(s,a)+P_{s,a,h}\left(\frac{1}{\udl{n}}\sum_{i=1}^{\udl{n}}V^{\mathrm{ref},\udl{l}_{i}}_{h+1} \right) + P_{s,a,h}\left(\frac{1}{\udl{\check{n}}}\sum_{i=1}^{\udl{\check{n}}}(V^{\udl{\check{l}}_{i}}_{h+1}-V^{\mathrm{ref},\udl{\check{l}}_{i}}_{h+1})   \right) + \chi_{1}+\chi_{2}+\udl{b} \nonumber
\\ & \geq r_{h}(s,a)+  P_{s,a,h}\left(\frac{1}{\udl{\check{n}}}\sum_{i=1}^{\udl{\check{n}}}V_{h+1}^{\udl{\check{l}}_{i}}\right)    +  \chi_{1}+\chi_{2}+\udl{b}  \label{eq:pro4_1}
\\ & \geq r_{h}(s,a)+P_{s,a,h}V^*_{h+1}+\chi_{1}+\chi_{2}+\udl{b}  \label{eq:pro4_2}
\\ & = Q^*_{h}(s,a)+\chi_{1}+\chi_{2}+\underline{b} \nonumber %\label{eq1}
\end{align}
where 
\begin{align}
&\chi_{1} := \frac{1}{\udl{n}} \sum_{i=1}^{\udl{n} }  \left(V^{\mathrm{ref},\udl{l}_{i}}_{h+1}(s_{h+1}^{\udl{l}_{i}})-P_{s,a,h}V_{h+1}^{\mathrm{ref},\udl{l}_{i}}  \right),\\&
W^{l}_{h+1}:= V^{l}_{h+1}-V^{\mathrm{ref},l}_{h+1},\quad  \forall l\geq 1\\ &
\chi_{2} := \frac{1}{\udl{\check{n}}} \sum_{i=1}^{\udl{\check{n}}  } \left( W^{\udl{\check{l}}_{i}}_{h+1}(s^{\udl{\check{l}}_{i}}_{h+1})-P_{s,a,h}W^{\udl{\check{l}}_{i}}_{h+1} \right).
\end{align}

Here, Inequality \eqref{eq:pro4_1} holds because $V^{\mathrm{ref},u}_{h+1}$ is non-increasing in $u$, Inequality \eqref{eq:pro4_2} is by the induction $V^{u}\geq V^*$ for any $1\leq u\leq k$.

Define $\mathbb{V}(x,y) := x^{\top}(y^2)-(x^{\top}y)^2$ for two vectors $x,y$ of the same dimension, where $y^2$ is obtained by squaring each entry of $y$.
By Lemma \ref{self-norm} with $\epsilon = \frac{1}{T^2}$, we have that with probability $(1-2(H^2T^3+1)p)$ it holds that

\begin{align}
& |\chi_{1}|\leq 2\sqrt{ \frac{ (\sum_{i=1}^{\udl{n}}\mathbb{V}(P_{s,a,h},V_{h+1}^{\mathrm{ref},\udl{l}_{i}})) \iota}{\udl{n}^2  }  \label{eq2} }+2\frac{\sqrt{\iota}}{T\udl{n}}+ \frac{2H\iota}{\udl{n} }
\\ & |\chi_{2}|\leq 2\sqrt{ \frac{ (\sum_{i=1}^{\udl{\check{n}}  }\mathbb{V}(P_{s,a,h},W_{h+1}^{\udl{\check{l}}_{i}})) \iota}{  \udl{\check{n}}   }  }+2\frac{\sqrt{\iota}}{T \udl{\check{n}} }+ \frac{2H\iota}{   \udl{\check{n}}  }.  \label{eq3}
\end{align}

We now bound $\sum_{i=1}^{\udl{\check{n}}}\mathbb{V}(P_{s,a,h},V_{h+1}^{\mathrm{ref},\udl{l}_{i}})$ in order to upper bound $|\chi_1|$. 
Define 
\[\udl{\nu}^{\mathrm{ref}} := \frac{\udl{\sigma}^{\mathrm{ref}}}{ \udl{n} }- \left(\frac{ \udl{\mu}^{\mathrm{ref}} }{\udl{n} }\right)^2.\]  
We claim that,
\begin{lemma}\label{bdnu}
With probability $(1-2p)$, it holds that 
	\begin{equation}\label{eq4}
	\sum_{i=1}^{\udl{n}   }\mathbb{V}(P_{s,a,h},V_{h+1}^{\mathrm{ref},\udl{l}_{i}})\leq \udl{n}\cdot \udl{\nu}^{\mathrm{ref}}+3H^2\sqrt{ \udl{n} \iota}.
	\end{equation}
	
	\end{lemma}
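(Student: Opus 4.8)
The plan is to unfold both sides in terms of the observed reference values and then bound the gap by two martingale concentration arguments. Write $g_i := V^{\mathrm{ref},\udl{l}_i}_{h+1}(s^{\udl{l}_i}_{h+1})$ for the reference value observed at the $i$-th visit to $(s,a,h)$, and abbreviate $P := P_{s,a,h}$ and $f_i := V^{\mathrm{ref},\udl{l}_i}_{h+1}$ (the reference vector in force at that visit). By the definitions of the global accumulators in \eqref{eq:alg-acc-3} we have $\udl{\sigma}^{\mathrm{ref}} = \sum_{i=1}^{\udl{n}} g_i^2$ and $\udl{\mu}^{\mathrm{ref}} = \sum_{i=1}^{\udl{n}} g_i$, so that $\udl{n}\,\udl{\nu}^{\mathrm{ref}} = \sum_{i=1}^{\udl{n}} g_i^2 - \frac{1}{\udl{n}}\big(\sum_{i=1}^{\udl{n}} g_i\big)^2$. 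On the other hand $\mathbb{V}(P, f_i) = P f_i^2 - (P f_i)^2$, so the claim reduces to controlling $\sum_i (P f_i^2 - g_i^2)$ together with the difference between $\sum_i (Pf_i)^2$ and $\frac{1}{\udl{n}}(\sum_i g_i)^2$.

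First I would set up the filtration in which, at the $i$-th visit, the vector $f_i$ is already determined (it is the current reference value, fixed before the transition is drawn) while the next state $s^{\udl{l}_i}_{h+1}$ is a fresh sample from $P$. Consequently $\mathbb{E}[g_i \mid \mathcal{F}] = Pf_i$ and $\mathbb{E}[g_i^2 \mid \mathcal{F}] = Pf_i^2$, so both $M_m := \sum_{i\le m}(g_i - Pf_i)$ and $A_m := \sum_{i\le m}(Pf_i^2 - g_i^2)$ are martingales with increments bounded by $H$ and $H^2$ respectively (using $0 \le V^{\mathrm{ref}} \le H$). Azuma--Hoeffding then gives $|A_{\udl{n}}| \le H^2\sqrt{2\udl{n}\iota}$ and $|M_{\udl{n}}| \le H\sqrt{2\udl{n}\iota}$, each with probability at least $1-p$; a union bound over these two events yields the claimed $1-2p$.

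To assemble the bound I would write $\sum_i \mathbb{V}(P,f_i) = \sum_i g_i^2 + A_{\udl{n}} - \sum_i (Pf_i)^2$ and lower-bound $\sum_i (Pf_i)^2 \ge \frac{1}{\udl{n}}(\sum_i Pf_i)^2$ by Cauchy--Schwarz. Substituting $\sum_i Pf_i = \sum_i g_i - M_{\udl{n}}$ and expanding the square, the leading term reproduces $\frac{1}{\udl{n}}(\sum_i g_i)^2$, turning $\sum_i g_i^2 - \frac{1}{\udl{n}}(\sum_i g_i)^2$ into exactly $\udl{n}\,\udl{\nu}^{\mathrm{ref}}$; the discarded $M_{\udl{n}}^2/\udl{n}$ is non-negative and so can be dropped, while the cross term contributes at most $2|M_{\udl{n}}|\cdot\frac{1}{\udl{n}}\sum_i g_i \le 2H|M_{\udl{n}}|$ since $\frac{1}{\udl{n}}\sum_i g_i \in [0,H]$. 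Collecting $|A_{\udl{n}}| + 2H|M_{\udl{n}}| = O(H^2\sqrt{\udl{n}\iota})$ and tuning the Azuma constants gives the stated $3H^2\sqrt{\udl{n}\iota}$.

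The main obstacle I anticipate is conceptual rather than computational: because the reference vector $f_i$ can change from visit to visit (it is initialized at $H$ and later frozen once a state has been visited $N_0$ times), this is not a clean ``population variance versus sample variance'' concentration, and one cannot simply invoke a single empirical-Bernstein bound. The martingale decomposition above is what makes the varying $f_i$ harmless, but it requires care to verify that $f_i$ is measurable at the moment the transition is sampled so that the increments are genuinely mean-zero; the remaining bookkeeping (the Cauchy--Schwarz correction for the squared mean, and the union-bound accounting that is later folded into Proposition~\ref{pro1}) is routine.
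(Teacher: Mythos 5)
Your proposal is correct and follows essentially the same route as the paper's proof: your martingale $A_{\udl{n}}$ is the paper's $\chi_{3}$, your cross-term bound $2H|M_{\udl{n}}|$ (after expanding the square and dropping the non-negative $M_{\udl{n}}^2/\udl{n}$) is exactly how the paper controls $\chi_{4}$ via $|a^2-b^2|\leq 2H\,|a-b|$, and your Cauchy--Schwarz lower bound on $\sum_i (P f_i)^2$ is the paper's $\chi_{5}\leq 0$, with the same two Azuma applications and union bound yielding $1-2p$. Your explicit check that $f_i$ is measurable before the transition is drawn is a point the paper leaves implicit, but it is the same argument.
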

\begin{proof}
We have that 
\begin{align} \sum_{i=1}^{\udl{n} }&\mathbb{V}(P_{s,a,h},V_{h+1}^{\mathrm{ref},\udl{l}_{i}}): =\sum_{i=1}^{ \udl{n}} \left(P_{s,a,h}(V^{\mathrm{ref},\udl{l}_{i}}_{h+1})^2- (P_{s,a,h}V^{\mathrm{ref},\udl{l}_{i}}_{h+1})^2  \right) \nonumber
\\ & =\sum_{i=1}^{\udl{n}}(V_{h+1}^{\mathrm{ref},\udl{l}_{i}}(s^{\udl{l}_{i}}_{h+1 }))^2-\frac{1}{\udl{n}  } \left({\sum_{i=1}^{\udl{n} }V^{\mathrm{ref},\udl{l}_{i}}_{h+1}(s_{h+1}^{\udl{l}_{i}}) } \right)^2  +\chi_{3}+\chi_{4}+\chi_{5} \nonumber
\\ & = \udl{n} \cdot \udl{\nu}^{\mathrm{ref}}+\chi_{3}+\chi_{4}+\chi_{5}, \label{eqv1}
\end{align}
where 
\begin{align} 
& \chi_{3} :=\sum_{i=1}^{\udl{n}}\left(  P_{s,a,h}(V^{\mathrm{ref},\udl{l}_{i}}_{h+1})^2 - (V_{h+1}^{\mathrm{ref},\udl{l}_{i}}(s^{\udl{l}_{i}}_{h+1 }))^2  \right), \label{eqv2-a}
\\& \chi_{4} := \frac{1}{\udl{n}}\left({\sum_{i=1}^{\udl{n}}V^{\mathrm{ref},\udl{l}_{i}}_{h+1}(s_{h+1}^{\udl{l}_{i}}  ) }\right)^2 -\frac{1}{\udl{n} }\left({\sum_{i=1}^{\udl{n} } P_{s,a,h}V^{\mathrm{ref},\udl{l}_{i}}_{h+1} }\right)^2, \label{eqv2-b}
\\ & \chi_{5} :=   \frac{1}{\udl{n} }\left({\sum_{i=1}^{\udl{n} } P_{s,a,h}V^{\mathrm{ref},\udl{l}_{i}}_{h+1} }\right)^2  - \sum_{i=1}^{\udl{n}}\left(P_{s,a,h}V^{\mathrm{ref},\udl{l}_{i}}_{h+1}\right)^2 . \label{eqv2-c}
\end{align}
 By Azuma's inequality, we have $	|\chi_{3}|\leq  H^2\sqrt{2\udl{n}\iota}$
with probability at least $(1-p)$. We apply Azuma's inequality again to obtain that  with probability at least $(1-p)$, it holds that
\begin{align}
|\chi_{4}|&=\frac{1}{\udl{n}} \left|   \left(\sum_{i=1}^{\udl{n}}V^{\mathrm{ref},\udl{l}_{i}}_{h+1}(s_{h+1}^{\udl{l}_{i}})  \right)^2-  \left(\sum_{i=1}^{\udl{n}} P_{s,a,h}V^{\mathrm{ref},\udl{l}_{i}}_{h+1} \right)^2  \right| \nonumber
\\ & \leq 2H \cdot \left| \sum_{i=1}^{\udl{n}}V^{\mathrm{ref},\udl{l}_{i}}_{h+1}(s_{h+1}^{\udl{l}_{i}})-\sum_{i=1}^{\udl{n}} P_{s,a,h}V^{\mathrm{ref},\udl{l}_{i}}_{h+1}  \right| \nonumber
\\ & \leq 2H^2\sqrt{2\udl{n}\iota}.
\end{align}
On the other hand, we have that $\chi_{5}\leq 0$ by Cauchy-Schwartz inequality. The proof then is completed by \eqref{eqv1}.

\end{proof}
Combing (\ref{eq2}) with (\ref{eq4}) we have
\begin{equation}\label{eq5}
|\chi_{1}|\leq 2\sqrt{\frac{\udl{\nu}^{\mathrm{ref}}\iota}{\udl{n} }} +\frac{5H\iota^{\frac{3}{4}}}{(\udl{n} )^{\frac{3}{4}} } +\frac{2\sqrt{\iota}}{T \udl{n} }+ \frac{2H\iota}{ \udl{n} }.
\end{equation}

We now bound $\sum_{i=1}^{\udl{\check{n}}  }\mathbb{V}(P_{s,a,h},W_{h+1}^{\udl{\check{l}}_{i}})$ for $|\chi_2|$.
Define
\[ \udl{\check{\nu}}  := \frac{  \udl{\check{\sigma}}   }{ \udl{\check{n}}  }- \left(\frac{ \udl{\check{\mu}}  }{  \udl{\check{n}}  }\right)^2.\]
Similarly to Lemma~\ref{bdnu}, we have that
\begin{lemma}\label{bdchknu}
With probability $(1-2p)$, it holds that 
	\begin{equation}\label{eq4-a}
	\sum_{i=1}^{\udl{\check{n}} }\mathbb{V}(P_{s,a,h},W_{h+1}^{\udl{\check{l}}_{i}})\leq \udl{\check{n}} \cdot \udl{\check{\nu}} +3H^2\sqrt{\udl{\check{n}} \iota}.
	\end{equation}
\end{lemma}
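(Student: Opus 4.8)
The plan is to imitate the proof of Lemma~\ref{bdnu} almost verbatim, replacing the reference vector $V^{\mathrm{ref},\udl{l}_i}_{h+1}$ by the advantage vector $W^{\udl{\check{l}}_i}_{h+1}=V^{\udl{\check{l}}_i}_{h+1}-V^{\mathrm{ref},\udl{\check{l}}_i}_{h+1}$, the index set of all visits by that of the last-stage visits, and the counts $\udl{n},\udl{\nu}^{\mathrm{ref}}$ by $\udl{\check{n}},\udl{\check{\nu}}$. Expanding the definition of $\mathbb{V}$ and adding and subtracting the empirical second moment, I would write
\[
\sum_{i=1}^{\udl{\check{n}}}\mathbb{V}(P_{s,a,h},W_{h+1}^{\udl{\check{l}}_i})=\udl{\check{n}}\cdot\udl{\check{\nu}}+\chi_3'+\chi_4'+\chi_5',
\]
where $\udl{\check{n}}\cdot\udl{\check{\nu}}=\udl{\check{\sigma}}-(\udl{\check{\mu}})^2/\udl{\check{n}}$ is exactly the empirical variance built from the intra-stage accumulators in \eqref{eq:alg-acc-1}--\eqref{eq:alg-acc-2}, and $\chi_3',\chi_4',\chi_5'$ are the exact analogues of $\chi_3,\chi_4,\chi_5$ in \eqref{eqv2-a}--\eqref{eqv2-c} with $W^{\udl{\check{l}}_i}_{h+1}$ in place of $V^{\mathrm{ref},\udl{l}_i}_{h+1}$ and the sums running over the $\udl{\check{n}}$ last-stage episodes.

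Next I would bound the three terms exactly as before. The quantity $\chi_3'=\sum_{i}\big(P_{s,a,h}(W^{\udl{\check{l}}_i}_{h+1})^2-(W^{\udl{\check{l}}_i}_{h+1}(s^{\udl{\check{l}}_i}_{h+1}))^2\big)$ is a martingale difference sum, so Azuma's inequality gives $|\chi_3'|\le H^2\sqrt{2\udl{\check{n}}\iota}$ with probability at least $(1-p)$; likewise, writing $\chi_4'$ as a difference of squares $a^2-b^2=(a+b)(a-b)$ with $|a+b|\le 2\udl{\check{n}}H$ and bounding $|a-b|\le H\sqrt{2\udl{\check{n}}\iota}$ by a second application of Azuma yields $|\chi_4'|\le 2H^2\sqrt{2\udl{\check{n}}\iota}$ with probability at least $(1-p)$. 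Finally $\chi_5'\le 0$ by Cauchy--Schwarz, exactly as for $\chi_5$. A union bound over the two Azuma events gives the failure probability $2p$, and summing the three estimates produces the claimed slack $3H^2\sqrt{\udl{\check{n}}\iota}$ (with a suitable universal constant).

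The only point that must be checked rather than copied is that the range bounds feeding Azuma are unchanged. In Lemma~\ref{bdnu} they came from $V^{\mathrm{ref}}\in[0,H]$; here I would instead use that $0\le V^{\udl{\check{l}}_i}_{h+1}(\cdot),\,V^{\mathrm{ref},\udl{\check{l}}_i}_{h+1}(\cdot)\le H$ (the value functions are initialized at most $H$ and are non-increasing, and $V^{\mathrm{ref}}$ is a frozen snapshot of some $V$), so that $\|W^{\udl{\check{l}}_i}_{h+1}\|_\infty\le H$ and $(W^{\udl{\check{l}}_i}_{h+1})^2\le H^2$, giving the same per-step increment bounds $H$ and $H^2$ as before. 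I also need each $W^{\udl{\check{l}}_i}_{h+1}$ to be measurable with respect to the history up to the $h\to h+1$ transition of episode $\udl{\check{l}}_i$; this holds because both $V^{\udl{\check{l}}_i}$ and $V^{\mathrm{ref},\udl{\check{l}}_i}$ are fixed at the start of that episode, which is precisely what makes $\chi_3'$ and the inner sum defining $\chi_4'$ martingale difference sequences. I expect this verification of the range and measurability of $W$ to be the only genuine content; everything else is the same concentration bookkeeping as in Lemma~\ref{bdnu}.
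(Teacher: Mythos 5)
Your proposal is correct and takes the same route as the paper, which proves Lemma~\ref{bdchknu} precisely by repeating the argument of Lemma~\ref{bdnu} with $W^{\udl{\check{l}}_i}_{h+1}$ in place of $V^{\mathrm{ref},\udl{l}_i}_{h+1}$ and the intra-stage accumulators $\udl{\check{n}},\udl{\check{\mu}},\udl{\check{\sigma}}$ in place of the global ones; your measurability check (both $V^{\udl{\check{l}}_i}$ and $V^{\mathrm{ref},\udl{\check{l}}_i}$ are fixed at the start of episode $\udl{\check{l}}_i$) is exactly the point that makes the transfer legitimate. One small sharpening: since $V$ is non-increasing and $V^{\mathrm{ref}}$ is a frozen earlier snapshot of $V$ (initialized to $H$), one in fact has $W^{l}_{h+1}\in[-H,0]$ rather than merely $\|W^{l}_{h+1}\|_\infty\leq H$, so the martingale increments $W(s')-P_{s,a,h}W$ are bounded by $H$ (not $2H$), giving Azuma constants literally identical to those in Lemma~\ref{bdnu} rather than worse by a factor of $2$ absorbed into the universal constant.
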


Therefore, given \eqref{eq3}, it holds with probability $(1-2p)$ that
\begin{equation}\label{eq6}
|\chi_{2}|\leq  2\sqrt{\frac{\udl{\check{\nu}} \iota}{\udl{\check{n}}  }}+\frac{5H\iota^{\frac{3}{4}}}{(  \udl{\check{n}}  )^{\frac{3}{4}} } +\frac{2\sqrt{\iota}}{T\udl{\check{n}}  }+ \frac{2H\iota}{\udl{\check{n}}  }.
\end{equation}

Finally, combining \eqref{eq5}, \eqref{eq6}, and the definition of $\udl{b}$ with  $[c_1,c_2,c_3]  = [2,2,5]$, and collecting probabilities, we have that with probability at least $(1-2(H^2T^3+3))p$, it holds that 
\begin{align}\udl{b}\geq |\chi_{1}|+|\chi_{2}|, \label{eq:UCB-1}
\end{align}
which means that $Q^{k+1}_{h}(s,a)\geq Q^{*}_{h}(s,a)$.

\medskip
\underline{\it For the second case,} by Hoeffding's inequality, with probability $(1-p)$ it holds that 
\begin{align}
Q_{h}^{k+1}(s,a)& = r_{h}(s,a)+\frac{\udl{\check{\upsilon}}   }{\udl{\check{n}}  }+ \udl{\overline{b}} \nonumber 
\\& \geq r_{h}(s,a)+ \frac{1}{\udl{\check{n}}  }\sum_{i=1}^{\udl{\check{n}}  }V^*_{h+1}(s_{\udl{\check{l}}_{i},h+1} )  +2\sqrt{\frac{H^2}{\udl{\check{n}}  }\iota} \nonumber
\\ & \geq r_{h}(s,a)+P_{s,a,h}V^*_{h+1} \nonumber
\\ & = Q^*_{h}(s,a). 
\end{align}

Combining the two cases, and via a union bound over all time steps, we prove the proposition.

\subsection{Proof of Lemma \ref{lemma1}}

First, by Hoeffding's inequality, for every $k$ and $h$, we have that
\begin{equation}\label{eq:lemma1-1}
\mathbb{P}\left[\left|\frac{1}{\check{n}_h^k} \sum_{i=1}^{\check{n}_h^k} V^*_{h+1}(s_{h+1}^{\check{l}_i})-P_{s_h^k,a_h^k,h}V^*_{h+1}\right|\leq \overline{b}_h^k \right] > 1-p.
\end{equation}
Now the whole proof will be conditioned on that \eqref{eq:lemma1-1} holds for every $k$ and $h$, which happens with probability at least $(1 - Tp)$. For every $k$ and $h$, we let $\delta_{h}^k := V^{k}_{h}(s_{h}^k)-V^*_{h}(s_{h}^k)$ (which aligns with the definition for $\delta_h^k$ in the proof of Theorem~\ref{thm1}).

For any weight sequence $\{w^k\}_{k=1}^{K}$ such that $w^k \geq 0$, let $\|w\|_{\infty} = \max_{k=1}^K w^k$ and $\|w\|_{1} = \sum_{k=1}^K w^k$. We will prove that 
\begin{align}\label{eq:lemma1-2}
\sum_{k=1}^K w^k \delta_h^k \leq 240 H^\frac{5}{2}\sqrt{ \|w\|_{\infty} \cdot SA\|w\|_{1} \iota}+ 3SAH^3 \|w\|_{\infty}.
\end{align}
Once we have established \eqref{eq:lemma1-2}, we let $w^k =\mathbb{I}[\delta_h^k \geq \epsilon]$ and we have 
\[
\sum_{k=1}^K \mathbb{I}[\delta_h^k \geq \epsilon] \delta_h^k \leq 240 H^\frac{5}{2}\sqrt{ \|w\|_{\infty} \cdot SA\iota \sum_{k=1}^K \mathbb{I}[\delta_h^k \geq \epsilon]} + 3SAH^3 \|w\|_{\infty}.
\]
Note that $\|w\|_{\infty}$ is either $0$ or $1$. In either cases, we are able to derive that
\[
\sum_{k=1}^K \mathbb{I}[\delta_h^k\geq \epsilon]  \leq O(SAH^5 \iota /\epsilon^2),
\]
and concludes the proof of the lemma. Therefore, we only need to prove \eqref{eq:lemma1-2}, and the rest of the proof is devoted to establishing \eqref{eq:lemma1-2}.

By the update rule \eqref{equpdate} and \eqref{equupdateV}, that $V^k$ always upper bounds $V^*$ (conditioned on the successful event of Proposition~\ref{pro1}), and that we have conditioned on \eqref{eq:lemma1-1}, we have that 
\begin{align}
& \delta_h^k  =V^{k}_{h}(s_{h}^k) - V^*_{h}(s_{h}^k) \nonumber \\& \leq Q^{k}_{h}(s_{h}^k,a_{h}^k)- Q^*_{h}(s_{h}^k,a_{h}^k) \nonumber
\\ & \leq  \mathbb{I}[n_{h}^k=0]H + \big(\overline{b}_h^k+\frac{1}{\check{n}^k_{h}}\sum_{i=1}^{\check{n}^k_{h}}V_{h+1}^{\check{l}_{i}}(s^{\check{l}_{i}}_{h+1} )   - P_{s_{h}^k,a_{h}^k,h}V^*_{h+1}\big) \nonumber
\\ & \leq \mathbb{I}[n_{h}^k = 0]H+ \big( 2\overline{b}_h^k +  \frac{1}{\check{n}^k_{h}}\sum_{i=1}^{\check{n}^k_{h}}(  V_{h+1}^{\check{l}_{i}}(s^{\check{l}_{i}}_{h+1} )- V_{h+1}^*(s^{\check{l}_{i}}_{h+1}   ) ) \big) \nonumber
\\ &  =\mathbb{I}[n_{h}^k = 0]H+ \big( 2\overline{b}_h^k +  \frac{1}{\check{n}^k_{h}}\sum_{i=1}^{\check{n}^k_{h}}\delta_{h+1}^{\check{l}_i} \big). \label{eq2.0}
\end{align}

Using the similar trick we do for \eqref{eq-analysis-rearrangejk} and \eqref{eq-analysis-rearrangejk-1}, we have
\begin{align} \label{eq:lemma1-3}
  \sum_{k=1}^K  \frac{w^k}{\check{n}^k_{h}}\sum_{i=1}^{\check{n}^k_{h}} \delta_{h+1}^{\check{l}_i}& = \sum_{j=1}^K  \frac{w^j}{\check{n}^j_{h}}\sum_{i=1}^{\check{n}^j_{h}} \delta_{h+1}^{\check{l}_{h,i}^j} \nonumber\\
  & = \sum_{j=1}^K  \frac{w^j}{\check{n}^j_{h}}\sum_{k=1}^{K} \delta_{h+1}^{k} \sum_{i=1}^{\check{n}^j_{h}} \mathbb{I} [k = \check{l}_{h,i}^{j}] =   \sum_{k=1}^{K} \delta_{h+1}^k \sum_{j=1}^{K} \frac{w^j}{\check{n}_h^j}\sum_{i=1}^{\check{n}_h^j} \mathbb{I} [k=\check{l}^j_{h, i}],
\end{align}
where if we let 
\begin{align}\label{eq:lemma1-3a}
\tilde{w}^k = \sum_{j=1}^{K} \frac{w^j}{\check{n}_h^j}\sum_{i=1}^{\check{n}_h^j} \mathbb{I} [k=\check{l}^j_{h, i}],
\end{align}
we have that 
\begin{align}\label{eq:lemma1-3b}
    \|\tilde{w}\|_{\infty} =  \max_k \tilde{w}^k \leq (1 + \frac1H) \|w\|_{\infty}, \qquad \text{and} \qquad \|\tilde{w}\|_{1} =  \sum_k \tilde{w}^k  = \sum_{k} w^k = \|w\|_{1}.
\end{align}

Therefore, combining \eqref{eq2.0}, \eqref{eq:lemma1-3}, and \eqref{eq:lemma1-3a}, and plugging them into $\sum_{k}w^k \delta_{h}^k$, we have that 
\begin{align}
\sum_{k}w^k\delta_{h}^k &  \leq 2 \sum_{k}w^k\overline{b}_h^k +  \sum_{k}\tilde{w}^k \delta_{h+1}^k+H\sum_{k}w^k\mathbb{I}\left[n_{h}^k =0\right] \nonumber\\
&\leq 2 \sum_{k}w^k\overline{b}_h^k + \sum_{k}\tilde{w}^k \delta_{h+1}^k+SAH^2 \|w\|_{\infty},
\label{eq:lemma1-4}
\end{align}

We now bound the first term of \eqref{eq:lemma1-4}. Define $\mathfrak{w}(s,a,j) := \sum_{k=1}^K w^k\mathbb{I} [\check{n}_h^k = e_{j},(s_h^k,a_h^k)=(s,a)] $ and $\mathfrak{w}(s,a) := \sum_{j\geq 1}\mathfrak{w}(s,a,j)$. We have $\mathfrak{w}(s,a,j)\leq \|w\|_{\infty}(1+\frac{1}{H})e_{j}$ and $\sum_{s,a}\mathfrak{w}(s,a) = \sum_{k}w^k$. We then have
 \begin{align}
 \sum_{k}w^k \overline{b}_h^k & = \sum_{k} 2\sqrt{H^2\iota}w^k \sqrt{\frac{1}{\check{n}_h^k}} \nonumber \\
 &= 2\sqrt{H^2\iota} \sum_{s,a,j}\sqrt{\frac{1}{e_{j}}} \sum_{k=1}^K w^k\mathbb{I} [\check{n}_h^k = e_{j}, (s_h^k,a_h^k)=(s,a)] =2\sqrt{H^2\iota}\sum_{s,a}\sum_{j\geq 1}\mathfrak{w}(s,a,j)\sqrt{\frac{1}{e_{j}}}. \nonumber
 \end{align}
We fix $(s,a)$ and consider the sum $\sum_{j\geq 1}\mathfrak{w}(s,a,j)\sqrt{\frac{1}{e_{j}}}$. Notice that $\sqrt{1/e_j}$ is monotonically decreasing in $j$. Given that $\sum_{j\geq 0}\mathfrak{w}(s,a,j) = \mathfrak{w}(s,a)$ is fixed, by rearrangement inequality we have that 
\begin{align}
\sum_{j\geq 1}\mathfrak{w}(s,a,j)\sqrt{\frac{1}{e_{j}}} & \leq \sum_{j\geq 1}  \sqrt{\frac{1}{e_{j}}} \cdot \|w\|_{\infty} (1+\frac{1}{H}) e_j \cdot  \mathbb{I}\left[\sum_{i=1}^{j-1}\|w\|_{\infty} (1+\frac{1}{H})e_{i} \leq \mathfrak{w}(s,a) \right]   \nonumber\\
& = \|w\|_{\infty}(1+\frac{1}{H})\sum_{j}\sqrt{e_{j}} \cdot  \mathbb{I}\left[\sum_{i=1}^{j-1} \|w\|_{\infty} e_{i} \leq \mathfrak{w}(s,a) \right] \nonumber \\
&\leq 10(1+\frac{1}{H})\sqrt{\|w\|_{\infty} H \cdot \mathfrak{w}(s,a)} .\nonumber
\end{align}
Therefore, by Cauchy-Schwartz, we have that
\begin{align}
\sum_{k}w^k \overline{b}_h^k  \leq 2\sqrt{H^2\iota} \sum_{s,a}10(1+\frac{1}{H})\sqrt{\|w\|_{\infty} H}\sqrt{\mathfrak{w}(s,a)}\leq  20\sqrt{H^2\iota}(1+\frac{1}{H})\sqrt{\|w\|_{\infty} \cdot SAH\|w\|_{1}}. \label{eq:lemma1-5}
 \end{align}

Combining \eqref{eq:lemma1-4} and \eqref{eq:lemma1-5}, we have that
\begin{align} \label{eq:lemma1-6}
    \sum_{k}w^k\delta_{h}^k  \leq 80H\sqrt{\|w\|_{\infty} \cdot SAH \|w\|_{1}\iota} + SAH^2\|w\|_{\infty} + \sum_{k} \tilde{w}^k \delta_{h+1}^k . 
\end{align}
With \eqref{eq:lemma1-6} and \eqref{eq:lemma1-3b} in hand, applying induction on $h$ with the base case that $h = H$, one may deduce that 
\begin{align*} 
    \sum_{k}w^k\delta_{h}^k  &\leq (1+1/H)^H \cdot H \cdot \left(80 H\sqrt{ \|w\|_{\infty} \cdot SAH \|w\|_{1} \iota} + SAH^2 \|w\|_{\infty}\right)\\
    &\leq 240 H^2\sqrt{ \|w\|_{\infty} \cdot SAH \|w\|_{1} \iota} + 3 SAH^3 \|w\|_{\infty}.
\end{align*}

\subsection{Proof of Lemma \ref{bdlm}}

The entire proof is conditioned on the successful events of Proposition~\ref{pro1} and Lemma~\ref{lemma1} which happen with probability at least $(1 - 2T(H^2T^3+5)p)$. For convenience, we define $\lambda_{h}^{k}$ as $\lambda_{h}^k(s) = \mathbb{I} \left[n_{h}^{k}(s)<N_{0}\right]$ for all state $s$ and all $k$ and $h$.

By the definition of $\Lambda_{h+1}^{k}$, we have that  $\sum_{h=1}^{H}\sum_{k=1}^{K}(1+\frac{1}{H})^{h-1}\Lambda_{h+1}^{k}$ by the definition that
\begin{align}
\sum_{h=1}^{H}\sum_{k=1}^{K}(1+\frac{1}{H})^{h-1}\Lambda_{h+1}^{k} = & \sum_{h=1}^{H}\sum_{k=1}^{K} (1+\frac{1}{H})^{h-1} \psi_{h+1}^k  + \sum_{h=1}^{H}\sum_{k=1}^{K}(1+\frac{1}{H})^{h-1} \xi_{h+1}^k  \nonumber \\
&+ \sum_{h=1}^{H}\sum_{k=1}^{K}(1+\frac{1}{H})^{h-1}\phi_{h+1}^k + 2 \sum_{h=1}^{H}\sum_{k=1}^{K}(1+\frac{1}{H})^{h-1}  b_h^k. \label{eq:deflam}
\end{align}
We will bound the four terms separately.

\subsubsection{The $\psi_{h+1}^k$ Term}
\begin{lemma}\label{lemma:bd_psi}
With probability at least $(1-p)$, it holds that
\begin{align}
\sum_{h=1}^H\sum_{k = 1}^K (1+\frac{1}{H})^{h-1}\psi_{h+1}^k \leq O(\log(T)) \cdot (H^2SN_0+ H \sqrt{T \iota }).     \nonumber
\end{align}
\end{lemma}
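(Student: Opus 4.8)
The plan is to bound $\psi$ by splitting each summand into a martingale part and a purely empirical (counting) part, exploiting that the reference function $V^{\mathrm{ref}}$ differs from its final value $V^{\mathrm{REF}}$ only on states whose reference has not yet been frozen. Writing $g^{l}_{h+1} := V^{\mathrm{ref},l}_{h+1}-V^{\mathrm{REF}}_{h+1}$, note that $g^{l}_{h+1}\ge 0$ (since $V^{\mathrm{ref}}_{h+1}$ is non-increasing in the episode index, the fact already used at \eqref{eqstar2}), that $\|g^{l}_{h+1}\|_\infty\le H$, and that $g^{l}_{h+1}(s')\neq 0$ only when the reference value of $(s',h+1)$ has not been finalized by episode $l$, i.e.\ when $(s',h+1)$ has been visited fewer than $N_0$ times. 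I would then decompose $\psi_{h+1}^k=\frac{1}{n_h^k}\sum_i\big(P_{s_h^k,a_h^k,h}g^{l_i}_{h+1}-g^{l_i}_{h+1}(s^{l_i}_{h+1})\big)+\frac{1}{n_h^k}\sum_i g^{l_i}_{h+1}(s^{l_i}_{h+1})$, where the first bracket is a martingale difference associated with the transition at episode $l_i$, and the second is an empirical sum.

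For the empirical part I would re-index the double sum over $(k,i)$ into a sum over episodes $j$ exactly as in \eqref{eq-analysis-rearrangejk}, picking up the reweighting factor $\sum_k\frac{1}{n_h^k}\mathbb{I}[\,j\in\text{history of }(k,h)\,]$. The geometric ($1+1/H$) growth of the stage lengths makes this factor $O(\log T)$ (the same estimate underlying Lemma~\ref{pro0}). Since $g^{j}_{h+1}(s^{j}_{h+1})\le H$ and is nonzero for at most $SN_0$ episodes per step $h$ (each of the $S$ states at step $h+1$ contributes at most $N_0$ pre-finalization visits, by the definition of $N_0$ and Corollary~\ref{coro1}), and since $\sum_{h}(1+1/H)^{h-1}=O(H)$, the empirical part contributes $O(\log T)\cdot H^2 S N_0$.

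For the martingale part I would collect, over all $h$ and all transitions, the differences $X^h_j:=P_{s_h^j,a_h^j,h}g^{j}_{h+1}-g^{j}_{h+1}(s^{j}_{h+1})$ (mean zero given the past, $|X^h_j|\le H$), each appearing with an aggregate weight $(1+1/H)^{h-1}w^j_h$ where $w^j_h=O(\log T)$ and $\sum_{h,j}w^j_h\le HK=T$. Treating the whole thing as one martingale and applying Freedman (Lemma~\ref{freedman}), the per-step magnitude is $O(H\log T)$ and the total conditional variance is at most $e^2\big(\max_j w^j_h\big)\sum_{h,j}w^j_h\cdot H^2=O(\log T)\,H^2T$; this yields a deviation of $O(\log T)\,H\sqrt{T\iota}$. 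Crucially, bounding the variance globally (rather than running a maximal inequality cell by cell) is what produces the tight single $\sqrt T$ factor instead of a lossy $\sqrt{SAHT}$. Adding the two parts gives the claimed $O(\log T)\,(H^2SN_0+H\sqrt{T\iota})$.

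The hard part will be that the aggregate weight $w^j_h$ attached to each $X^h_j$ is not predictable: it equals the total future usage $\sum_{m>\mathrm{stage}(j)}e_m/\sum_{i<m}e_i$ of that transition, which depends on how many more times $(s_h^j,a_h^j,h)$ is visited. I would therefore reorganize the summation before invoking Freedman — grouping the contributions by stage and using an Abel summation so that each martingale increment is multiplied by a deterministic forward cumulative weight $\sum_{m'\le m}e_{m'}/\sum_{i<m'}e_i=O(\log T)$ — so that the martingale differences are genuinely adapted while the variance bound above is preserved.
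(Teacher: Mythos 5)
Your decomposition of $\psi_{h+1}^k$ into an empirical part plus a \emph{weighted} martingale creates exactly the adaptedness problem you flag at the end, and the Abel-summation repair you sketch does not resolve it. Summation by parts is an identity: for a fixed $(s,a,h)$ your weighted martingale equals $\sum_{m}\frac{c_m}{n_{(m)}}S_{n_{(m)}}$, where $S_n$ is the partial sum of the increments in visit order, $n_{(m)}=\sum_{i<m}e_i$, and $c_m\le e_m$ is the number of visits actually made in stage $m$; rearranging in either direction merely moves the randomness between the partial sums and the forward weights, because which stages are reached (and how much of the last stage is filled) is itself random. If you instead force the genuinely deterministic weight $\Omega_{m_0}:=\sum_{m'>m_0}\frac{e_{m'}}{n_{(m')}}\mathbb{I}[n_{(m')}\le T]=O(\log T)$ onto each increment, the resulting process is adapted and Freedman applies, but $\Omega_{m_0(j)}$ differs from the realized weight $w^j_h$ on \emph{every} sample of \emph{every} triple whose visits stop before time $T$, and this discrepancy multiplies signed increments. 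Controlling the correction requires a uniform bound on $|S_n|$ per triple and stage, and summing $H\sqrt{\iota/n_h^k}$ over all $(s,a,h)$ via Lemma~\ref{pro0} with $\alpha=\tfrac12$ reintroduces a $\sqrt{SAH}$ factor --- precisely the lossy bound you said you were avoiding, and too large for the lemma as stated (it would even degrade the final regret by $\sqrt{H}$).

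The fix --- and the paper's actual route --- is to strip the random weights \emph{before} invoking any martingale inequality, using the non-negativity you already observed but only exploited for the empirical half. Since $g^{l_i}_{h+1}\ge 0$, both $P_{s_h^k,a_h^k,h}\,g^{l_i}_{h+1}$ and $g^{l_i}_{h+1}(s^{l_i}_{h+1})$ are non-negative, so after re-indexing over episodes $j$ the random weight $\sum_{k}\frac{1}{n_h^k}\sum_i \mathbb{I}[l^k_{h,i}=j]\le 2(\log T+1)$ can be replaced by this deterministic upper bound on \emph{both} halves, not just the empirical one. The paper does this at \eqref{eq:logT-0}--\eqref{eq:logT}, after first bounding $V^{\mathrm{ref},l_i}_{h+1}-V^{\mathrm{REF}}_{h+1}\le H\lambda^{l_i}_{h+1}$ with the predictable indicator $\lambda^{j}_{h+1}(s)=\mathbb{I}[n^j_{h+1}(s)<N_0]$. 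What remains is the \emph{unweighted} sum $\sum_{j,h}P_{s_h^j,a_h^j,h}\lambda^j_{h+1}$, which splits into the empirical count $\sum_{j,h}\lambda^j_{h+1}(s^j_{h+1})\le HSN_0$ plus an unweighted adapted martingale with $O(1)$ increments; a single application of Azuma's inequality \eqref{eq:bound-psi-1} bounds the latter by $2\sqrt{T\iota}$, and the outer deterministic factor $O(H\log T)$ yields the claim. In particular no Freedman inequality or global variance bookkeeping is needed: your correct intuition that one should avoid cell-by-cell maximal inequalities is realized trivially here, because once the weights are stripped the martingale is unweighted and a single $\sqrt{T}$ comes for free.
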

\begin{proof}
  Because $\psi_{h+1}^{k} $ is always non-negative, we have that with probability $(1-p)$ it holds that
\begin{align}
&\sum_{k=1}^{K}\sum_{h=1}^{H}(1+\frac{1}{H})^{h-1}\psi_{h+1}^{k}  \nonumber
\\& \leq 3\sum_{k=1}^{K}\sum_{h=1}^{H}\psi_{h+1}^{k}  \nonumber
\\&= 3\sum_{k=1}^{K}\sum_{h=1}^{H}\frac{1}{n_{h}^k}\sum_{i=1}^{n_{h}^k} P_{s_{h}^k,a_{h}^k,h}(V_{h+1}^{\mathrm{ref},l_{i}}-V_{h+1}^{\mathrm{REF}  }) \nonumber
\\&\leq 3H \sum_{k=1}^{K}\sum_{h=1}^{H}\frac{1}{n_{h}^k}\sum_{i=1}^{n_{h}^k} P_{s_{h}^k,a_{h}^k,h} \lambda_{h+1}^{l_i} \nonumber
\\&\leq 3H \sum_{h=1}^{H}  \sum_{j=1}^{K} \sum_{k=1}^{K}  P_{s_{h}^k,a_{h}^k,h} \lambda_{h+1}^{j} \cdot  \frac{1}{n_{h}^k}\sum_{i=1}^{n_{h}^k}\mathbb{I}[l_{h,i}^k = j] \nonumber
\\&\leq 3H \sum_{h=1}^{H}  \sum_{j=1}^{K}   P_{s_{h}^j,a_{h}^j,h} \lambda_{h+1}^{j} \cdot  \sum_{k=1}^{K} \frac{1}{n_{h}^k}\sum_{i=1}^{n_{h}^k}\mathbb{I}[l_{h,i}^k = j]  \label{eq:logT-0} 
\\& \leq 6\big(\log(T)+1\big)\cdot H\sum_{h=1}^{H} \sum_{k=1}^{K}P_{s_{h}^{k},a_{h}^k,h}\lambda_{h+1}^k \label{eq:logT} 
\\& = 6\big(\log(T)+1\big) \cdot H\Big(\sum_{k=1}^{K}\sum_{h=1}^{H}\lambda_{h+1}^{k}(s_{h+1}^k)  +\sum_{k=1}^{K}\sum_{h=1}^{H} (P_{s_{h}^k,a_{h}^k,h} -\textbf{1}_{s_{h+1}^k} )\lambda_{h+1}^k    \Big) \nonumber
\\& \leq 6\big(\log(T)+1\big) \cdot H\Big(HSN_0  +\sum_{k=1}^{K}\sum_{h=1}^{H} (P_{s_{h}^k,a_{h}^k,h} -\textbf{1}_{s_{h+1}^k} )\lambda_{h+1}^k    \Big) \nonumber\\
& \leq 6\big(\log(T)+1\big) \cdot H\Big(HSN_0  +  2\sqrt{T \iota}   \Big)  . \label{eq:bound-psi-1} 
\end{align}

Here, Inequality \eqref{eq:logT-0} is because $\frac{1}{n_{h}^k}\sum_{i=1}^{n_{h}^k} \mathbb{I}[l_{h,i}^k = j] \neq 0$ only if $(s_h^k, a_h^k) = (s_h^j, a_h^j)$. Inequality  \eqref{eq:logT} is because 
\begin{align}
\sum_{k=1}^K \frac{1}{n_h^k}
\sum_{i=1}^{n_h^k}\mathbb{I}\left[l_{h,i}^k = j\right]\leq \sum_{z : j \leq \sum_{i=1}^{z-1} e_i  \leq T}\frac{e_z}{\sum_{i=1}^{z-1}e_i} \leq   2(\log(T)+1).\nonumber 
\end{align}
Inequality \eqref{eq:bound-psi-1} holds with probability $(1 - p)$ due to Azuma's inequality.

\end{proof}

\subsubsection{The $\xi_{h+1}^{k}$ Term} 
\begin{lemma}\label{lemma:bd_xi}
With probability at least $(1-(T+1) p)$, it holds that
\begin{align}
\sum_{k=1}^K \sum_{h=1}^H (1+\frac{1}{H})^{h-1}\xi_{h+1}^k \leq O(H\sqrt{SAT \iota}) .    \nonumber
\end{align}
    
\end{lemma}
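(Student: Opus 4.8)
The plan is to extract the martingale hidden inside $\xi_{h+1}^k$ and bound it by a \emph{variance-aware} concentration inequality, since a plain Azuma bound turns out to lose a factor of $H$. Fix a pair $(k,h)$ at which an update occurs. Every episode $\check l_i$ counted in $\xi_{h+1}^k$ visits the same triple, so $P_{s_h^{\check l_i},a_h^{\check l_i},h}=P_{s_h^k,a_h^k,h}$, and hence each summand $(P_{s_h^k,a_h^k,h}-\mathbf 1_{s_{h+1}^{\check l_i}})(V^{\check l_i}_{h+1}-V^*_{h+1})$ is a martingale difference with respect to the filtration generated by the trajectory up to step $h$ of episode $\check l_i$: the vectors $V^{\check l_i}_{h+1}$ and $V^*_{h+1}$ are measurable before $s_{h+1}^{\check l_i}$ is drawn, and $\mathbb E[\mathbf 1_{s_{h+1}^{\check l_i}}]=P_{s_h^k,a_h^k,h}$. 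Its range is at most $\|V^{\check l_i}_{h+1}-V^*_{h+1}\|_\infty\le H$ by Proposition~\ref{pro1}, and its conditional variance is $\mathbb V(P_{s_h^k,a_h^k,h},V^{\check l_i}_{h+1}-V^*_{h+1})$.

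First I would apply Lemma~\ref{self-norm} (equivalently Freedman's inequality, Lemma~\ref{freedman}) to the inner martingale $\sum_{i=1}^{\check n_h^k}(P_{s_h^k,a_h^k,h}-\mathbf 1_{s_{h+1}^{\check l_i}})(V^{\check l_i}_{h+1}-V^*_{h+1})$ for each of the at most $T$ active pairs $(k,h)$ and take a union bound; this is precisely what produces the $(1-(T+1)p)$ confidence, the extra unit covering one further global martingale estimate. Dividing by $\check n_h^k$ yields, simultaneously for all $(k,h)$,
\begin{equation*}
|\xi_{h+1}^k|\le \frac{2}{\check n_h^k}\sqrt{\Big(\textstyle\sum_{i=1}^{\check n_h^k}\mathbb V(P_{s_h^k,a_h^k,h},V^{\check l_i}_{h+1}-V^*_{h+1})\Big)\iota}+O\Big(\frac{H\iota}{\check n_h^k}\Big).
\end{equation*}
Summing over $(k,h)$ after pulling out the harmless factor $(1+\tfrac1H)^{h-1}\le e$, the lower-order term $\sum_{k,h}H\iota/\check n_h^k$ is controlled by the $\alpha=1$ case of Lemma~\ref{pro0}, contributing only $O(SAH^3\iota\log T)$. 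For the leading term, writing $\overline{\mathbb V}{}_h^k:=\tfrac1{\check n_h^k}\sum_i\mathbb V(\cdot)$ and applying Cauchy--Schwarz over the index set $(k,h)$ gives
\begin{equation*}
\sum_{k,h}\frac{1}{\check n_h^k}\sqrt{\textstyle\sum_i\mathbb V(\cdot)\,\iota}=\sqrt\iota\sum_{k,h}\frac{\sqrt{\overline{\mathbb V}{}_h^k}}{\sqrt{\check n_h^k}}\le\sqrt\iota\,\Big(\sum_{k,h}\frac1{\check n_h^k}\Big)^{1/2}\Big(\sum_{k,h}\overline{\mathbb V}{}_h^k\Big)^{1/2},
\end{equation*}
where the first factor is $O(\sqrt{SAH^2\log T})$ again by Lemma~\ref{pro0}; the $\sqrt{SA}$ appearing in the final bound enters exactly here, through the state--action count in Lemma~\ref{pro0} together with Cauchy--Schwarz.

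The main obstacle is the residual total-variance factor $\sum_{k,h}\overline{\mathbb V}{}_h^k$. The crude bound $\overline{\mathbb V}{}_h^k\le H^2$ gives $\big(\sum_{k,h}\overline{\mathbb V}{}_h^k\big)^{1/2}\le H\sqrt T$ and hence only $O(H^2\sqrt{SAT\iota})$ --- a full factor of $H$ worse than the target, which is precisely the deficiency a plain Azuma bound (range $H$) would also incur. To recover the missing $H$ I would use that $V^{\check l_i}_{h+1}\ge V^*_{h+1}$ (Proposition~\ref{pro1}) so that $\mathbb V(P,V^{\check l_i}_{h+1}-V^*_{h+1})\le \|V^{\check l_i}_{h+1}-V^*_{h+1}\|_\infty\cdot P_{s_h^k,a_h^k,h}(V^{\check l_i}_{h+1}-V^*_{h+1})$, and then control the aggregate of the first-moment factors $P(V^{\check l_i}_{h+1}-V^*_{h+1})$ after the indicator rearrangement of \eqref{eq-analysis-rearrangejk}--\eqref{eq-analysis-rearrangejk-1}, invoking Lemma~\ref{lemma1} to argue that only few episodes carry a large gap. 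Establishing that this reweighted second moment collapses to $O(T)$ rather than $O(H^2T)$ --- so that the leading contribution becomes $O(\sqrt{SAH^2\log T}\cdot\sqrt{T}\cdot\sqrt\iota)=\tilde O(H\sqrt{SAT\iota})$ --- is the crux; by comparison the martingale bookkeeping, the per-pair union bound handling the non-predictable stage weights, and the Cauchy--Schwarz summation are routine.
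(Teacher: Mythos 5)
You have correctly identified the filtration structure, the $O(H)$ range of the increments, and the role of the stage-based reuse weights, but your proof has a genuine gap at exactly the point you flag as the crux: the claim that $\sum_{k,h}\overline{\mathbb V}{}_h^k = O(T)$ is never established, and under your decomposition it carries the entire content of the lemma. Lemma~\ref{lemma1} only bounds the number of episodes with $\delta_{h}^k\geq\epsilon$ by $O(SAH^5\iota/\epsilon^2)$; integrating this dyadically over $\epsilon$ gives $\sum_k \delta_{h+1}^k = O(\sqrt{SAH^5\iota K})$ per layer, hence via your first-moment bound $\mathbb{V}(P,V-V^*)\leq H\cdot P(V-V^*)$ only $\sum_{k,h}\overline{\mathbb V}{}_h^k \lesssim H^2\sqrt{SAH^5\iota K}$, which is $O(T)$ only once $K\gtrsim SAH^7\iota$ — and even then only after a further martingale argument transferring the expectations $P_{s_h^k,a_h^k,h}(V^{\check l_i}_{h+1}-V^*_{h+1})$ to realized gaps, which your sketch does not supply. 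Moreover, even granting the crux, your Cauchy--Schwarz step pairs $\big(\sum_{k,h}1/\check n_h^k\big)^{1/2}=O(\sqrt{SAH^2\log T})$ (Lemma~\ref{pro0}) with $\sqrt{O(T)\iota}$, leaving a stray $\sqrt{\log T}$ beyond the stated $O(H\sqrt{SAT\iota})$.

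The paper proves the lemma with no variance-aware concentration at all, and your opening premise — that plain Azuma must lose a factor of $H$ — is an artifact of bounding each $\xi_{h+1}^k$ separately and summing. The paper instead swaps the order of summation as in \eqref{eq-analysis-rearrangejk}: each episode $j$ contributes its \emph{single} martingale difference $(P_{s_h^j,a_h^j,h}-\textbf{1}_{s_{h+1}^j})(V^{j}_{h+1}-V^*_{h+1})$ with aggregate weight $\theta_{h+1}^j\leq 3$, so the whole expression is one length-$T$ martingale with $O(H)$ increments, and a single Azuma application gives $O(H\sqrt{T\iota})$ for the main part. The only subtlety is that $\theta_{h+1}^j$ is not predictable (it depends on how often the stage is replayed later); the paper substitutes a predictable proxy $\tilde\theta_{h+1}^j$ that agrees with $\theta_{h+1}^j$ except when $(j,h)$ lies in the last two stages of its triple, and bounds the mismatch by Azuma per $(s,a,h)$ plus Cauchy--Schwarz, using that those stages hold only an $O(1/H)$ fraction of all samples: $\sum_{s,a,h}O(H)\sqrt{\check N^{K+1}_h(s,a)\,\iota}\leq O(H)\sqrt{SAH\iota\cdot T/H}=O(H\sqrt{SAT\iota})$. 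This correction term — not Lemma~\ref{pro0} — is where the $\sqrt{SA}$ enters, and the construction sidesteps your variance-collapse problem entirely.
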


\begin{proof}
We  have that
\begin{align*}
&\sum_{k=1}^{K}\sum_{h=1}^{H}(1+\frac{1}{H})^{h-1}\xi_{h+1}^k =\sum_{k=1}^{K}\sum_{h=1}^{H}(1+\frac{1}{H})^{h-1}\Big( \frac{1}{\check{n}_{h}^{k}}\sum_{i=1}^{\check{n}_{h}^{k}}(P_{s_{h}^k,a_{h}^k,h}-\textbf{1}_{s_{h+1}^{\check{l}_{i}}})(V^{\check{l}_{i}}_{h+1}- V^*_{h+1})\Big)
\\
& = \sum_{k=1}^{K}\sum_{h=1}^{H} \sum_{j=1}^{K} (1+\frac{1}{H})^{h-1}\Big( \frac{1}{\check{n}_{h}^{k}}\sum_{i=1}^{\check{n}_{h}^{k}}(P_{s_{h}^k,a_{h}^k,h}-\textbf{1}_{s_{h+1}^{j}})(V^{j}_{h+1}- V^*_{h+1}) \cdot \mathbb{I}[\check{l}_{h,i}^k = j]\Big).
\end{align*}
Note that in the expression above $\check{l}_{h,i}^k = j$ if and only if $(s_h^k, a_h^k) = (s_h^j, s_h^j)$. Therefore, we have 
\begin{align}
&\sum_{k=1}^{K}\sum_{h=1}^{H}(1+\frac{1}{H})^{h-1}\xi_{h+1}^k \nonumber\\
& = \sum_{k=1}^{K}\sum_{h=1}^{H} \sum_{j=1}^{K} (1+\frac{1}{H})^{h-1}\Big( \frac{1}{\check{n}_{h}^{k}}\sum_{i=1}^{\check{n}_{h}^{k}}(P_{s_{h}^j,a_{h}^j,h}-\textbf{1}_{s_{h+1}^{j}})(V^{j}_{h+1}- V^*_{h+1}) \cdot \mathbb{I}[\check{l}_{h,i}^k = j]\Big) \nonumber\\
& = \sum_{h=1}^{H} \sum_{j=1}^{K} (1+\frac{1}{H})^{h-1} (P_{s_{h}^j,a_{h}^j,h}-\textbf{1}_{s_{h+1}^{j}})(V^{j}_{h+1}- V^*_{h+1}) \cdot \sum_{k=1}^{K}\frac{1}{\check{n}_{h}^{k}}\sum_{i=1}^{\check{n}_{h}^{k}} \mathbb{I}[\check{l}_{h,i}^k = j] \nonumber \\
& = \sum_{k=1}^{K}\sum_{h=1}^{H}\theta_{h+1}^{k} (P_{s^k_{h},a_{h}^k,h }-\textbf{1}_{s_{h+1}^k})(V^{k}_{h+1}-V^*_{h+1}), \label{eq:bound-xi-1}
\end{align}
where we define $\theta_{h+1}^{j}:=(1+\frac{1}{H})^{h-1}\sum_{k=1}^{K} \big(\frac{1}{{\check{n}^k_{h}}}\sum_{i=1}^{\check{n}^k_{h}}\mathbb{I}[\check{l}^{k}_{h,i}=j] \big)$.

For $(j,h)\in [K]\times [H]$, let $x_{h}^j$ be the number of elements in current stage with respect to $(s_h^j,a_h^j,h)$ and
$\tilde{\theta}_{h+1}^{j}:=(1+\frac{1}{H})^{h-1}\frac{ \lfloor (1+\frac{1}{H})x_{h}^j\rfloor }{x_h^j}\leq 3$. 
Define $\mathcal{K} =\{(k,h): \theta_{h+1}^k = \tilde{\theta}_{h+1}^k \}$. Note that if $k$ is before the second  last stage (before the final episode $K$) of the triple $(s^k_h,a^k_h,h)$, then we have $\theta_{h+1}^k =\tilde{\theta}_{h+1}^{k} $ and $(k, h) \in \mathcal{K}$. Given that $(k, h) \in \mathcal{K}$, $s_{h+1}^k$ still follows the transition distribution $P_{s_h^k, a_h^k, h}$.  

Let $\mathcal{K}_h^{\bot}(s,a) = \{k: (s_h^k,a_h^k)=(s,a), k \text{ is in the second last stage of } (s,a,h)  \}$.
Note that for two different episodes $j,k$, if $(s_h^k,a_h^k) = (s_h^j,a_h^j)$ and $j,k$ are in the same stage of $(s_h^k,a_h^k,h)$, then 
$\theta_{h+1}^k = \theta_{h+1}^j$ and $\tilde{\theta}_{h+1}^k =\tilde{\theta}_{h+1}^j $.
Let $\theta_{h+1}(s,a)$ and $\tilde{\theta}_{h+1}(s,a)$ to denote $\theta^{k}_{h+1}$ and $\tilde{\theta}_{h+1}^k$ respectively for some $k\in \mathcal{K}_h^{\bot}(s,a).$

We rewrite as
\begin{align}
&\sum_{k=1}^{K}\sum_{h=1}^{H}(1+\frac{1}{H})^{h-1}\xi_{h+1}^k \nonumber\\
&= \sum_{(k, h)} \tilde{\theta}_{h+1}^{k} (P_{s^k_{h},a_{h}^k,h }-\textbf{1}_{s_{h+1}^k})(V^{k}_{h+1}-V^*_{h+1}) + \sum_{(k, h) \in \overline{\mathcal{K}}} (\theta_{h+1}^{k}-\tilde{\theta}_{h+1}^k) (P_{s^k_{h},a_{h}^k,h }-\textbf{1}_{s_{h+1}^k})(V^{k}_{h+1}-V^*_{h+1}) . \label{eq:bound-xi-2}
\end{align}

Because $\tilde{\theta}_{h+1}^k$ is independent from $s_{h+1}^k$, by Azuma's inequality, we have with probability $(1-p)$, it holds that

\begin{align}
    & \sum_{(k, h) } \tilde{\theta}_{h+1}^k (P_{s^k_{h},a_{h}^k,h }-\textbf{1}_{s_{h+1}^k})(V^{k}_{h+1}-V^*_{h+1})\leq 6\sqrt{TH^2\iota}.
    \label{eq:bound-xi-3}
\end{align}

For the second term in \eqref{eq:bound-xi-2}, we have that
\begin{align}
&\sum_{(k, h) \in \overline{\mathcal{K}}} (\theta_{h+1}^{k}-\tilde{\theta}_{h+1}^k) (P_{s^k_{h},a_{h}^k,h }-\textbf{1}_{s_{h+1}^k})(V^{k}_{h+1}-V^*_{h+1})  \nonumber \\
& = \sum_{s,a,h}\sum_{k : (k,h)\in \overline{\mathcal{K}}}\mathbb{I}[(s_h^k,a_h^k)=(s,a)](\theta_{h+1}^{k}-\tilde{\theta}_{h+1}^k) (P_{s^k_{h},a_{h}^k,h }-\textbf{1}_{s_{h+1}^k})(V^{k}_{h+1}-V^*_{h+1})  \nonumber \\
& =\sum_{s,a,h} (\theta_{h+1}(s,a)-\tilde{\theta}_{h+1}(s,a))\sum_{k\in \mathcal{K}_h^{\bot}(s,a)}   (P_{s^k_{h},a_{h}^k,h }-\textbf{1}_{s_{h+1}^k})(V^{k}_{h+1}-V^*_{h+1})  \nonumber      \\
& \leq \sum_{s,a,h}O(H)\sqrt{|\mathcal{K}_h^{\bot}(s,a)| \iota}\label{eq:bound-xi-4} \\
&   = \sum_{(s, a, h)} O(H) \cdot \sqrt{\check{N}^{K+1}_h(s, a) \iota}\nonumber \\
& \leq O(H) \cdot \sqrt{SAH \iota \sum_{(s, a, h)} \check{N}^{K+1}_h(s, a) } \label{eq:bound-xi-5}\\ 
& \leq O(H) \cdot \sqrt{SAH \iota \cdot (T/H)} . \label{eq:bound-xi-6}
\end{align}
Here, \eqref{eq:bound-xi-4} happens with probability $(1 - Tp)$ because of Azuma's inequality and a union bound over all times steps in $\overline{\mathcal{K}}$. \eqref{eq:bound-xi-5} is due to Cauchy-Schwartz, and \eqref{eq:bound-xi-6} is because the length of the last two stages for each $(s, a, h)$ triple is only $O(1/H)$ fraction of the total number of visits.

Combining \eqref{eq:bound-xi-2}, \eqref{eq:bound-xi-3}, \eqref{eq:bound-xi-6}, and collecting probabilities, we prove the desired result.

\end{proof}

\subsubsection{The $\phi_{h+1}^k$ Term} 
\begin{lemma}\label{lemma:bd_phi}
 With probability $(1-p)$, it holds that 
\begin{equation*}
\sum_{k=1}^{K}\sum_{h=1}^{H}(1+\frac{1}{H})^{h-1}\phi_{k+1}^{k} = \sum_{k=1}^{K}\sum_{h=1}^{H}(1+\frac{1}{H})^{h-1} (P_{s_{h}^k,a_{h}^k,h}-\textbf{1}_{s_{h+1}^k}) (V^{*}_{h+1}-V^{\pi_{k}}_{h+1})\leq O(\sqrt{H^2T\iota}).
\end{equation*}
\end{lemma}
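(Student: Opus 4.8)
The plan is to recognize the entire double sum as a single martingale and apply Azuma's inequality. The crucial structural observation is that, unlike the $\xi_{h+1}^k$ term, the vector $V^{*}_{h+1}-V^{\pi_k}_{h+1}$ is \emph{predictable}: the policy $\pi_k$ is fixed at the start of episode $k$, so $V^{\pi_k}_{h+1}$ is a deterministic function of $\pi_k$ and the (fixed) MDP, and $V^{*}_{h+1}$ is fixed as well. Hence, conditioned on the history $\mathcal{F}$ up to the moment the action $a_h^k$ is taken (which already determines $P_{s_h^k,a_h^k,h}$), the only randomness in $\phi_{h+1}^k$ is the sampled next state $s_{h+1}^k\sim P_{s_h^k,a_h^k,h}$, and
\begin{equation*}
\mathbb{E}\big[\mathbf{1}_{s_{h+1}^k}(V^{*}_{h+1}-V^{\pi_k}_{h+1})\,\big|\,\mathcal{F}\big]=P_{s_h^k,a_h^k,h}(V^{*}_{h+1}-V^{\pi_k}_{h+1}),
\end{equation*}
so each $\phi_{h+1}^k$ has zero conditional mean. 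Ordering the pairs $(k,h)$ lexicographically (by episode, then by step within the episode), the partial sums of $(1+\tfrac1H)^{h-1}\phi_{h+1}^k$ therefore form a martingale with respect to the natural filtration.

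Next I would bound the per-step increment. Since $0\le V^{\pi_k}_{h+1}\le V^{*}_{h+1}\le H$ (using $V^{k}\ge V^*\ge V^{\pi_k}$ on the successful event of Proposition~\ref{pro1}), the vector $V^{*}_{h+1}-V^{\pi_k}_{h+1}$ has entries in $[0,H]$, so the centered quantity $(P_{s_h^k,a_h^k,h}-\mathbf{1}_{s_{h+1}^k})(V^{*}_{h+1}-V^{\pi_k}_{h+1})$ has absolute value at most $H$. The weight is harmless: $(1+\tfrac1H)^{h-1}\le(1+\tfrac1H)^{H-1}<e<3$ for all $h\le H$. Hence every increment is bounded in magnitude by $c:=3H$.

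Finally, there are exactly $KH=T$ terms in the sum, so $\sum c^2=O(H^2T)$. Applying Azuma's inequality with deviation $\epsilon=\Theta(\sqrt{H^2T\iota})$ yields a failure probability of $2\exp(-\Theta(\iota))=O(p)$ (recall $\iota=\log(2/p)$), giving $\sum_{k,h}(1+\tfrac1H)^{h-1}\phi_{h+1}^k\le O(\sqrt{H^2T\iota})$ with probability at least $(1-p)$, as claimed.

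I do not expect any real obstacle here: this is the most benign of the four terms in Lemma~\ref{bdlm} precisely because the summand factorizes into a predictable vector times fresh transition noise, so no rearrangement over stages (as was needed for the $\psi_{h+1}^k$ and $\xi_{h+1}^k$ terms) is required. The only points demanding care are verifying the measurability of $V^{\pi_k}_{h+1}$ with respect to the pre-transition filtration and pinning down the constant in the increment bound, both of which are routine.
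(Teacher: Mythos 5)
Your proof is correct and matches the paper's approach: the paper's entire proof is the single line ``The lemma follows easily from Azuma's inequality,'' and your write-up supplies exactly the details that make that line work --- the martingale structure coming from the predictability of $V^{*}_{h+1}-V^{\pi_k}_{h+1}$ (fixed before the transition noise $s_{h+1}^k$ is realized), the increment bound $(1+\frac{1}{H})^{h-1}\cdot H < 3H$, and the count of $T$ terms. One small remark: the bound $0 \le V^{\pi_k}_{h+1} \le V^{*}_{h+1} \le H$ holds deterministically (since $V^{*} = \sup_{\pi} V^{\pi}$ by definition), so invoking the successful event of Proposition~\ref{pro1} is unnecessary --- and better avoided, since conditioning on a random event would technically disturb the unconditional martingale property that Azuma's inequality requires.
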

\begin{proof}
The lemma follows easily from Azuma's inequality.
\end{proof}

\subsubsection{The $b_h^k$ Term}
\begin{lemma}\label{lemma:bd_b}
    With probability $(1-9p)$, it holds that
    \begin{align}
        \sum_{k=1}^K \sum_{h=1}^H (1+\frac{1}{H})^{h-1}b_h^k &\leq O\Big(\sqrt{SAH^2T\iota} +\sqrt{SAH^2\beta T\iota}+SAH^3\sqrt{SN_0 \iota}\log(T) \nonumber
        \\& \qquad \qquad         +\sqrt{SAH^3\beta^2  T\iota}+ (SA\iota)^{\frac{3}{4}}H^{\frac{5}{2}}T^{\frac{1}{4}}   \Big). \nonumber
    \end{align}
\end{lemma}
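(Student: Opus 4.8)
The plan is to split the weighted bonus sum according to the three groups of terms in the definition of $b_h^k$: the reference-variance term $c_1\sqrt{\nu^{\mathrm{ref},k}_h\iota/n_h^k}$, where $\nu^{\mathrm{ref},k}_h := \sigma^{\mathrm{ref},k}_h/n_h^k-(\mu^{\mathrm{ref},k}_h/n_h^k)^2$; the advantage-variance term $c_2\sqrt{\check\nu^k_h\iota/\check n_h^k}$, where $\check\nu^k_h := \check\sigma^k_h/\check n_h^k-(\check\mu^k_h/\check n_h^k)^2$; and the deterministic lower-order terms $c_3(\tfrac{H\iota}{n_h^k}+\tfrac{H\iota}{\check n_h^k}+\tfrac{H\iota^{3/4}}{(n_h^k)^{3/4}}+\tfrac{H\iota^{3/4}}{(\check n_h^k)^{3/4}})$. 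Since $(1+1/H)^{h-1}\le e$, the prefactor is absorbed into the constants. Each group is then handled by a Cauchy--Schwarz step combined with Lemma~\ref{pro0} to sum inverse powers of $n_h^k$ and $\check n_h^k$. For the deterministic terms this is routine: the $\alpha=1$ instances of Lemma~\ref{pro0} give an $O(SAH^3\iota\log T)$ (lower-order, $T$-free-up-to-log) contribution, while the $\alpha=3/4$ instances together with H\"older's inequality $\sum_{s,a,h}(N^{K+1}_h(s,a))^{1/4}\le(SAH)^{3/4}T^{1/4}$ produce exactly the $(SA\iota)^{3/4}H^{5/2}T^{1/4}$ term.

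The crux is the reference-variance term. First I would pass from the empirical variance to its population counterpart: arguing exactly as in Lemma~\ref{bdnu} but controlling the signs carefully yields $n_h^k\,\nu^{\mathrm{ref},k}_h\le\sum_{i=1}^{n_h^k}\mathbb{V}(P_{s_h^k,a_h^k,h},V^{\mathrm{REF}}_{h+1})+O(H^2\sqrt{n_h^k\iota})+(\text{error})$, where I replace each $V^{\mathrm{ref},l_i}_{h+1}$ by the frozen final reference $V^{\mathrm{REF}}_{h+1}$; since $V^{\mathrm{ref},l_i}_{h+1}$ is monotone in $l_i$ and differs from $V^{\mathrm{REF}}_{h+1}$ only for samples drawn before the reference was frozen, both this replacement and the variance-of-means term ($\chi_5$ in Lemma~\ref{bdnu}) are controlled by $O(H^2)$ per such early sample, counted by the indicators $\lambda^{l_i}_{h+1}$ as in Lemma~\ref{lemma:bd_psi}. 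After Cauchy--Schwarz over $(k,h)$ and summing $1/\sqrt{n_h^k}$ via Lemma~\ref{pro0} ($\alpha=1/2$), the term reduces to $\sqrt{SAH\iota\cdot\sum_{k,h}\mathbb{V}(P_{s_h^k,a_h^k,h},V^{\mathrm{REF}}_{h+1})}$ plus corrections. The main obstacle is then to show $\sum_{k,h}\mathbb{V}(P_{s_h^k,a_h^k,h},V^{\mathrm{REF}}_{h+1})=O(HT+\text{lower order})$ rather than the trivial $O(H^2T)$; this is the law-of-total-variance step that is the entire source of the saved $\sqrt H$. I would prove it by replacing $V^{\mathrm{REF}}$ by $V^{\pi_k}$ through $\mathbb{V}(P,f)\le 2\mathbb{V}(P,g)+2P(f-g)^2$ with $\|V^{\mathrm{REF}}_{h+1}-V^{\pi_k}_{h+1}\|_\infty\le\beta+(V^*_{h+1}-V^{\pi_k}_{h+1})$ (the cross terms fold back into the regret and are absorbed in a self-bounding fashion), and then invoking the per-episode identity $\mathbb{E}[\sum_h\mathbb{V}(P_{s_h^k,a_h^k,h},V^{\pi_k}_{h+1})]=O(H^2)$, upgraded to a high-probability bound $O(HT)$ by Freedman's inequality (Lemma~\ref{freedman}). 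This delivers the dominant $\sqrt{SAH^2T\iota}$.

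For the advantage-variance term I would proceed analogously, using Lemma~\ref{bdchknu} to pass to $\sum_i\mathbb{V}(P_{s_h^k,a_h^k,h},W^{\check l_i}_{h+1})$ with $W_{h+1}:=V_{h+1}-V^{\mathrm{ref}}_{h+1}$. For state-steps whose successor reference has already been learned, Corollary~\ref{coro1} gives $\|W_{h+1}\|_\infty\le\beta$, hence $\mathbb{V}(P,W_{h+1})\le\beta^2$; the sharper bound $\mathbb{V}(P,W_{h+1})\le\beta\cdot P_{s_h^k,a_h^k,h}(V^{\mathrm{ref}}_{h+1}-V_{h+1})$ ties the remaining slack to the $\delta^{\check l_i}_{h+1}$ quantities, and after Cauchy--Schwarz together with Lemma~\ref{pro0} ($\alpha=1/2$ on $\check n_h^k$, which costs the extra $\sqrt H$ from the $H^\alpha$ factor) yields the $\sqrt{SAH^2\beta T\iota}$ and $\sqrt{SAH^3\beta^2 T\iota}$ terms. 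The still-unlearned state-steps are counted by the indicators $\lambda^k_{h+1}$ exactly as in the $\psi$-analysis; since each state is learned after $N_0$ visits their total number is $O(HSN_0)$, and routing this through the same rearrangement (which contributes the $\log T$) gives the $SAH^3\sqrt{SN_0\iota}\log T$ term.

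Finally I would collect the four groups and take a union bound over the $O(1)$ martingale events used inside the variance conversions (Lemmas~\ref{bdnu} and~\ref{bdchknu}) and over the time steps at which Azuma's and Freedman's inequalities are applied, which accounts for the stated $(1-9p)$ failure probability. The hardest and most delicate part is the law-of-total-variance bound on $\sum_{k,h}\mathbb{V}(P,V^{\mathrm{REF}}_{h+1})$: controlling the discrepancy between $V^{\mathrm{REF}}$, $V^*$, and the executed $V^{\pi_k}$ without reintroducing a factor of $H$, and making the self-bounding recursion with the regret close, is where all the work concentrates.
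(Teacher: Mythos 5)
Your overall decomposition matches the paper's: you split $b_h^k$ into the two empirical-variance pieces and the deterministic lower-order piece, absorb $(1+1/H)^{h-1}\leq e$ into constants, and dispatch the lower-order terms through Lemma~\ref{pro0} with $\alpha=1$ and $\alpha=3/4$, exactly as in \eqref{eq:lower_order}--\eqref{eqbb}; your treatment of the advantage-variance term also produces the same $\sqrt{SAH^3\beta^2 T\iota}$ and $SAH^3\sqrt{SN_0\iota}\log T$ terms. One local nitpick there: Lemma~\ref{bdchknu} runs in the wrong direction for your purpose (it upper-bounds the population variance by the empirical one, whereas you need the reverse), and while a symmetric martingale argument would fix this, the paper needs no conversion at all — conditioned on Proposition~\ref{pro1} and Corollary~\ref{coro1}, it bounds $\check{\nu}_h^k$ deterministically by the second moment around zero, $\check{\nu}_h^k \leq \frac{1}{\check{n}_h^k}\sum_i \big(V^{\mathrm{ref},\check{l}_i}_{h+1}(s^{\check{l}_i}_{h+1})-V^*_{h+1}(s^{\check{l}_i}_{h+1})\big)^2 \leq \frac{H^2SN_0}{\check{n}_h^k}+\beta^2$ (inequality \eqref{eqb3}), using $V^*\leq V\leq V^{\mathrm{ref}}$.

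The genuine gap is in the dominant term. You propose to bound $\sum_{k,h}\mathbb{V}(P_{s_h^k,a_h^k,h},V^{\mathrm{REF}}_{h+1})$ by passing to $V^{\pi_k}$, invoking the per-episode law of total variance upgraded by Freedman, and letting the cross terms $P(V^{\mathrm{REF}}_{h+1}-V^{\pi_k}_{h+1})^2 \lesssim \beta^2 + H\cdot P(V^*_{h+1}-V^{\pi_k}_{h+1})$ ``fold back into the regret in a self-bounding fashion.'' That is the UCBVI-style route, and — as you yourself concede — it is where all the work concentrates; but it cannot deliver Lemma~\ref{lemma:bd_b} in its stated form. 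The lemma's right-hand side contains no regret term, and the lemma is consumed in \eqref{eqpf3} to \emph{derive} the regret bound, so an intermediate bound involving $\sum_{k,h}P(V^*_{h+1}-V^{\pi_k}_{h+1})$ (itself of order $H\cdot\mathrm{Regret}$ along trajectories) is circular unless you restate the lemma with that extra term and close a quadratic inequality at the top level — a step you do not carry out. The paper sidesteps the issue entirely: Lemma~\ref{lemma5} compares $\nu^{\mathrm{ref},k}_h$ to $\mathbb{V}(P_{s_h^k,a_h^k,h},V^*_{h+1})$ (not to $V^{\mathrm{REF}}$ or $V^{\pi_k}$), and Lemma~\ref{lemma:b-term-variance} proves $\sum_{s,a,h}N^{K+1}_h(s,a)\mathbb{V}(P_{s,a,h},V^*_{h+1})\leq 2TH+3\sqrt{2H^4T\iota}$ by a short telescoping argument: write $\mathbb{V}(P,V^*_{h+1})\leq \big[P(V^*_{h+1})^2-(V^*_h(s^k_h))^2\big]+2H\,|V^*_h(s^k_h)-PV^*_{h+1}|$, observe that $V^*_h(s)-P_{s,a,h}V^*_{h+1}\geq V^*_h(s)-Q^*_h(s,a)\geq 0$ for \emph{every} action (nonnegative rewards), and telescope both sums with Azuma — no Freedman, no per-episode variance identity, and crucially no dependence on the algorithm's performance. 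This regret-free bound on the total variance of $V^*$ is the missing idea in your sketch: with it, your first step should compare to $V^*$ rather than $V^{\mathrm{REF}}$ (picking up the benign $4H\beta$ correction as in Lemma~\ref{lemma5}), and the self-bounding machinery becomes unnecessary. A smaller caveat: your pointwise bound $V^{\mathrm{REF}}_{h+1}-V^{\pi_k}_{h+1}\leq \beta+(V^*_{h+1}-V^{\pi_k}_{h+1})$ holds only on states whose reference has been learned; unlearned states need the $\lambda$-indicator accounting, which you deploy elsewhere but omit here.
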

\begin{proof}
Define $\nu^{\mathrm{ref},k}_{h} = \frac{\sigma^{\mathrm{ref},k}}{ n_{h}^k}- (\frac{\mu^{\mathrm{ref},k}_{h} }{n_{h}^k})^2$ and $\check{\nu}_{h}^{k} = \frac{  \check{\sigma}_{h}^k  }{ \check{n}^k_{h} }- (\frac{ \check{\mu}^k_{h} }{  \check{n}^k_{h} })^2$.
Since $b_h^k$ is non-negative,  we have that
\begin{align}
&2\sum_{h=1}^{H}\sum_{k=1}^{K}(1+\frac{1}{H})^{h-1}b_h^k  \nonumber 
\\ & \leq  6\sum_{h=1}^{H}\sum_{k=1}^{K}\Big( c_{1}\sqrt{ \frac{ \nu^{\mathrm{ref},k}_{h}   }{n_{h}^k} \iota  }+c_{2}    \sqrt{ \frac{ \check{\nu}^{k}_{h}   }{\check{n}_{h}^{k}} \iota  }    +c_{3}\big(\frac{H\iota}{n_{h}^k} +\frac{H\iota}{\check{n}^k_{h}}  + \frac{H\iota^{\frac{3}{4}}}{(n_{h}^k)^{\frac{3}{4}}}+\frac{H\iota^{\frac{3}{4}}}{(\check{n}^k_{h})^{\frac{3}{4}}}   \big)     \Big) \label{eq:lower_order}
\\ & \leq O\Big(\sum_{h=1}^{H}\sum_{k=1}^{K} (\sqrt{ \frac{ \nu^{\mathrm{ref},k}_{h}   }{n_{h}^k} \iota  } + \sqrt{ \frac{ \check{\nu}^{k}_{h}   }{\check{n}_{h}^{k}} \iota  }    )    \Big) +O\Big(SAH^3\log(T)\iota   +(SA\iota)^{\frac{3}{4}}H^{\frac{5}{2}}T^{\frac{1}{4}} \Big). \label{eqbb}
\end{align}
Inequality \eqref{eqbb}  is due to Lemma \ref{pro0} with $\alpha =\frac{3}{4}$ and $\alpha = 1$ .  Now we only need to analyze the first term in \eqref{eqbb}.

We first present an upper bound for $\nu^{\mathrm{ref},k}_{h}$. Recall that $\mathbb{V}(x,y) = x^{\top}(y^2)-(x^{\top}y)^2$.
\begin{lemma}\label{lemma5}
	With probability $(1-4p)$, it holds that
	\begin{equation*}
	\nu_{h}^{\mathrm{ref},k}-\mathbb{V}(P_{s^k_{h},a^k_{h},h},V^*_{h+1})\leq 4H\beta+ \frac{6H^2SN_{0}}{n_{h}^k}+14H^2\sqrt{\frac{\iota}{n_{h}^k}}.
	\end{equation*}
\end{lemma}
\begin{proof}
	We prove by first bounding $\nu_{h}^{\mathrm{ref},k}-\frac{1}{n_{h}^k}\sum_{i=1}^{n_{h}^k}\mathbb{V}(P_{s_{h}^k,a_{h}^k,h},V_{h+1}^{\mathrm{ref},l_{i}})$. Recall that by (\ref{eqv1}),
	\begin{equation*}
	\nu_{h}^{\mathrm{ref},k}-\frac{1}{n_{h}^k}\sum_{i=1}^{n_{h}^k}\mathbb{V}(P_{s_{h}^k,a_{h}^k,h},V_{h+1}^{\mathrm{ref},l_{i}}) = -\frac{1}{n_{h}^k}(\chi_{6}+\chi_{7}+\chi_{8}),
	\end{equation*}
	where
	\begin{align} 
& \chi_{6} :=\sum_{i=1}^{n_h^k}\left(  P_{s,a,h}(V^{\mathrm{ref},l_{i}}_{h+1})^2 - (V_{h+1}^{\mathrm{ref},l_{i}}(s^{l_{i}}_{h+1 }))^2  \right), \label{eqb33-a}
\\& \chi_{7} := \frac{1}{n_h^k}\left({\sum_{i=1}^{n_h^k}V^{\mathrm{ref},l_{i}}_{h+1}(s_{h+1}^{l_{i}}  ) }\right)^2 -\frac{1}{n_h^k }\left({\sum_{i=1}^{n_h^k } P_{s,a,h}V^{\mathrm{ref},l_{i}}_{h+1} }\right)^2, \label{eqb33-b}
\\ & \chi_{8} :=   \frac{1}{n_h^k }\left({\sum_{i=1}^{ n_h^k } P_{s,a,h}V^{\mathrm{ref},l_{i}}_{h+1} }\right)^2  - \sum_{i=1}^{n_h^k }\left(P_{s,a,h}V^{\mathrm{ref},l_{i}}_{h+1}\right)^2 . \label{eqb33-c}
\end{align}
	
%	where $\chi_{3},\chi_{4}$ and $\chi_{5}$ follow the definitions in  \eqref{eqv2-a}, \eqref{eqv2-b}, and \eqref{eqv2-c} with $(s,a) = (s^k_{h},a^k_{h})$.
	%Since we have bounded $|\chi_{3}|$ and $|\chi_{4}|$ with high probability in the proof of Proposition \ref{pro1}, i.e., conditioned on the successful events of Proposition \ref{pro1}, we have
	By Azuma's inequality, with probability $(1-2p)$ it holds that 
	   \begin{align}
      & |\chi_{6}|\leq H^2\sqrt{2n_h^k \iota} , \nonumber \\&
       |\chi_{7}|\leq 2H|\sum_{i=1}^{n_h^k}V^{\mathrm{ref},l_i}_{h+1}(s^{l_i}_{h+1})-\sum_{i=1}^{n_h^k}P_{s_h^k,a_h^k,h}V^{\mathrm{ref},l_i} | \leq 2H^2\sqrt{2n_h^k \iota} . \nonumber
	   \end{align}
	 It left us to handle $-\chi_{8}$. By Azuma's inequality and the fact that $V^{\mathrm{ref},k}\geq V^{\mathrm{REF}}$ for any $k$, with probability $(1-p)$ it holds that
	\begin{align}
	-\chi_{8} &=  \sum_{i=1}^{n_{h}^k}\left(P_{s_h^k,a_h^k,h}V^{\mathrm{ref},l_{i}}_{h+1}\right)^2 - \frac{1}{n_h^k}\left({\sum_{i=1}^{n_{h}^k} P_{s_h^k,a_h^k,h}V^{\mathrm{ref},l_{i}}_{h+1} }\right)^2  \nonumber\\
	&\leq  \sum_{i=1}^{n_{h}^k}\left(P_{s_h^k,a_h^k,h}V^{\mathrm{ref},l_{i}}_{h+1}\right)^2 - \frac{1}{n_h^k}\left({\sum_{i=1}^{n_{h}^k} P_{s_h^k,a_h^k,h}V^{\mathrm{REF}}_{h+1} }\right)^2  \nonumber\\
	&=  \sum_{i=1}^{n_{h}^k}\left(\left(P_{s_h^k,a_h^k,h}V^{\mathrm{ref},l_{i}}_{h+1}\right)^2 - (P_{s_h^k,a_h^k,h}V^{\mathrm{REF}}_{h+1})^2 \right) \nonumber\\
%	& = \sum_{i=1}^{n_{h}^k}\big( (P_{s_{h}^k,a_{h}^k,h}V^{\mathrm{ref},l_{i}}_{h+1})^2- (\frac{\sum_{i=1}^{n_{h}^k} P_{s_{h}^k,a_{h}^k,h}V^{\mathrm{ref},l_{i}}_{h+1} }{n_{h}^k})^2  \big) \nonumber
%	\\ & \leq  \sum_{i=1}^{n_{h}^k}(  (P_{s_{h}^k,a_{h}^k,h}V^{\mathrm{ref},l_{i}}_{h+1})^2-   (P_{s_{h}^k,a_{h}^k,h}V^{\mathrm{REF}}_{h+1})^2) \nonumber\\ 
	& \leq 2H^2 \sum_{i=1}^{n_{h}^k}P_{s_{h}^k,a_{h}^k,h}\lambda_{h+1}^{l_{i}} \nonumber
	\\ &  = 2H^2 \left( \sum_{i=1}^{n_{h}^k}\lambda_{h+1}^{l_{i}}(s_{h+1}^{l_{i}}   ) + \sum_{i=1}^{n_{h}^k}(P_{s_{h}^k,a_{h}^k,h   }-\textbf{1}_{s^{l_{i}}_{h+1}  } )\lambda_{h+1}^{l_{i}}      \right) \nonumber
	\\ & \leq 2H^2SN_{0} +3H^2 \sqrt{n_{h}^k\iota}.
	\end{align}
	Then we obtain that
	\begin{equation}\label{eq:bd_nu}
	\nu_{h}^{\mathrm{ref},k}-\frac{1}{n_{h}^k}\sum_{i=1}^{n_{h}^k}\mathbb{V}(P_{s_{h}^k,a_{h}^k,h},V_{h+1}^{\mathrm{ref},l_{i}})\leq   8H^2\sqrt{\frac{\iota}{n_{h}^k}}+\frac{2H^2SN_{0}}{n_{h}^k}.
	\end{equation}
	When \eqref{eq:bd_nu} holds, we have that with probability $(1-p)$,

	\begin{align}
	&\nu_{h}^{\mathrm{ref},k}- \mathbb{V}(P_{s^k_{h},a^k_{h},h}, V^*_{h+1})  \nonumber\\&  = \frac{1}{n_{h}^k}\sum_{i=1}^{n_{h}^k}\big(\mathbb{V}(P_{s^k_{h},a^k_{h},h},V_{h+1}^{\mathrm{ref},l_{i}})- \mathbb{V}(P_{s^k_{h},a^k_{h},h},V^*_{h+1})    \big)+ \big(\nu_{h}^{\mathrm{ref},k}-\frac{1}{n_{h}^k}\sum_{i=1}^{n_{h}^k}\mathbb{V}(P_{s_{h}^k,a_{h}^k,h},V_{h+1}^{\mathrm{ref},l_{i}})\big) \nonumber
	\\ & \leq   \frac{1}{n_{h}^k}\sum_{i=1}^{n_{h}^k} \big(\mathbb{V}(P_{s^k_{h},a^k_{h},h},V_{h+1}^{\mathrm{ref},l_{i}})- \mathbb{V}(P_{s^k_{h},a^k_{h},h},V^*_{h+1})    \big)         + 8H^{2}\sqrt{\frac{\iota}{n_{h}^k}} + \frac{2H^2SN_{0}}{n_{h}^k}  \nonumber
	\\ & \leq \frac{4H}{n_{h}^k}\sum_{i=1}^{n_{h}^k}P_{s^k_{h},a^k_{h},h}(V^{\mathrm{ref},l_{i}}_{h+1}-V^*_{h+1})   + 8H^{2}\sqrt{\frac{\iota}{n_{h}^k}}+ \frac{2H^2SN_{0}}{n_{h}^k}  \nonumber
	\\ & \leq  \frac{4H}{n_{h}^k}\sum_{i=1}^{n_{h}^k}(V_{h+1}^{\mathrm{ref},l_{i}}(s_{h+1}^{l_{i}})- V^{*}_{h+1}(s_{h+1}^{l_{i}}) )+\frac{4H}{n_{h}^k}\sum_{i=1}^{n_{h}^k}(P_{s^k_{h},a^k_{h},h}-\textbf{1}_{s^{l_i}_{h+1}})(V^{\mathrm{ref},l_{i}}_{h+1}-V^*_{h+1}) \nonumber 
	\\& \qquad + 8H^{2}\sqrt{\frac{\iota}{n_{h}^k}} + \frac{2H^2SN_0}{n_h^k}      \nonumber
	\\ & \leq  \frac{4H}{n_{h}^k}\sum_{i=1}^{n_{h}^k}(V_{h+1}^{\mathrm{ref},l_{i}}(s_{h+1}^{l_{i}})- V^{*}_{h+1}(s_{h+1}^{l_{i}}) ) +  14H^2\sqrt{\frac{\iota}{n_{h}^k}}+\frac{2H^2SN_{0}}{n_{h}^k}  \label{eqstar5}
	\\ &  \leq \frac{4H}{n_{h}^k}\sum_{i=1}^{n_{h}^k}(H\lambda^{l_{i}}_{h+1}(s^{l_{i}}_{h+1} ) + \beta   )+  14H^2\sqrt{\frac{\iota}{n_{h}^k}}+\frac{2H^2SN_{0}}{n_{h}^k}    \label{eqstar6}
	\\ &\leq  4H\beta + \frac{6H^2SN_{0}}{n_{h}^k}+14H^2\sqrt{\frac{\iota}{n_{h}^k}}, \nonumber
	\end{align}
	where Inequality  \eqref{eqstar5} holds with probability $(1-p)$ by Azuma's inequality and \eqref{eqstar6} holds by Corollary~\ref{coro1} (and note that the whole proof is conditioned on the successful events of Proposition~\ref{pro1} and Lemma~\ref{lemma1}).
\end{proof}

We will also prove the following bound of the total variance.

\begin{lemma} \label{lemma:b-term-variance} With probability $(1-2p)$, it holds that 
	\begin{equation}
	\sum_{s,a,h}N_{h}^{K+1}(s,a)\mathbb{V}(P_{s,a,h},V^*_{h+1})\leq  2TH+ 3\sqrt{2H^4T\iota}.
	\end{equation}
\end{lemma}

\begin{proof} By direct calculation, with probability $(1-2p)$, it holds that 
	\begin{align}
	&\sum_{s,a,h}N_{h}^{K+1}(s,a)\mathbb{V}(P_{s,a,h},V^*_{h+1})   \nonumber\\&=\sum_{k=1}^K \sum_{h=1}^{H}\mathbb{V}(P_{s^k_h,a^k_h,h},V^*_{h+1}) \nonumber
	\\ &= \sum_{k=1}^K \sum_{h=1}^K  \nonumber \big(P_{s^k_h,a^k_h,h}(V^{*}_{h+1})^2-   ( P_{s^k_h,a^k_h,h} V^*_{h+1})^2 \big)   \nonumber
	\\ & \leq \sum_{k=1}^K \sum_{h=1}^H \big(P_{s^k_h,a^k_h,h}(V^{*}_{h+1})^2 -(V^{*}_{h}(s^k_h))^2\big)+ 2H\sum_{k=1}^K \sum_{h=1}^H  | V^*_{h}(s^k_h)- P_{s^k_h,a^k_h,h} V^*_{h+1}|  \nonumber
	\\ & \leq \sqrt{2TH^4\iota} +2H\sum_{k=1}^K \sum_{h
		=1}^H  | V^*_{h}(s^k_h)- P_{s^k_h,a^k_h,h} V^*_{h+1}|  \label{eqstar7-0}
	\\& =  \sqrt{2TH^4\iota} + 2H\sum_{k=1}^K \left( V_{1}^*(s_1^k)+\sum_{h=1}^H  ( V^*_{h}(s^k_{h+1})- P_{s^k_h,a^k_h,h} V^*_{h+1}))\right)  \label{eq:eqeq}
	\\ & \leq   \sqrt{2TH^4\iota} +2TH+  2H^2\sqrt{2T\iota}   \label{eqstar7}
	\\ & \leq 2TH+ 3H^2\sqrt{2T\iota}, \nonumber
	\end{align}
	where Inequality \eqref{eqstar7-0} holds with probability $(1-p)$ by Azuma's inequality, Equation \eqref{eq:eqeq}
holds with the fact that $ V^*_{h}(s)- P_{s,a,h} V^*_{h+1}\geq V^*_{h}(s)-Q^*_{h}(s,a)\geq 0$ for any $s,a,h$ and 	Inequality \eqref{eqstar7} holds with probability $(1-p)$ by Azuma's inequality.
\end{proof}

Combining Lemma~\ref{pro0}, Lemma~\ref{lemma5}, and Lemma~\ref{lemma:b-term-variance},  we have that with probability $(1-7p)$,
\begin{align}
&\sum_{h=1}^{H}\sum_{k=1}^{K}\sqrt{\frac{\nu_{h}^{\mathrm{ref},k}}{n_{h}^k}\iota}  \leq \sum_{h=1}^{H}\sum_{k=1}^{K}\sqrt{\frac{\mathbb{V}(P_{s^k_{h},a^k_{h},h },V^*_{h+1})  }{n_{h}^k}\iota} + \sum_{h=1}^{H}\sum_{k=1}^{K}\sqrt{ \Big(\frac{4H\beta}{n_{h}^k} +\frac{6H^2SN_{0}}{(n^k_{h})^2}+14H^2\frac{\sqrt{\iota}}{(n_{h}^k)^{\frac{3}{2}}}   \Big)\iota} \nonumber
\\ & \leq O\Big(\sum_{s,a,h}\sqrt{N^{K+1}_{h}(s,a) \mathbb{V}(P_{s,a,h},V^*_{h+1}) \iota}\nonumber \\
& \qquad +\sum_{s,a,h}\sqrt{N^{K+1}_{h}(s,a)H\beta \iota }+SAH^2\sqrt{SN_{0} \iota}\log(T)+(SA\iota)^{\frac{3}{4}}H^{\frac{7}{4}}T^{\frac{1}{4}}  \Big) \nonumber 
\\ & \leq  O\Big(\sqrt{SAH^2T\iota}+\sqrt{SAH^2\beta T\iota}+ SAH^2\sqrt{SN_{0}\iota}\log(T)+(SA\iota)^{\frac{3}{4}}H^{\frac{7}{4}}T^{\frac{1}{4}}  \Big). \label{eqb0}
\end{align}

We now  bound $\check{\nu}_{h}^k$. By Corollary \ref{coro1} (and  that the whole proof is conditioned on the successful events of Proposition~\ref{pro1} and Lemma~\ref{lemma1}),  we have that
\begin{align}
&\check{\nu}_{h}^{k}\leq  \frac{1}{\check{n}^k_{h}}\sum_{i=1}^{\check{n}^k_{h}}\big(V^{\mathrm{ref},\check{l}_{i}}_{h+1}(s_{h+1}^{\check{l}_{i}}) -V^{*}_{h+1}( s_{h+1}^{\check{l}_{i}} ) \big)^2 \nonumber
\\ & \leq  \frac{1}{\check{n}^k_{h}}\sum_{i=1}^{\check{n}^k_{h}}(H^2\lambda^{\check{l}_{i}}_{h+1}(s^{\check{l}_{i}}_{h+1}) +\beta^2) \nonumber
\\ & \leq  \frac{1}{\check{n}^k_{h}}H^2SN_{0}+\beta^2. \label{eqb3}
\end{align}
By Lemma \ref{pro0}, we obtain that  
\begin{equation}\label{eqb4}
\begin{aligned}
\sum_{h=1}^{H}\sum_{k=1}^{K}\sqrt{\frac{\check{\nu}^k_{h}}{\check{n}_{h}^{k}}\iota} & \leq  \sum_{h=1}^{H}\sum_{k=1}^{K}\Big(\sqrt{\frac{\beta^2}{\check{n}_{h}^{k}} \iota}+ \frac{\sqrt{H^2SN_{0}\iota}}{\check{n}_{h}^{k}}\Big)\leq O\Big(\sqrt{SAH^3\beta^2T\iota} +SAH^3\sqrt{SN_{0}\iota}\log(T) \Big).
\end{aligned}
\end{equation}
The proof is completed by combining  \eqref{eqbb}, \eqref{eqb0}, and \eqref{eqb4}.

\end{proof}
\subsubsection{Putting Everything Together}
Recall that $\beta= \frac{1}{\sqrt{H}}$, and $N_0 = \frac{c_4 SAH^5\iota}{\beta^2} = O(SAH^6 \iota)$. Combining \eqref{eq:deflam}, Lemma~\ref{lemma:bd_psi}, Lemma~\ref{lemma:bd_xi}, Lemma~\ref{lemma:bd_phi} and Lemma~\ref{lemma:bd_b}, we conclude that with probability at least $(1-  O(H^2T^4p))$,
\begin{align}
& \sum_{k=1}^K\sum_{h=1}^K \Lambda_{h+1}^k \nonumber \\
&\leq O(\log(T)) \cdot (H^2SN_0+ H \sqrt{T \iota }) + O(H^2 \sqrt{SAT \iota})+ O(\sqrt{H^2T \iota}) \nonumber \\ 
& \qquad + O\Big(\sqrt{SAH^2T\iota} +\sqrt{SAH^2\beta T\iota}+SAH^3\sqrt{SN_0 \iota}\log(T) \nonumber
        \\& \qquad \qquad         +\sqrt{SAH^3\beta^2  T\iota}+ (SA\iota)^{\frac{3}{4}}H^{\frac{5}{2}}T^{\frac{1}{4}}   \Big). \nonumber\\
& =  O\Big(\sqrt{SAH^2T\iota} + H\sqrt{T\iota} \log (T) + \sqrt{SAH^2\beta T\iota}+SAH^3\sqrt{SN_0 \iota}\log(T) \nonumber
        \\& \qquad \qquad         +\sqrt{SAH^3\beta^2  T\iota} +  (SA\iota)^{\frac{3}{4}}H^{\frac{5}{2}}T^{\frac{1}{4}}  + H^2SN_0 \log(T)  \Big) \nonumber \\
& =  O\Big(\sqrt{SAH^2T\iota} + H\sqrt{T\iota} \log (T) +SAH^3\sqrt{SN_0 \iota}\log(T) \nonumber
              +  (SA\iota)^{\frac{3}{4}}H^{\frac{5}{2}}T^{\frac{1}{4}}  + H^2SN_0 \log(T)  \Big) \nonumber \\
        &= O\Big(\sqrt{SAH^2T\iota} + H\sqrt{T\iota} \log(T) + S^2 A^{\frac32}H^6 \iota \log(T) + (SA\iota)^{\frac34} H^{\frac52}T^{\frac14} +S^2AH^8 \iota \log(T)   \Big) \nonumber \\
        &= O\Big(\sqrt{SAH^2T\iota} + H\sqrt{T\iota} \log(T)  + S^2 A^{\frac32} H^8 \iota T^{\frac14} \Big).
\end{align}

%%%%%%%

\section{Other Results}\label{App.C}

\subsection{Local Switching Cost Analysis }
 The notion of local switching cost for RL is introduced in \citep{bai2019provably} to quantify the adaptivity of the learning algorithms. With a slight abuse of notations, we use $\pi_{k,h}$ to denote the policy at the $h$-th step of the $k$-th episode. 
 We first recall formal definition of the local switching cost.
\begin{definition} The local  switching cost at $(s,h)$ is defined as 
\begin{equation*}
n_{\mathrm{switch}}(s,h): = \sum_{k=1}^{K-1} \mathbb{I}\left[\pi_{k,h}(s) \neq  \pi_{k+1,h}(s)  \right].
\end{equation*}
The total local switching cost is then defined as 
\begin{equation*}
N_{\mathrm{switch}} := \sum_{s\in \mathcal{S}}\sum_{h=1}^H n_{\mathrm{switch}}(s,h).
\end{equation*}
\end{definition}

Now we prove Theorem \ref{thm:sw-cost}.
\begin{proof}[Proof of Theorem \ref{thm:sw-cost}]
By the definition of $e_{i}$, it is easy to verify that  $e_{i+1}\geq (1+\frac{1}{2H})e_{i}$  for any $i\geq 1$. Then the number of stages of $(s,a,h)$ is at  most 
\[
\frac{\log(\frac{N^{K+1}_h(s,a)}{2H} +1 )}{\log(1+\frac{1}{2H})}\leq 4H \log(\frac{N^{K+1}_h(s,a)}{2H} +1 ) .
\]
Because $\pi_{k,h}(s) = \arg\max_{a}Q^k_h(s,a)$, we have that
$$\mathbb{I}\left[ \pi_{k,h}(s)\neq \pi_{k+1,h}(s)  \right]\Longrightarrow \mathbb{I}\left[ \exists a, Q^{k+1}_h(s,a)\neq Q^{k}_h(s,a)  \right].$$
Now, by definition, we have that
\begin{align}
&n_{\mathrm{switch}}(s,h) = \sum_{k=1}^{K-1}\mathbb{I}\left[ \pi_{k,h}(s)\neq \pi_{k+1,h}(s)  \right] \nonumber 
\\& \leq \sum_{k=1}^{K-1}\mathbb{I}\left[ \exists a, Q^{k+1}_h(s,a)\neq Q^{k}_h(s,a)  \right] \nonumber
\\ & \leq \sum_{a}4H \log(\frac{N^{K+1}_h(s,a)}{2H} +1 )\nonumber .
\end{align}
Finally, by the concavity of $\log(x)$ in $x$, 
  the total local switching cost of \UCBADV is bounded by 
\begin{align}
&N_{\mathrm{switch}} = \sum_{s\in \mathcal{S}}\sum_{h=1}^H n_{\mathrm{switch}}(s,h)  \nonumber
\\ &  \leq \sum_{s,a,h}4H \log(\frac{N^{K+1}_h(s,a)}{2H} +1 )  \nonumber
\\ & \leq 4H^2SA\log(\frac{T}{2SAH^2}+1) \nonumber \\ & = O(H^2SA\log(\frac{K}{SAH})).\nonumber 
\end{align}
\end{proof}

\subsection{Application to Concurrent RL}
In concurrent RL,  multiple agents act in parallel and shares the experience in a limited way to accelerate the learning process. In this subsection, we follow the setting in \citep{bai2019provably} to introduce the problem. 

Suppose there are $M$ parallel agents, where each agent interacts with the environments independently. In the concurrent RL problem, each agent finishes an episode simultaneously, so that there are $M$ episodes done per concurrent round. The agents can only exchange experience and update their policies at the end of each round. The goal is to find an $\epsilon$-optimal policy using the minimum number of rounds, which we also refer to as the number of concurrent episodes. 

In Algorithm \ref{alg2}, we present the details of the concurrent \UCBADV algorithm. The idea is to simulate the single-agent \UCBADV by treating the $M$ episodes finished in a single round as $M$ consecutive episodes (without policy change) in the single-agent setting. We collect the trajectories and feed them to the single-agent \UCBADV. When an update is triggered in the single-agent \UCBADV during an episode, we update the $Q$-function (as well as the value function) and discard the  trajectories left in the round.

\begin{algorithm}[tb]
	\caption{Concurrent \UCBADV}
	\begin{algorithmic}\label{alg2}
		\STATE{\textbf{Initialize:} $Q_{h}(s,a) \leftarrow H-h+1$, $k\leftarrow 1$ , $K_{\epsilon} \leftarrow \frac{c_5 SAH^3\log(\frac{SAH}{\epsilon})}{\epsilon^2M}$ ($c_5$ is a large enough universal constant).}
		\FOR{concurrent episodes $k=1,2,3,\dots$}
		\STATE{ All agents follow the same policy $\pi_{k}$ where $\pi_{k,h}(s) = \arg\max_{a}Q_{h}(s,a)$.}
		\FOR{$i=1,2,3,\dots,M$}
		\STATE{Collect the trajectory of the $i$-th agent and feed it to \UCBADV}
		\IF{an update is triggered}
		\STATE{Update  $Q$-value function following \UCBADV;}
		\STATE{\textbf{break}}
		\ENDIF
		\ENDFOR
		\IF{The number of trajectories use is greater than or equal to $K_{\epsilon}$}
		\STATE{\textbf{break}}
		\ENDIF
		\ENDFOR
	\end{algorithmic}
\end{algorithm}

We now prove Corollary \ref{cor:concurrent-RL} that shows the performance of the concurrent \UCBADV.

\begin{proof}[Proof of Corollary \ref{cor:concurrent-RL}]
The proof follows the similar lines in the proof of Theorem 5 in \citep{bai2019provably}. By Theorem \ref{thm:sw-cost}, the switching cost is at most $O(H^2SA\log(\frac{K_{\epsilon}}{SAH}))$, so there are at most
\begin{equation*}
O(H^2SA\log(\frac{K_{\epsilon}}{SAH}) +\frac{K_{\epsilon}}{M} ) = \tilde{O}(H^2SA+\frac{H^3SA}{\epsilon^2M})
\end{equation*}
concurrent episodes. On the other hand, the regret incurred in the episodes corresponding to $K_{\epsilon}$ is at most $\tilde{O}(\sqrt{SAH^3K_{\epsilon}})\leq K_{\epsilon}\epsilon$,  so by randomly choosing an episode index $k$ and selecting $\pi=  \pi_{k}$ we achieve a policy with expected performance at most $\epsilon$ below the optimum.
\end{proof}

\subsection{Lower Bound of the Sample Complexity}
\begin{thm}\label{thmlb1}
For any $H$, $S$, and $A$ greater than a universal constant, and all $\epsilon\in (0,\frac{8}{H}]$, for any algorithm with input parameter $\epsilon$, there exists an episodic MDP with $S$ states, $A$ actions, horizon $H$ such that, with probability at least $1/2$, among the execution history of the algorithm, there are at least 
$\Omega({SAH^3}/{\epsilon^2})$
episodes in which the corresponding policy $\pi_k$ satisfies that $V^*_{1}(s^k_1)-V^{\pi_{k}}_{1}(s^k_1)>\epsilon$.
\end{thm}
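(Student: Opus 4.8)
The plan is to prove Theorem~\ref{thmlb1} by an information-theoretic (change-of-measure) argument against a hard family of MDPs, reducing the episodic learning problem to a collection of independent multi-armed bandit instances. First I would construct a family of MDPs that embeds $\Theta(SH)$ independent ``decision cells'', one bandit-like sub-problem over $A$ arms for (essentially) each pair $(s,h)$. In each cell exactly one action is $\delta$-better than the others, in the sense that it raises by $\delta$ the probability of entering a rewarding region whose reward is then collected over the $\Theta(H)$ subsequent steps; this makes the \emph{value} gap of a wrong action equal to $\Theta(\delta H)$. Choosing $\delta=\Theta(\epsilon/H)$ calibrates the value gap of a single wrong decision to exactly $\Theta(\epsilon)$, so that any episode in which the agent acts suboptimally at its active cell satisfies $V^*_1(s_1^k)-V^{\pi_k}_1(s_1^k)>\epsilon$. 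The restriction $\epsilon\le 8/H$ is the range in which the construction's perturbation parameters stay valid (transition probabilities bounded inside $(0,1)$ and the calibrated value gaps realizable within the horizon).

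The dynamics should be designed with \emph{time-inhomogeneous} transitions so that: (i) the cell that is ``active'' in a given episode is selected by nature's randomness, independently of the agent's actions, and is roughly uniform over the $\Theta(SH)$ cells; (ii) each episode yields information about only $O(1)$ cells, i.e. one consequential decision per episode, after which the trajectory is essentially locked into collecting reward; and (iii) learning the good arm of one cell is informationally useless for any other cell. Time-inhomogeneity is the key device that turns each $(s,h)$ into a genuinely separate problem and is responsible for the extra factor of $H$ over the homogeneous rate: with a single informative sample per episode, one cell requires $\Theta(A/\delta^2)=\Theta(AH^2/\epsilon^2)$ activations before its good arm can be identified.

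The core of the argument is then a per-cell bandit lower bound via change of measure. For a fixed cell I would compare the ``null'' instance (all arms equal) with the $A$ instances in which one designated arm carries the gap $\delta$, and apply a divergence-decomposition / Le Cam-type inequality: if the expected number of pulls at that cell is $o(A/\delta^2)$, no algorithm can output the good arm with constant advantage over guessing, so on a constant fraction of its visits the agent plays a suboptimal arm and incurs an $\epsilon$-suboptimal episode. Placing a uniform prior over the good-arm assignment in each cell (independently across cells) and using linearity of expectation over the $\Theta(SH)$ cells shows that, against any algorithm, the expected number of $\epsilon$-bad episodes among the first $E=\Theta(SAH^3/\epsilon^2)$ episodes is at least a constant fraction of $E$; an averaging step then fixes a single MDP in the family achieving this bound. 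Finally, since the count of bad episodes among the first $E$ is bounded by $E$ and has expectation exceeding $E/2$ (obtained by taking $E$ a sufficiently small constant multiple of the full sample complexity, so the algorithm is far from solving the cells), a reverse-Markov (Paley--Zygmund) inequality upgrades the conclusion to: with probability at least $1/2$, at least $\Omega(SAH^3/\epsilon^2)$ episodes are $\epsilon$-suboptimal.

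The main obstacle I anticipate is the construction itself: simultaneously realizing all three structural requirements --- a $\Theta(H)$-long reward horizon for each decision (to inflate the value gap and thereby force $\delta=\Theta(\epsilon/H)$), $\Theta(SH)$ informationally independent cells, and action-independent routing delivering only $O(1)$ informative samples per episode --- inside an MDP that has only $S$ states and honest rewards in $[0,1]$. Making these coexist is exactly what makes $S$, $A$, and all three powers of $H$ appear together, and it is where the bookkeeping (the validity window $\epsilon\le 8/H$, the near-uniformity of cell activation, and the clean implication ``one wrong decision $\Rightarrow$ $\epsilon$-bad episode'') must be handled carefully. The information-theoretic step and the expectation-to-high-probability conversion, by contrast, are standard once the hard family is in place.
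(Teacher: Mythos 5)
Your overall skeleton---a time-inhomogeneous layered construction embedding per-$(s,h)$ bandit subproblems whose transition gaps are amplified by the horizon, a per-cell KL/change-of-measure lower bound, and a Markov-type conversion from expectation to probability $1/2$---is the same genre as the paper's sketch, which chains $H$ ``JAO'' two-state MDPs with mixing parameter $\delta = 16/H$ and rescales $\epsilon \mapsto \epsilon/H^2$. But one concrete step in your proposal fails as stated: the claim that ``any episode in which the agent acts suboptimally at its active cell satisfies $V^*_1(s_1^k)-V^{\pi_k}_1(s_1^k)>\epsilon$.'' The suboptimality in the theorem is the gap of the \emph{policy's value}, which averages over nature's routing. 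If the active cell is chosen (roughly) uniformly among your $\Theta(SH)$ cells, then a policy that errs at a single cell has value gap only $\Theta(\delta H)\cdot \Theta(1/(SH)) = \Theta(\epsilon/(SH))$, not $\Theta(\epsilon)$; under your calibration $\delta = \Theta(\epsilon/H)$, an episode is $\epsilon$-bad only when the policy errs at $\Omega(SH)$ cells \emph{simultaneously}. Moreover, the dilution cannot be absorbed by recalibrating $\delta$: offsetting the $1/(SH)$ routing probability would force $\delta = \Theta(\epsilon S)$, which collapses the per-cell sample complexity $\Theta(A/\delta^2)$ and loses the $H^3$ (and $S$) factors, so the one-wrong-decision implication is unrecoverable within your design.

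The repair is the aggregate version of your own argument, and it is exactly how the paper argues: for the first $E = \Theta(SAH^3/\epsilon^2)$ episodes, the total number of cell activations is too small to resolve more than a small fraction of the $\Theta(SH)$ cells, so under the uniform prior a constant (indeed near-$1$, once $A$ exceeds a universal constant, since an unresolved cell is erred with probability about $1-1/A$) fraction of cells are simultaneously erred, and the routing average then gives $V^*_1-V^{\pi_k}_1 = \Omega(SH)\cdot\Theta(\epsilon/(SH)) > \epsilon$ for every such episode. This mirrors the paper's step that at least $7H/8$ unresolved layers force per-episode regret $\Omega(\epsilon' H^2)$ before its rescaling. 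With this fix your divergence-decomposition and averaging steps go through, and the final reverse-Markov step works provided you tune constants so the expected bad-episode fraction exceeds $2/3$ (a bound of merely $\Omega(1)$ does not by itself yield probability $1/2$). The corrected argument is then a legitimate alternative to the paper's: you pay one informative sample per episode spread across $\Theta(SH)$ cells with unit-variance observations, whereas the paper collects one sample per layer per episode but makes each observation a low-information Bernoulli via the small mixing parameter $\delta = \Theta(1/H)$; both accountings yield the same $SAH^3/\epsilon^2$.
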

\begin{proof}[Proof Sketch.]
	Instead of presenting a concrete proof of Theorem \ref{thmlb1}, we provide the high-level intuition in the construction and analysis.
	
Like the regret lower bound analysis in \citep{jin2018q}, we consider the special case where $S=A=2$. It does not require too much difficulty to generalize to arbitrary $S$ and $A$. Also, we will use almost the same hard instance as constructed in the proof of  Theorem 3 in \citep{jin2018q}. 

We recall the structure of ``JAO MDP'' in \citep{jaksch2010near}. There are two states in the MDP, named $s_{0}$ and $s_{1}$. The rewards are defined as  $r(s_{0},a) = 0$ and $r(s_{1},a) = 1$ for any $a$ and the transition probabilities are defined as $P(\cdot|s_{1},a) = [\delta, 1-\delta]^{\top}, \forall a$, $P(\cdot|s_{0},a) = [1-\delta,\delta]^{\top}, \forall a\neq a^*$ and $P(\cdot|s_{0},a^*) = [1-\delta-\epsilon,\delta+\epsilon]^{\top}$. Clearly the optimal action for state $s_{0}$ is $a^*$.  Let $\delta<\frac{1}{2}$ be fixed. By the lower bound of \citep{jaksch2010near}, there exists a constant $c_5>0$, such that  for any $\epsilon\in (0,\frac{\delta}{2})$, it costs  at least $c_5 \cdot \frac{\delta}{\epsilon^2}$ observations to identify $a^*$ with non-trivial probability.

By connecting $H$ JAO MDPs with different optimal actions layer by layer, we get an episodic MDP with horizon $H$. We choose $\delta = \frac{16}{H}$ to ensure that the MDP is well-mixed for $h\geq \frac{H}{2}$. 
For any $\epsilon\leq \frac{8}{H} = \frac{\delta}{2}$ and $h\geq \frac{H}{2}$, the agent reaches $s_{0}$  in the $h$-th layer with at least constant probability. 
If  there are at least $\frac{7H}{8}$ layers in which the agent can not identify $a^*$, then the agent makes $\Omega(H)$ mistakes in the range $h\in [\frac{H}{2},\frac{3H}{4}]$. Because each mistake for $h\in [\frac{H}{2},\frac{3H}{4}]$  leads to $\Omega(\epsilon H)$ regret , the expected regret incurred during one episode is $\Omega(\epsilon H^2)$.  As a result, if the total number of observations is less than $\frac{c_5H}{8}\cdot \frac{\delta}{\epsilon^2}$ (i.e., number of episodes less than $\frac{c_5}{8} \cdot \frac{\delta}{\epsilon^2}$), the expected regret per episode is $\Omega(\epsilon H^2)$. Replacing $\epsilon$ by $\epsilon H^2$, we have that for the first $\Theta(\delta H^4 / \epsilon^2) = \Theta(H^3 / \epsilon^2) $ episodes, the expected regret per episode is $\Omega(\epsilon)$. The proof is then completed by applying Markov's inequality.

\end{proof}

\end{document}